\title{A Theoretical Framework for Grokking: Interpolation followed by Riemannian Norm Minimisation}
\author{%
  Etienne Boursier \\
  INRIA\\
  LMO, Universit\'e Paris-Saclay \\
  Orsay, France \\
  \texttt{etienne.boursier@inria.fr} \\
  \And
  Scott Pesme \\
  INRIA\\
  Grenoble, France \\
  \texttt{scott.pesme@inria.fr} \\
  \And
  Radu-Alexandru Dragomir \\
  Télécom Paris \\
  Institut Polytechnique de Paris\\
  Palaiseau, France \\
  \texttt{dragomir@telecom-paris.fr}
}
\date{}
\begin{document}
\setcounter{tocdepth}{3}
%\tableofcontents

\doparttoc % Tell to minitoc to generate a toc for the parts
\faketableofcontents % Run a fake tableofcontents command for the partocs

\maketitle

\begin{abstract}
We study the dynamics of gradient flow with small weight decay on general training losses $F: \mathbb{R}^d \to \mathbb{R}$. Under mild regularity assumptions and assuming convergence of the unregularised gradient flow, we show that the trajectory with weight decay~$\lambda$ exhibits a two-phase behaviour as $\lambda \to 0$. During the initial fast phase, the trajectory follows the unregularised gradient flow and converges to a manifold of critical points of $F$. Then, at time of order $1/\lambda$, the trajectory enters a slow drift phase and follows a Riemannian gradient flow minimising the $\ell_2$-norm of the parameters. This purely optimisation-based phenomenon offers a natural explanation for the \textit{grokking} effect observed in deep learning, where the training loss rapidly reaches zero while the test loss plateaus for an extended period before suddenly improving. We argue that this generalisation jump can be attributed to the slow norm reduction induced by weight decay, as explained by our analysis. We validate this mechanism empirically on several synthetic regression tasks.
\end{abstract}

\section{Introduction}
Strikingly simple algorithms such as gradient methods are a driving force behind the success of deep learning. Nonetheless, their remarkable performance remains mysterious, and a full theoretical understanding is lacking. In particular: (i) convergence to low training loss solutions on non-convex objectives is far from trivial, and (ii) it is unclear why the resulting solutions generalise well~\citep{recht2017understanding}. 
These questions are accompanied by a range of surprising phenomena that arise during training. One such intriguing behaviour is known as the \textit{grokking phenomenon}, which we explore in this work. Coined by~\citet{power2022grokking}, this term describes a two-phase pattern in the learning curves: first, the training loss rapidly decreases to zero, while the test loss plateaus at a certain value. This is followed by a second phase, where the training loss remains zero, but the test loss steadily decreases, leading to a final improved generalisation performance, as depicted in~\Cref{fig:intro} (left).

In this paper, we propose a novel theoretical perspective to explain this phenomenon. By examining the gradient flow dynamics \textbf{with weight decay}, we show that, in the limit of vanishing weight decay, we can fully describe the trajectory of the model parameters. Specifically, we prove that the training process can be decomposed into two distinct phases. In the first phase, the gradient flow follows the unregularised path, converging to a manifold of critical points of the training loss. In the second, the trajectory enters a slow drift phase, where the weights move along this manifold, driven by weight decay, gradually reducing their $\ell_2$ norm, as illustrated in~\Cref{fig:intro} (right). We argue that this slow decrease in the weight norms explains the grokking phenomenon, as smaller weight norms are often correlated with better generalisation. 
To convey the main intuition, we state below an informal version of our result, describing the full trajectory of the gradient flow with small weight decay.

\paragraph{Informal statement of the result.}
For a generic training loss $F : \R^d \to \R$ satisfying some regularity assumptions, we consider the gradient flow $w^\lambda$ regularised with weight decay: $\dot{w}^\lambda(t) = -\nabla F(w^\lambda(t)) - \lambda w^\lambda(t)$. Under the assumption that the unregularised gradient flow trajectory is bounded, we prove the following.
\begin{theorem}[Main result, informal]\label{thm:informal}
As the weight decay parameter $ \lambda$ is taken to $0$, the trajectory $w^\lambda(t)$ can be seen as a composition of two coupled dynamics:
\begin{enumerate}[topsep=0pt,itemsep=0pt,partopsep=-5pt]
    \item \textbf{(Fast dynamics driven by F given by \Cref{prop:phase1conv})} In a first phase, the weights follow the unregularised gradient flow and converge to a manifold of critical points of $F$.
    \item \textbf{(Slow dynamics driven by the weight decay given by \Cref{prop:grokking})} At time $t \approx 1 / \lambda$, the iterates start slowly drifting along this manifold, following a Riemannian gradient flow that decreases the $\ell_2$-norm of the weights.
\end{enumerate}
\end{theorem}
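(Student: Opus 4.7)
The plan is to treat the regularised flow as a slow-fast system and apply singular-perturbation arguments on two distinct time scales. Let $\mathcal{M}$ denote the manifold of critical points of $F$ to which the unregularised flow converges; the regularity assumptions and convergence hypothesis should guarantee that $\mathcal{M}$ is a smooth submanifold and is normally attractive in a neighborhood of the unregularised limit point.

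\textbf{Phase 1 (fast dynamics, $t = O(1)$).} I would first show that on any compact interval $[0,T]$, the difference $\|w^\lambda(t) - w^0(t)\|$ is $O(\lambda)$ by a direct Grönwall estimate, using that $\nabla F$ is locally Lipschitz on a neighborhood of the bounded unregularised trajectory. Since $w^0(t) \to w^\star \in \mathcal{M}$, fixing $T$ large enough places $w^0(T)$ in an arbitrarily small tubular neighborhood of $\mathcal{M}$, and hence so is $w^\lambda(T)$ for $\lambda$ small. This yields \Cref{prop:phase1conv}.

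\textbf{Phase 2 (slow dynamics, $t \sim 1/\lambda$).} I would then rescale time by $\tau = \lambda t$ and let $u^\lambda(\tau) = w^\lambda(\tau/\lambda)$, which satisfies $\dot u^\lambda = -\frac{1}{\lambda}\nabla F(u^\lambda) - u^\lambda$. This is a classical fast-slow system whose fast subsystem $\dot u = -\nabla F(u)$ has equilibrium set $\mathcal{M}$. Locally I would decompose $u^\lambda = \pi(u^\lambda) + n^\lambda$, where $\pi$ is the nearest-point projection onto $\mathcal{M}$ (well-defined in a tubular neighborhood) and $n^\lambda$ the normal component. A chain-rule computation then gives $\frac{d}{d\tau}\pi(u^\lambda) = -P_{T_{\pi(u^\lambda)}\mathcal{M}}\,\pi(u^\lambda) + O(\|n^\lambda\|)$, while $n^\lambda$ satisfies a fast contracting ODE driven by the normal Hessian of $F$, forcing $\|n^\lambda\| = O(\lambda)$ uniformly on bounded $\tau$-intervals. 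Passing to the limit $\lambda \to 0$, the tangential flow is precisely the Riemannian gradient flow of $\frac{1}{2}\|w\|^2$ on $\mathcal{M}$, which yields \Cref{prop:grokking}.

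\textbf{Main obstacle.} The delicate part is maintaining control over the extremely long horizon $t \sim 1/\lambda$: in the original time scale, even an $O(\lambda)$ velocity error can accumulate into an $O(1)$ perturbation. Circumventing this requires showing that deviations normal to $\mathcal{M}$ contract geometrically, i.e.\ that the Hessian of $F$ is uniformly positive in normal directions, or equivalently a Łojasiewicz-type inequality near $\mathcal{M}$. Establishing such normal hyperbolicity under the paper's ``mild regularity assumptions'' (together with the convergence of the unregularised flow) is where I expect the bulk of the technical effort, as it is what makes the singular-perturbation reduction rigorous over the whole $1/\lambda$ timescale rather than merely on $o(1/\lambda)$ intervals.
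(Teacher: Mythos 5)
Your two-phase plan matches the paper's overall architecture: a Gr\"onwall estimate for the fast phase (this is exactly the proof of \Cref{prop:phase1conv}, including the observation that running it up to a large fixed $T$ and diagonalising over $\lambda$ places $w^\lambda$ near $\cM$ — the content of \Cref{lemma:junction}, where the junction time is taken of order $-\lambda\ln\lambda$), followed by a time rescaling $\tau=\lambda t$ and a singular-perturbation reduction to the Riemannian flow of $\tfrac12\|w\|^2$ on $\cM$. Where you genuinely diverge is in the mechanism for the slow phase. You use the classical tangential/normal splitting via the nearest-point projection $\pi$ and argue that the normal component contracts to $O(\lambda)$. This can be made to work, but it is the more delicate route: $D\pi_w\cdot\nabla F(w)$ does not vanish exactly off $\cM$, so the $\tfrac1\lambda\nabla F$ term leaves a residue of order $\tfrac1\lambda\|n^\lambda\|^2$ in the tangential equation, and you therefore need the quantitative bound $\|n^\lambda\|=O(\lambda)$ uniformly over the whole $\tau$-interval (an initial-layer analysis on top of the normal contraction). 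The paper instead works with the flow-limit map $\Phi(w)=\lim_{t\to\infty}\phi(w,t)$: since $\Phi$ is constant along unregularised flow lines, $D\Phi_w\cdot\nabla F(w)=0$ \emph{exactly}, so the stiff term disappears identically from $\tfrac{d}{d\tau}\Phi(u^\lambda)$ and only the weak confinement $\sup_\tau d(u^\lambda(\tau),\cM)\to 0$ (\Cref{lemma:confined}, obtained from monotonicity of $F_\lambda$ and minimality of $F$ on $\cM$) is needed before a second Gr\"onwall closes the argument (\Cref{lemma:det_katzen}). The price of the paper's route is proving that $\Phi$ is $\cC^2$ near $\cM$ (\Cref{lemma:falconer}, via Falconer's geometric-stability result) and the identity $D\Phi_w=P_{\mathrm{Ker}\nabla^2F(w)}$ on $\cM$. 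One small correction to your closing remark: the normal hyperbolicity you identify as the crux is not derived from the regularity assumptions — it is assumed outright as the Morse--Bott condition in \Cref{ass:manifold} (uniform lower bound $\eta$ on nonzero Hessian eigenvalues on $\cM$), so in the paper the bulk of the technical effort sits in the regularity of $\Phi$ and the confinement/junction lemmas rather than in establishing attractivity of $\cM$.
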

\paragraph{Link with the grokking phenomenon.}\looseness=-1
Note this is a purely optimisation result: no statistical assumptions are made, and it \textit{a priori} does not imply any improvement in test loss during the slow phase. However, it provides a natural explanation for the grokking phenomenon. Indeed, in practice, for many deep learning models with random initialisation, gradient flow converges to a global minimiser of the training loss. When this solution generalises poorly—as is often the case with large initial weights, in the so-called \textit{lazy} regime \citep{chizat2019lazy}—the subsequent slow drift along the critical manifold, driven by weight decay, decreases the $\ell_2$-norm of the solution and simplifies it in the second phase. Since lower weight norms often correlate with better generalisation~\citep{bach2017breaking,liu2022omnigrok,d2024we}, this offers a convincing explanation for the delayed improvement in test performance. We discuss various settings where this behaviour is observed in~\Cref{sec:expe}. %and provide an additional discussion on the role of initialisation in \cref{app:initialisation}.

% This delayed improvement due to this drift, precisely explains the grokking phenomenon.

\begin{figure}[t]
    \centering
    \includegraphics[width=0.9\textwidth]{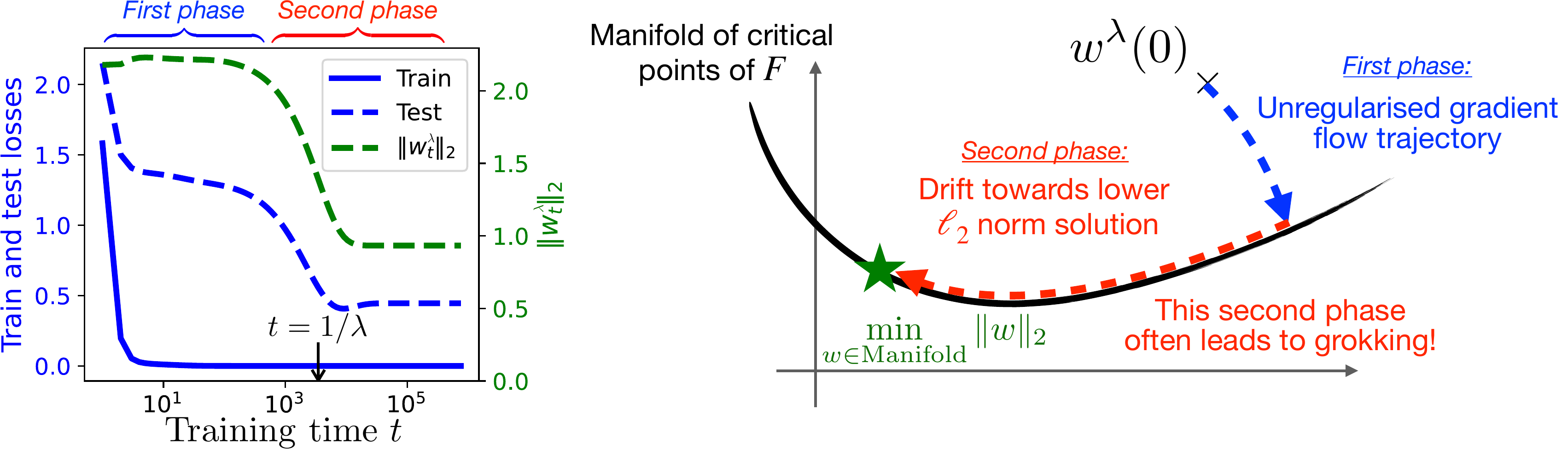}
    % \vspace{-4em}
       \caption{Gradient flow with small weight decay $\lambda$. 
\textit{(Left)} A typical example of grokking: the training loss rapidly drops to zero, while the test loss plateaus for a long period before eventually decreasing—coinciding with a steady drop in the $\ell_2$-norm of the weights. 
\textit{(Right)} Schematic illustration in parameter space $\mathbb{R}^d$ of the optimisation behaviour described in~\Cref{thm:informal}. The trajectory $w^\lambda(t)$ initially follows the unregularised gradient flow and converges to a manifold of critical points of $F$ (fast dynamics). At time $t \approx 1/\lambda$, the regularisation term becomes dominant and induces a slow drift along this manifold toward a lower $\ell_2$-norm solution (slow dynamics).}
    \label{fig:intro}
\end{figure}

\section{Related work}
\paragraph{Grokking in experimental works.} The term \textit{grokking} was originally coined by \citet{power2022grokking}, which studied a two-layer transformer trained with weight decay on a modular addition task. They observed that the network quickly fits the training data while generalising poorly, followed much later by a sudden transition to near-perfect generalisation. Following this work, many studies have investigated modular addition tasks to better understand the mechanisms underlying this phenomenon~\citep{nanda2023progress,gromov2023grokking}. However, grokking has been observed far beyond this setting. For instance, \cite{barak2022hidden} showed that training a neural network to learn parities exhibits a similar delayed generalisation pattern. In \cite{liu2022omnigrok}, grokking was induced across a broad range of tasks, including image classification and sentiment analysis, by using small datasets, large initialisations, and weight decay. Other settings and architectures where grokking-like behaviour appears include matrix factorisation~\citep{lyu2023dichotomy} and learning XOR-clustered data with a ReLU network~\citep{xu2023benign}. Finally, it is worth noting that this delayed transition in generalisation was already observed in earlier works, as clearly illustrated in Figure 3 of \citet{chizat2020implicit}. 
More recently, \citet{jeffares2025position} argued that grokking may not be so central to Deep Learning and may only appear in very specific situations. However, we still believe its \textit{a priori} counter-intuitive aspect is worth investigating and might lead to theoretical understandings that go beyond what is currently referred by grokking.

\paragraph{Grokking as the transition between lazy and rich regimes.} Several works have framed grokking as the transition between the lazy and rich regimes.
The lazy regime, also called the NTK regime, was introduced by \cite{jacot2018neural}. It typically arises when the network is trained from large initialisations~\citep{chizat2019lazy}, and corresponds to a setting where zero training loss can be quickly achieved, but often with poor generalisation performance. In contrast, the rich regime (also called the feature learning regime) corresponds to a setting where the network actively learns new internal features during training.
In the classification setting, \cite{lyu2019gradient} show that the rich regime is always attained for homogeneous parameterisations, and similarly, \cite{chizat2020implicit} provide an analogous result for infinitely wide two-layer networks.
In this context, \cite{lyu2023dichotomy} and \cite{kumar2024grokking}, followed by \cite{mohamadi2024you}, offer a theoretical perspective on grokking as the transition from the lazy regime to the rich regime during training: initially, the predictor quickly converges towards the NTK solution, and later escapes this regime to reach a better generalising solution, driven by the effects of implicit regularisation and/or weight decay.
% Our work significantly improves over these works, by providing a rigorous, general and principled framework describing the optimisation dynamics of grokking.

\paragraph{The role of weight decay.}
The role of weight decay in grokking remains somewhat debated. While many of the original works exhibiting the phenomenon include weight decay~\citep{power2022grokking,liu2022towards}, grokking has also been observed without~\citep{chizat2020implicit,xu2023benign}, as strongly emphasised by \cite{kumar2024grokking}. However, as shown in~\cite{lyu2023dichotomy}, the transition tends to occur much later and to be less sharp without weight decay. In this context, weight decay can be interpreted as a factor that triggers or accelerates the transition from the lazy to the rich regime. While grokking can be observed in classification tasks even without weight decay—thanks to the algorithm's implicit bias—to the best of our knowledge, it cannot occur in regression tasks unless weight decay is used. Of particular relevance to our work, \cite{liu2022omnigrok} propose an intuitive explanation of grokking that is based on weight decay: during the first phase, the model rapidly converges to a poor global minimum; during the second, slower phase, weight decay gradually steers the iterates toward a lower-norm solution with better generalisation properties. While appealing, this explanation remains informal and lacks rigorous theoretical support. 
In this work, we provide a formal analysis of the optimisation dynamics underlying the grokking phenomenon: an initial fast phase leads to convergence toward the solution associated with the lazy regime, followed by a slower second phase that drives convergence toward the solution characteristic of the rich regime.

\paragraph{Drift on the interpolation manifold.}\looseness=-1
Many theoretical works in the machine learning community have studied the training dynamics of gradient methods in overparameterised neural networks, where the set of zero training loss solutions forms a high-dimensional manifold. In this context, leveraging results from dynamical systems theory~\citep{katzenberger1990solutions}, the work of \citet{li2021happens} describes the drift dynamics induced by stochastic noise after stochastic gradient descent (SGD) reaches the manifold. This analysis was further extended by \citet{shalova2024singular}. Leveraging a similar stochastic differential framework, \cite{vivien2022label} precisely characterises this drift in the setting of diagonal linear networks, and proves that it leads to desirable sparsity guarantees. Much in the spirit of our work, although outside the deep learning context, \citet{fatkullin2010reduced} derive stochastic differential equations that describe the dynamics of systems with small random perturbations on energy landscapes with manifolds of minima, illustrating how the system first converges to the manifold and then drifts along it. 
%Whereas these works emphasise drift induced by the stochasticity of gradient updates, our analysis focuses on a deterministic drift arising from the regularisation of the training objective.

%\paragraph{Comparison with \citet{li2021happens}.}
Our analysis builds upon this framework popularized by \citet{li2021happens} and tracing back to \citet{katzenberger1990solutions}. Our contribution differs from previous work in both focus and scope. 
Whereas previous analyses attribute the second, slower learning phase to stochastic effects, we identify a deterministic mechanism--namely, a slow drift induced by regularization--and link it directly to the grokking phenomenon, a connection that, to our knowledge, has not been previously explored. 
From a technical perspective, our setting is simpler yet enables a more detailed analysis. Rather than appealing to results from \citet{katzenberger1990solutions} as a black box, we provide a simpler, self-contained proof building on \citet{falconer1983differentiation}. Furthermore, unlike prior analyses that assume initialization near the interpolation manifold, our framework accommodates arbitrary initialisation. We rigorously characterise the initial convergence toward the manifold and the subsequent transition to the drift phase along it, which constitutes the main technical novelty of our analysis.

\normalsize

\section{Setting and preliminaries} %need to find a better name
We consider a loss function $F: \R^d \to \R_+$ which is sufficiently smooth, as stated in \Cref{ass:Fconditions} below.
Typical examples include the least square loss over some training dataset, where the parameters to optimise represent the weights of some neural network architecture. 
For a given $\lambda > 0$, we define the regularised loss $F_\lambda$ as:
\vspace{-0.8em}\begin{gather*}
F_\lambda(w) \coloneqq F(w) + 
    \frac{\lambda}{2} \| w \|_2^2, \qquad \forall w \in \R^d.
    \end{gather*}
Initialising the parameters from $w_0\in\R^d$ (independently of $\lambda$), we then consider the gradient flow $\wl$ over the regularised loss for any $\lambda>0$, as the solution of the differential equation
\begin{equation}\label{eq:regGF}
\begin{gathered}
   \dot{w}^\lambda(t) = -\nabla F_{\lambda}(\wl(t)) 
   \quad\text{ and }\quad
    \wl(0)=w_0.
\end{gathered}
\end{equation}
Gradient flow is the limit dynamics of (stochastic) gradient descent as the learning rate goes to 0.
 For $\lambda =0$, we denote $\wgf$ the gradient flow on the unregularised loss:
\begin{equation}\label{eq:GF}
\begin{gathered}
   \dot{w}^{\mathrm{GF}}(t) = -\nabla F(\wgf(t)) \quad\text{ and }\quad
    \wgf(0)=w_0.
\end{gathered}
\end{equation}
In the remaining of the paper, we consider the following assumption on the objective function.
\begin{ass}\label{ass:Fconditions}
The function $F$ is $\cC^3$ on $\R^d$, its third derivative is locally Lipschitz and $F$ is definable in an o-minimal structure. Moreover, the solution $\wgf$ to the gradient flow ODE \eqref{eq:GF} is bounded.
\end{ass}
The regularity conditions on $F$ ensure that, for any $\lambda \geq 0$, the gradient flow ODE has a unique solution, which is defined for all $t \geq 0$. This is a consequence of the Picard–Lindelöf theorem and boundedness of the trajectories. %\footnote{For $\lambda > 0$, boundedness of the trajectories follows  from the fact that $F_\lambda$ has compact sublevel sets. This is not necessarily true for the unregularised loss $F$, hence the need for the boundedness assumption on $\wgf$.}
%
%Note à moi meme: le papier de Li et al (2021) omet cette hypothese de local Lipschitz, mais elle est necessaire (et relevee par Shalova et al (2024)
%
%
Definability in the o-minimal sense guarantees that bounded gradient flow trajectories converge to a limit point~\citep[Thm. 2]{kurdyka1998gradient}. This is a mild assumption satisfied by most functions arising in applications, such as polynomials, logarithms, exponentials, subanalytic functions, and finite combinations of those; see \cite{coste2000introduction,Bolte2007} for more details. Overall, \Cref{ass:Fconditions} is pretty mild and holds for all architectures (e.g., neural networks or transformers) that use differentiable activations. 

Finally, note that the boundedness assumption on the unregularised flow excludes classification settings where the network can perfectly separate the data, since in such cases the unregularised iterates diverge.

\subsection{Stationary manifold and Riemannian flow}

\Cref{ass:Fconditions} guarantees that the gradient flow converges to a limit point $\wgf_{\infty} \coloneqq \lim_{t\to\infty} \wgf(t)$, which is a stationary point of $F$. In typical scenarios of training overparameterised models, stationary points are not isolated, but form continuous sets \citep{Cooper2021global}.

\begin{definition*}[Definition of the manifold $\cM$]
We define $\cM$ to be the largest connected component of $\nabla F^{-1}(0)$ containing~$\wgf_{\infty}$, where $\nabla F^{-1}(0)$ corresponds to the set of stationary points of $F$.
\end{definition*}

Our key assumption is that $\cM$ forms a smooth manifold, and that it contains only local minimizers (and not saddle points). Additionnally, we impose that the non-zero eigenvalues of the Hessian on $\cM$ are lower bounded by some constant $\eta > 0$. Following the terminology of \cite{rebjock2024fast}, this is known as the \textit{Morse-Bott property}.
%
% In \Cref{sec:main}, grokking is described as a two-timescale dynamics. In particular, during the slow timescale, it behaves as a Riemannian gradient flow on the stationary manifold $\cM$. To define correctly such objects, we need the following regularity assumption on the manifold $\cM$.
%
\begin{ass}\label{ass:manifold}
$\cM$ is a smooth submanifold of $\R^d$ of dimension $k \in [d]$, i.e. for any $w\in\cM$, $\mathrm{rank}(\nabla^2 F(w)) = d-k$. 
Also, there exists $\eta>0$ such that for any $w\in\cM$, all non-zero eigenvalues of $\nabla^2 F(w)$ are lower bounded by $\eta$.
 \end{ass}
As stated in \citet{rebjock2024fast}, for $\cC^2$ functions this property is equivalent to the \textit{Kurdyka–\L ojasiewicz}, also known as \textit{Polyak–\L ojasiewicz} (PL) condition locally around~$\cM$ \citep{kurdyka1998gradient,Bolte2009}. It implies that (i) every point in $\cM$ is a local minimiser, and (ii) all gradient flow trajectories are locally attracted towards $\cM$. This stability property is essential for proving regularity of the flow map $\Phi$ in \Cref{sec:main}. 
Let us discuss the relevance of \Cref{ass:manifold} in the context of overparameterised machine learning. 
\begin{itemize}[leftmargin=20pt,topsep=0pt]
    \item \textbf{Convergence to a local minimiser:} our assumption rules out the possibility that $\wgf$ converges to a saddle point of $F$. This is justified by a large number of works showing that gradient methods avoid saddle points for \textit{almost all} initialisations. In particular, \cite{Jordan2016saddleavoidance,Lee2019} prove it under the assumption that all saddle points of $F$ are strict. Although their result only holds for discrete-time gradient descent, the underlying argument can be extended to gradient flow (the proof relies on the Stable Manifold Theorem for dynamical systems, which also holds in continuous time \citep[§7]{teschl2012ordinary}).
    \item \textbf{Morse-Bott/\L ojasiewicz property:}  
     this is a common assumption in the analysis of gradient flow dynamics for overparameterised networks \citep{li2021happens,fatkullin2010reduced,shalova2024singular}. Note that for general models, the critical set may not form a manifold everywhere. However, it is often possible to show that the manifold structure holds locally, i.e., on \textit{most} of the space, excluding some degenerate points\footnote{While our results are stated for such a global Morse-Bott assumption, they could be directly extended to local Morse-Bott assumptions, provided that the iterates remain in the non-degenerate region.}; see \Cref{sec:expe} for examples and \citet{liu2022loss} for a generic result. Moreover, the results derived from such an assumption are generally very representative of empirical observations, as can be seen in \Cref{sec:expe}. 
\end{itemize}

%\medskip
\paragraph{Riemannian gradient flow on $\cM$.}
We endow $\cM$ with the standard Euclidean metric.
For any differentiable function $h:\R^d\to\R$, we denote by $\grad_{\cM} h$ the Riemannian gradient on $\cM$ of the function $h$ defined as follows:
\vspace{-0.5em}
\begin{equation*}
     \grad_{\cM} h : \begin{array}{l} \cM \to \R^d \\w \mapsto P_{T_{\cM}(w)}(\nabla h (w)),\end{array}
\end{equation*}
where $P_{T_{\cM}(w)}$ is the orthogonal projection on the tangent space to $\cM$ at $w$. Under \cref{ass:manifold}, typical properties of smooth manifolds imply that $T_{\cM}(w)= \mathrm{Ker}(\nabla^2 F(w))$ \citep[see e.g.,][for a detailed introduction to optimisation on manifolds]{boumal2023introduction}. Using this notion of Riemannian gradient, we study the Riemannian gradient flow for some objective function $h$ and initialization $w_{\cM} \in \cM$, defined as the curve $w$ satisfying,
\begin{equation}\label{eq:riem_grad_flow}
    \dot{w}(t) =-\grad_{\cM} h(w(t)) \quad\text{ and }\quad
    w(0)=w_{\cM}.
\end{equation} 
By construction of the Riemannian gradient, the trajectory of any solution of this ODE necessarily belongs to $\nabla F^{-1}(0)$, and therefore to $\cM$, since $\cM$ is a maximal connected component.
If $h$ is $\cC^2$ and has compact sublevel sets, Assumptions \ref{ass:Fconditions} and \ref{ass:manifold} guarantee that there exists a unique solution to \Cref{eq:riem_grad_flow} and that it is defined on $\R_+$.

\section{Grokking as two-timescale dynamics}\label{sec:main}
This section states our main results, where we characterise the two-timescale dynamics of the regularised gradient flow \eqref{eq:regGF} in the limit $\lambda\to 0$. 
In \Cref{sec:fast} we describe the first phase, the \textit{fast dynamics}, where $\wl$ approximates the unregularised gradient flow solution on finite time horizons. 
\Cref{sec:slow} then identifies the second, \textit{slow dynamics} happening at arbitrarily large time horizons, where $\wl$ follows the Riemannian flow of the $\ell_2$ norm on the manifold $\cM$ of stationary points. 

\subsection{Fast dynamics}\label{sec:fast}
A first simple observation is that as $\lambda\to 0$, $F_{\lambda} \to F$ uniformly on any compact of $\R^d$. From there, it seems natural that $\wl$ should converge, at least pointwise, to $\wgf$ as $\lambda \to 0$. A  Gr\"onwall argument indeed allows to characterise the first, fast timescale dynamics given by \Cref{prop:phase1conv} below.
\begin{restatable}{proposition}{fastphaseconv}\label{prop:phase1conv}
If \Cref{ass:Fconditions} holds, then for all $T \geq 0$, $w^\lambda \underset{\lambda \to 0}{\longrightarrow} w^{GF}$ in $(\mathcal{C}^0([0, T], \R^d), \Vert \cdot \Vert_\infty) $
\end{restatable}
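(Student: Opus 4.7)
The plan is the standard Grönwall/continuous-dependence argument, with a bootstrap step to first confine $w^\lambda$ to a fixed compact set on which $\nabla F$ is Lipschitz.

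Fix $T \geq 0$. The first step is to localize both trajectories in a common compact set. Since $w^{GF}$ is continuous on $[0,T]$ and bounded by \Cref{ass:Fconditions}, the curve $\{w^{GF}(t) : t \in [0,T]\}$ is compact; let $K$ be the closure of a bounded open neighborhood of it, set $R := \sup_{w \in K}\|w\|$, and let $L$ be a Lipschitz constant of $\nabla F$ on $K$ (which exists since $F \in \cC^2$ and $K$ is compact). Let $\delta > 0$ be the distance from $w^{GF}([0,T])$ to $\partial K$. Finally define the exit time $\tau_\lambda := \inf\{t \in [0,T] \,:\, w^\lambda(t) \notin K\}$, with $\tau_\lambda = T$ if the infimum is empty.

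The second step is a Grönwall estimate on $[0,\tau_\lambda]$. Setting $e(t) := w^\lambda(t) - w^{GF}(t)$, we have $e(0) = 0$ and
\begin{equation*}
    \dot{e}(t) = -\bigl(\nabla F(w^\lambda(t)) - \nabla F(w^{GF}(t))\bigr) - \lambda w^\lambda(t).
\end{equation*}
On $[0,\tau_\lambda]$ both $w^\lambda(t)$ and $w^{GF}(t)$ lie in $K$, so $\|\dot{e}(t)\| \leq L\|e(t)\| + \lambda R$. The differential form of Grönwall's lemma then yields $\|e(t)\| \leq \lambda R (e^{Lt}-1)/L \leq C\lambda$ on $[0,\tau_\lambda]$, where $C := R(e^{LT}-1)/L$ depends only on $T$ and $F$.

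The third step bootstraps this estimate to show $\tau_\lambda = T$ for all sufficiently small $\lambda$. Choose $\lambda$ small enough that $C\lambda < \delta$. If $\tau_\lambda < T$, then by continuity $w^\lambda(\tau_\lambda) \in \partial K$, so $\|e(\tau_\lambda)\| \geq \delta > C\lambda$, contradicting the bound just obtained. Hence $\tau_\lambda = T$ and $\|w^\lambda - w^{GF}\|_{\infty,[0,T]} \leq C\lambda$, giving uniform convergence.

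The main subtlety is precisely this bootstrap: a priori we only know $F \in \cC^3$ globally, not that $\nabla F$ is globally Lipschitz, so the Grönwall comparison is only valid once the trajectories are confined to a common compact set. The self-consistency argument — controlling the error by $C\lambda$ forces $w^\lambda$ to stay in $K$, which in turn justifies the error bound — resolves this. Everything else is routine once this is in place.
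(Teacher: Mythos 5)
Your proof is correct and follows essentially the same route as the paper: confine $w^\lambda$ to a fixed compact neighbourhood of the unregularised trajectory, run a Grönwall comparison on the error $e(t)=w^\lambda(t)-w^{GF}(t)$ to get an $O(\lambda)$ bound, and bootstrap via the exit time to conclude the trajectory never leaves the compact set on $[0,T]$ for small $\lambda$. The only minor difference is that the paper deliberately uses the integral form of Grönwall because $t\mapsto\|e(t)\|$ need not be differentiable everywhere; your differential-form step is easily repaired by passing to $\|e(t)\|\le\int_0^t\|\dot e(s)\|\,\mathrm{d}s\le\int_0^t\bigl(L\|e(s)\|+\lambda R\bigr)\,\mathrm{d}s$ (or by noting that $\|e(\cdot)\|$ is Lipschitz, hence a.e. differentiable with derivative bounded by $\|\dot e(t)\|$), after which your bound $\lambda R(e^{Lt}-1)/L$ and the rest of the argument go through unchanged.
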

Importantly, uniform convergence only holds on finite time intervals of the form $[0,T]$, but is not true on $\R_+$. More precisely, grokking is observed when the two limits cannot be exchanged: $\lim_{\lambda\to 0^+}\lim_{t\to\infty} \wl(t) \neq \lim_{t\to\infty}\lim_{\lambda\to 0^+} \wl(t)  = \lim_{t\to\infty}\wgf(t)$. In \Cref{fig:intro}, the endpoint of the red arrow corresponds to this first limit, while the endpoint of the blue arrow corresponds to the second.
This distinction highlights a key aspect of grokking: the dynamics evolve on two different timescales. Initially, as described by \Cref{prop:phase1conv}, the regularised flow tracks the unregularised one. But at much larger time horizons, the regularised dynamics begin to diverge.

\subsection{Slow dynamics}\label{sec:slow}
The second part of the dynamics is harder to capture, since it happens at a time approaching infinity, when $\lambda$ approaches zero. It is done using theory of singularly perturbed systems. 
In the following, we associate to the unregularised flow function a mapping $\phi:\R^d\times \R_+ \to \R^d$ satisfying
\vspace{-0.5em}
\begin{equation*}
    \phi(w,t) = w - \int_{0}^t \nabla F(\phi(w,s))\ \dd s, \qquad \forall (w,t)\in \R^d\times \R_+.
\end{equation*}
Note that $\phi(w,t)$ simply corresponds to the solution of the gradient flow of \Cref{eq:GF} at time $t$ when
initialised in $w$. %\Cref{ass:Fconditions,ass:manifold} guarantee that $\phi(w,\cdot)$ admit a limit in $t\to \infty$ on a neighbourhood of $\cM$. 
When possible to define—i.e., when the gradient flow admits a limit point in $\R^d$—we define the mapping $\Phi$ as
\begin{equation}\label{eq:Phi}
    \Phi(w) = \lim_{t\to \infty} \phi(w,t).
\end{equation}
\looseness=-1
Thanks to \Cref{ass:Fconditions}, the unregularised flow initialised in $w_0$ admits the limit point $\Phi(w_0)$, which is necessarily a stationary point of the loss, i.e., $\nabla F(\Phi(w_0))=0$. \Cref{ass:manifold} then ensure that the mapping $\Phi$ is defined and $\cC^2$ on some neighbourhood of $\cM$, thanks to a result of \citet{falconer1983differentiation}.
\begin{restatable}{lemma}{falconer}\label{lemma:falconer}
    If \Cref{ass:Fconditions,ass:manifold} hold, there exists an open neighbourhood $U$ of $\cM$ such that $\Phi$ is defined and $\cC^2$ on $U$.
\end{restatable}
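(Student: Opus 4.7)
My plan is to derive the result from the theorem of \citet{falconer1983differentiation}, which guarantees that the long-time limit map of a sufficiently smooth dynamical system inherits the regularity of the vector field on the basin of a normally hyperbolic attracting invariant set. The core of the proof is therefore to verify that \Cref{ass:Fconditions,ass:manifold} supply exactly the hypotheses Falconer's result requires, and then to assemble local conclusions into a single open neighbourhood of $\cM$.

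First, I would check the regularity and invariance inputs. \Cref{ass:Fconditions} gives $F\in\cC^3$ with locally Lipschitz third derivative, so the gradient flow vector field $-\nabla F$ is $\cC^2$ with locally Lipschitz Jacobian, which is the regularity needed to conclude $\Phi\in\cC^2$. Since $\cM\subseteq\nabla F^{-1}(0)$, every point of $\cM$ is a fixed point of the flow, so $\cM$ is trivially invariant under $\phi$.

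Second, I would extract normal hyperbolicity from \Cref{ass:manifold}. At any $w\in\cM$, the Jacobian of the vector field is $-\nabla^2 F(w)$: the identification $T_{\cM}(w)=\mathrm{Ker}(\nabla^2 F(w))$ tells us that the center subspace of the linearised flow coincides with the tangent space of $\cM$, while on the orthogonal complement $T_{\cM}(w)^{\perp}$ all eigenvalues are bounded above by $-\eta<0$. This gives a spectral gap that is uniform along $\cM$, so $\cM$ is normally hyperbolic with transverse contraction rate at least $\eta$. A standard Gr\"onwall/linearisation argument in a tubular chart (tangent plus normal decomposition around $w$) then shows that trajectories starting close enough to $\cM$ decay exponentially in the normal direction and converge to some point of $\cM$, so $\Phi$ is well defined on a small tubular neighbourhood $V(w^\star)$ of each $w^\star\in\cM$.

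Third, I would apply Falconer's differentiation theorem locally on each $V(w^\star)$ to obtain $\Phi\in\cC^2(V(w^\star))$, and then take $U=\bigcup_{w^\star\in\cM}V(w^\star)$. The local definitions automatically coincide, since $\Phi(w)$ is intrinsically the pointwise limit of $\phi(w,t)$ as $t\to\infty$, independent of the chosen tubular patch, so $\Phi$ is globally well defined and $\cC^2$ on $U$.

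The main obstacle I anticipate is managing the possible non-compactness of $\cM$: one needs the transverse radius of $V(w^\star)$ to be locally uniformly positive in $w^\star$ so that $U$ is genuinely open and contains all of $\cM$. The uniform lower bound $\eta$ on the non-zero Hessian eigenvalues, together with local Lipschitz continuity of $\nabla^2 F$ inherited from \Cref{ass:Fconditions}, lets one pick the tubular radius as a continuous positive function on $\cM$, which is enough. In the downstream use of $\Phi$, only a bounded piece of $\cM$ around $\Phi(w_0)=\lim_{t\to\infty}w^{\mathrm{GF}}(t)$ is actually needed, so if this uniform construction turned out to be delicate it would also be acceptable to first establish the lemma on a relatively compact open subset of $\cM$ and then extend by the same argument pointwise.
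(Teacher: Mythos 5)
Your proposal follows the same overall architecture as the paper's proof: verify the hypotheses of \citet{falconer1983differentiation} (regularity of the vector field plus an attraction/contraction property of $\cM$ for the time-$s$ flow map), apply his differentiation theorem locally, and patch the local neighbourhoods into a global open set, handling non-compactness of $\cM$ by a local-uniformity argument (the paper exhausts $\R^d$ by bounded open sets $B_n$; your continuous tubular radius achieves the same). The genuine difference is in how the key contraction hypothesis is verified. You linearise the flow at points of $\cM$ and invoke normal hyperbolicity: spectrum of $-\nabla^2 F(w)$ bounded above by $-\eta$ transversally, kernel tangentially. The paper instead never linearises; it uses $F$ itself as a Lyapunov function via the Morse--Bott/PL equivalences of \cite{rebjock2024fast}: the local PL inequality gives $F(\phi(w,t))-F^\star\le (F(w)-F^\star)e^{-\eta t}$, the error bound $\|\nabla F\|\le\beta\sqrt{F-F^\star}$ keeps the trajectory inside the chart, and the two-sided quadratic-growth comparison $\tfrac{\eta}{8}d(\cdot,\cM)^2\le F-F^\star\le\alpha\, d(\cdot,\cM)^2$ converts the decay of $F$ into the geometric contraction $d(\phi(w,t),\cM)\le k\,d(w,\cM)$ that Falconer requires. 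Your route is workable but its crux --- ``a standard Gr\"onwall/linearisation argument in a tubular chart shows exponential decay in the normal direction'' --- is exactly the step that needs care: the equilibrium is non-isolated, the base point of the linearisation drifts along $\cM$, and the nonlinear remainder must be controlled uniformly over the patch; making this rigorous essentially amounts to re-deriving the quadratic-growth/PL estimates. The Lyapunov route buys a shorter, coordinate-free verification; the linearisation route buys geometric intuition (and would generalise to non-gradient normally hyperbolic settings) at the cost of that extra bookkeeping.
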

Now the mapping $\Phi$ is well defined on a neighbourhood of $\cM$, it can be used to describe the limit of the slow dynamics. For $\lambda > 0$, we let $\twl : t \mapsto \wl(t/\lambda)$. We indeed have to adequately ``speed-up'' time to capture its behaviour. Notably, $\twl$ satisfies the following differential equation:
\vspace{-0.5em}\begin{equation}\label{eq:slowODE}
    \begin{gathered}
 \dot{\tilde{w}}^\lambda(t) = - \twl(t) -\frac{1}{\lambda} \nabla F(\twl(t))    \quad\text{ and }\quad
    \twl(0)=w_0.
    \end{gathered}
\end{equation}
Our goal in this section is to study the limit function $\lim_{\lambda\to 0}\twl$. Intuitively, the $\frac{1}{\lambda} \nabla F$ term in \Cref{eq:slowODE} will enforce this limit to stay on the stationary manifold $\cM$ for any $t>0$. 
% However, since $\twl(0)=w_0$ for any $\lambda$, the limit function $\lim_{\lambda\to 0}\twl$ is generally discontinuous at $t=0$. 
%
The slow dynamics will be shown to approximate the Riemannian flow of the squared Euclidean norm on the stationary manifold $\cM$ for $t>0$. This limit flow is defined by $\tw^\circ$, which is the solution of the following differential equation on $\R_+$, for the function $\ell_2 : w \mapsto \|w\|_2^2/2$,
\begin{equation}\label{eq:slowlimit}
    \dot{\tw}^\circ(t) = - \mathrm{grad}_{\cM} \ \ell_2  (\tw^\circ(t)) \quad \text{ and }\quad \tw^\circ(0) = \Phi(w_0).
\end{equation}
Recall that the Riemannian gradient is $\mathrm{grad}_{\cM} \ \ell_2  (w) =  P_{\mathrm{Ker}(\nabla^2 F(w))}(w)$ for any $w\in\cM$. Denoting $D\Phi_w$ the differential of $\Phi$ at $w$, \citet[][Lemma 4.3]{li2021happens} proved that for any $w\in\cM$, $P_{\mathrm{Ker}(\nabla^2 F(w))} = D\Phi_w$, i.e., the differential of $\Phi$ at $w$ is given by the orthogonal projection onto the kernel space of the Hessian of $F$. In consequence, $\tw^\circ$ also satisfies the following differential equation:
    \begin{equation*}
    \dot{\tw}^\circ(t) = -  D\Phi_{\tw^\circ(t)} (\tw^\circ(t)) \quad \text{ and }\quad \tw^\circ(0) = \Phi(w_0).
\end{equation*}
Using this alternative description of $\tw^\circ$, we can now prove our main result, given by \Cref{prop:grokking}. 
\begin{restatable}{proposition}{grokking}\label{prop:grokking}
    If \Cref{ass:Fconditions,ass:manifold} hold, then for all $T,\varepsilon>0$, we have $\tilde{w}^{\lambda}\underset{\lambda \to 0}{\longrightarrow} \tw^\circ$ in $(\mathcal{C}^0([\varepsilon, T], \R^d), \Vert \cdot \Vert_\infty) $, where $\tw^\circ$ is the unique solution on $\R_+$ of the differential equation \eqref{eq:slowlimit}.
\end{restatable}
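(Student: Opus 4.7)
The proof is a deterministic singular-perturbation argument organised around the smooth projection $\Phi$ from \Cref{lemma:falconer}. The central algebraic observation is that for $w \in U$, differentiating the flow-invariance relation $\Phi(\phi(w,s)) = \Phi(w)$ in $s$ at $s=0$ yields the identity
$$D\Phi_w \, \nabla F(w) = 0.$$
This is what allows the stiff term $\lambda^{-1}\nabla F$ in \eqref{eq:slowODE} to disappear once we track the evolution of the projected curve on $\cM$. Throughout, I exploit the decomposition $\twl = v^\lambda + r^\lambda$, where $v^\lambda(t) = \Phi(\twl(t))$ plays the role of the slow variable on $\cM$ and $r^\lambda(t) = \twl(t) - v^\lambda(t)$ is the residual transverse to the manifold.

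The plan has three stages. First, I construct an entrance time $t(\lambda) \to 0$ (of order $-\lambda \log \lambda$) such that $\twl(t(\lambda)) \to \Phi(w_0)$ and hence lies in $U$ for $\lambda$ small enough. Since $\twl(t(\lambda)) = \wl(t(\lambda)/\lambda)$ with $t(\lambda)/\lambda \to \infty$, combining \Cref{prop:phase1conv} with the convergence $\wgf(s) \to \Phi(w_0)$ (ensured by \Cref{ass:Fconditions}) yields this via a diagonal extraction; the order $-\lambda \log \lambda$ matches the Morse--Bott contraction rate $\eta/\lambda$. Second, on the interval where $\twl \in U$, the chain rule applied to \eqref{eq:slowODE} combined with the key identity gives the non-stiff ODE
$$\dot{v}^\lambda(t) = -D\Phi_{\twl(t)}\,\twl(t).$$
For the residual, a Taylor expansion $\nabla F(w) = \nabla^2 F(\Phi(w))\,r^\lambda + O(\|r^\lambda\|_2^2)$ valid in a tubular neighborhood of $\cM$, combined with the Morse--Bott lower bound on the Hessian transverse to $\cM$, yields a differential inequality of the form $\tfrac{\mathrm d}{\mathrm d t}\|r^\lambda\|_2^2 \le -\tfrac{\eta}{\lambda}\|r^\lambda\|_2^2 + C\|r^\lambda\|_2$, which together with $\|r^\lambda(t(\lambda))\|_2 \to 0$ implies $\|r^\lambda(t)\|_2 = O(\lambda)$ uniformly.

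Third, I pass to the limit. By \citet[Lemma 4.3]{li2021happens}, $D\Phi_w = P_{\mathrm{Ker}(\nabla^2 F(w))}$ for $w \in \cM$, so evaluating along $v^\lambda$ the right-hand side of the $v^\lambda$-ODE equals $-\mathrm{grad}_{\cM}\,\ell_2(v^\lambda)$ up to an error controlled by $\|r^\lambda\|_2 = O(\lambda)$ and the $\cC^2$-regularity of $\Phi$ given by \Cref{lemma:falconer}. A Grönwall comparison with \eqref{eq:slowlimit} then yields $v^\lambda \to \tw^\circ$ uniformly on $[\varepsilon, T]$ for any $\varepsilon > 0$, and since $r^\lambda \to 0$ uniformly as well, $\twl \to \tw^\circ$ on $[\varepsilon, T]$ as claimed.

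The main obstacle lies in the mutual coupling of the two estimates: the $O(\lambda)$ residual bound requires $\twl$ to remain inside a fixed compact subset of $U$ where $D\Phi$ and $\nabla^2 F$ are uniformly controlled, yet proving this confinement requires knowing that the projection $v^\lambda$ stays close to the Riemannian trajectory $\tw^\circ$, which itself relies on the residual bound. I would resolve this circularity by a bootstrap argument: fix a compact neighborhood $K \subset U$ of the image $\tw^\circ([0,T])$, let $\tau^\lambda$ denote the first exit time of $\twl$ from $K$, and apply the estimates of the second and third stages on $[t(\lambda), \tau^\lambda \wedge T]$. Since $\twl(t(\lambda)) \to \Phi(w_0)$, which lies in the interior of $K$, these estimates force $\tau^\lambda > T$ for all $\lambda$ sufficiently small, closing the argument.
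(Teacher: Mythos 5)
Your proposal is correct and follows essentially the same route as the paper: a junction time $t(\lambda)$ of order $-\lambda\ln\lambda$ bringing $\twl$ into a neighbourhood of $\Phi(w_0)$, the chain rule on $\Phi(\twl)$ together with $D\Phi_w\,\nabla F(w)=0$ to cancel the stiff term, control of the residual $\twl-\Phi(\twl)$, and a Gr\"onwall comparison with \eqref{eq:slowlimit} followed by the time shift. The one place where you genuinely diverge is the confinement step. You resolve the circularity you identify by a quantitative transverse contraction estimate ($\tfrac{\dd}{\dd t}\|r^\lambda\|^2\le -\tfrac{\eta}{\lambda}\|r^\lambda\|^2+C\|r^\lambda\|$, which requires justifying that $r^\lambda$ is aligned with the range of the Hessian up to higher-order terms) combined with an exit-time bootstrap around the compact image $\tw^\circ([0,T])$. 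The paper instead sidesteps the circularity entirely: \Cref{lemma:confined} confines the trajectory in sublevel-set neighbourhoods $U_\varepsilon=\{w\in U: F(w)<F^\star+\varepsilon\}$ of the \emph{whole} manifold, using only the monotone decrease of $F_\lambda$ along the flow and the strict minimality of $F$ on $\cM$ within $U$ (\Cref{lemma:minimal,lemma:bounded}); the residual is then handled by the qualitative bound $\sup_{w\in K\cap U_{\varepsilon(\lambda)}}\|\Phi(w)-w\|\to 0$ rather than an $O(\lambda)$ rate. Your version buys an explicit rate but needs more care in the tubular-neighbourhood Taylor expansion; the paper's version is softer but requires no a priori knowledge of where $\tw^\circ$ goes. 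One minor misattribution: the $-\lambda\ln\lambda$ scale of $t(\lambda)$ comes from the Lipschitz constant of $\nabla F$ in the Gr\"onwall bound of \Cref{prop:phase1conv}, not from the Morse--Bott contraction rate $\eta$.
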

\looseness=-1
\Cref{prop:grokking} states that the slow dynamics $\twl$ converges uniformly to $\tw^\circ$ as $\lambda\to 0$ on any compact interval of the form $[\varepsilon,T]$. Note that excluding $0$ from this interval (i.e., $\varepsilon>0$) is necessary. Indeed, uniform convergence cannot happen on an interval of the form $(0,T]$, since $\twl(0)=w_0$ for any $\lambda>0$ and $\tw^\circ(0)=\Phi(w_0)$. 
In particular, \Cref{prop:grokking} leads to pointwise convergence of $\twl$: we have
\vspace{-0.3em}\begin{equation*}
    \begin{cases}
        \lim\limits_{\lambda\to 0}\twl(0) = w_0,\\
        \lim\limits_{\lambda\to 0}\twl(t) = \tw^\circ(t) \quad \text{if }t>0.
    \end{cases}
\end{equation*}
This limit function $\lim_{\lambda\to 0}\twl$ is non-continuous at $0$. Indeed the whole \textit{fast dynamics}, which follows the unregularised flow, happens at that $0$ point in the limit $\lambda\to 0$. 
On the other hand, \Cref{prop:grokking} describes the second phase of the dynamics, starting from the convergence point of the unregularized flow $\Phi(w_0)$ -- at the rescaled time $0^+$ -- and following the Riemannian flow on $\cM$.

Note that, once the junction between the slow and fast dynamics is carefully handled via \Cref{lemma:junction} in \Cref{sec:junction}, \Cref{prop:grokking} can be derived from \citet[][Theorem 2.2]{fatkullin2010reduced}, which heavily relies on the technical result of \citet{katzenberger1990solutions}. However, for the sake of completeness and readability, we provide a concise and self-contained proof of \Cref{prop:grokking}, avoiding the use of heavyweight methods from \cite{katzenberger1990solutions} and relying on weaker assumptions.

\paragraph{Sketch of proof.} We here provide a sketch of proof with the key arguments leading to \Cref{prop:grokking}. Its complete and detailed proof can be found in \Cref{app:proofgrokking}. 
We first define the shifted slow dynamics $\tvl$ for any $t\geq 0$ as $\tvl(t) = \twl(t+t(\lambda))$, where $t(\lambda)$ is the ``junction point'' between the two dynamics given by \Cref{lemma:junction} in \Cref{sec:junction}, and satisfies $\lim_{\lambda \rightarrow 0} t(\lambda)=0$. Using \Cref{lemma:junction}, $\tvl$ then follows the same differential equation as $\twl$, with an initial condition now satisfying $\lim_{\lambda\to 0} \tvl(0) = \Phi(w_0) \in \cM$. 
While the dynamics of $\tvl$ might be hard to control as $\lambda\to 0$, it is easier to control the one of $\Phi(\tvl)$. Using the chain rule, we indeed have
\vspace{-0.2em}\begin{equation*}
    \dot{\Phi}(\tvl(t)) = -D\Phi_{\tvl (t)} \cdot \Big(\tvl(t) + \frac{1}{\lambda}\nabla F(\tvl(t))\Big).
\end{equation*}
Then using the fact that for any $w$ in a neighbourhood of $\cM$, $D\Phi(w)\cdot \nabla F(w)=0$  \citep[][Lemma C.2]{li2021happens}, this directly rewrites as
\vspace{-0.2em}\begin{equation*}
    \dot{\Phi}(\tvl(t)) = -D\Phi_{\tvl (t)} \cdot \tvl(t).
\end{equation*}\looseness=-1
Now note that this resembles the differential equation satisfied by $\tw^\circ$. The two differences being that (i) the initialisation points differ, but $\lim_{\lambda\to 0}\tvl(0) = \tw^\circ(0)$; (ii) the time derivative is on $\Phi(\tvl)$ rather than $\tvl$ directly. 
To handle the second point, $\tvl$ and $\Phi(\tvl)$ obviously converge to the same initialisation point as $\lambda\to 0$. One can then use stability of the manifold $\cM$, thanks to \Cref{ass:manifold}, to show that $\sup_{t\in[0,T]}\|\tvl(t) - \Phi(\tvl(t))\|$ converges to $0$ as $\lambda \to 0$. This then allows to conclude.
\qed

\medskip

\looseness=-1
\paragraph{Characterizing the limit of the Riemannian flow.}
By monotonicity of its norm, $\tw^{\circ}$ is obviously bounded over time. Typical optimisation results then guarantee that the limit set of $\tw^\circ(t)$ as $t\to\infty$ is contained in the set of critical points of the squared Euclidean norm on the manifold $\cM$, given by the KKT points of the following constrained problem:
\begin{equation}\label{eq:KKT}
   \min_{w\in\cM} \|w\|_2^2. 
\end{equation}
The notion of KKT points indeed extend to smooth manifolds \citep{bergmann2019intrinsic}, so that under \Cref{ass:manifold}, the KKT points of \Cref{eq:KKT} are given by the points $w^\star\in\cM$ satisfying
    $ \grad_{\cM}\ell_2(w^\star)=0$, where we recall $\grad_{\cM}\ell_2$ is the Riemannian gradient.

We are then able to show that $\wl$ converges towards the set of KKT points. Note that this does not follow from \Cref{prop:grokking} alone, as we also need to show that trajectory of $\wl$ remains bounded independently of $\lambda$: we can prove this is true in our case.

% \Cref{prop:grokking} then implies that for some small enough $\lambda$ and some large time $t$, $\wl(t)$ is close to the KKT points of \Cref{eq:KKT}. 
% However, it does not necessarily holds at infinity. In particular, if the limit point of $\tw^\circ$ is not a stable stationary point on $\cM$, there is a chance that the regularised flow $\wl$ approaches this stationary point but ends up leaving its vicinity. However, we can show that the trajectory of $\wl$ remains bounded independently of $\lambda$ with our set of assumptions, thus guaranteeing convergence towards the set of KKT points of \Cref{eq:KKT}.
% %
\begin{restatable}{proposition}{KKT}\label{coro:KKT}
If \cref{ass:Fconditions,ass:manifold} hold, then for any sequence $(\lambda_k)_{k\in\N}$ such that $\lambda_k \underset{k\to\infty}{\rightarrow}0$, the limit points of $(\lim_{t\to\infty}w^{\lambda_k}(t))_{k\in\N}$ are included in the KKT points of \Cref{eq:KKT}.
\end{restatable}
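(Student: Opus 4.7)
The plan is to show that if the sequence $(w^{\lambda_k}_\infty)_{k \in \N}$, with $w^\lambda_\infty := \lim_{t \to \infty} w^\lambda(t)$, admits a limit point $w^\star$ along a relabelled subsequence, then $w^\star \in \cM$ and $P_{T_{\cM}(w^\star)}(w^\star) = 0$, which is exactly the first-order KKT condition for problem \eqref{eq:KKT}. The proof splits into three steps, the first of which is the main technical obstacle.

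\textbf{Step 1: Uniform boundedness of the trajectory.} I would first establish that there exist $\lambda_0, R > 0$ with $\sup_{t \geq 0} \|w^\lambda(t)\| \leq R$ for every $\lambda \in (0, \lambda_0)$. Fix $s_0 > 0$: since the $\ell_2$-norm is non-increasing along $\tw^\circ$, this curve is bounded and remains in a compact subset $K \subseteq \cM$. Hence \Cref{prop:grokking} implies that, for $\lambda$ small, $\twl(s_0)$ lies in any pre-chosen tubular neighbourhood $U$ of $K$, where \Cref{ass:manifold} yields the error bound $\|\nabla F(w)\| \geq \eta \, d(w, \cM)$. For $\lambda$ small enough, the weight-decay perturbation $-\lambda w$ is dominated by the restoring drift $-\nabla F$ as soon as $d(\twl(s), \cM)$ is of order larger than $\lambda$; a standard invariance argument then confines $\twl(s)$ to an $O(\lambda)$-tube of $\cM$ for all $s \geq s_0$, while control on $[0, s_0]$ is inherited from \Cref{prop:phase1conv}. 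This yields the uniform bound and, as a by-product, the estimate $d(w^\lambda_\infty, \cM) = O(\lambda)$.

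\textbf{Step 2: Localisation of subsequential limits on $\cM$.} For each $\lambda > 0$, $F_\lambda$ is coercive, definable and $\cC^3$, so Kurdyka-\L ojasiewicz guarantees that $w^\lambda(t)$ converges to a critical point $w^\lambda_\infty$ of $F_\lambda$, satisfying $\nabla F(w^\lambda_\infty) = -\lambda w^\lambda_\infty$. Along any convergent subsequence $w^{\lambda_k}_\infty \to w^\star$, the uniform bound of Step~1 allows passing to the limit in this equation to obtain $\nabla F(w^\star) = 0$. Choosing the tubular neighbourhood $U$ of Step~1 small enough so that $\overline{U} \cap \nabla F^{-1}(0) \subseteq \cM$ (possible since the other connected components of $\nabla F^{-1}(0)$ are at positive distance from $K$), we conclude $w^\star \in \cM$.

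\textbf{Step 3: KKT condition via a Taylor expansion.} Let $\pi_k \in \cM$ be the metric projection of $w^{\lambda_k}_\infty$ onto $\cM$ and $\xi_k := w^{\lambda_k}_\infty - \pi_k \in T_{\cM}(\pi_k)^{\perp}$. Since $\nabla F(\pi_k) = 0$, Taylor's formula gives $\nabla F(w^{\lambda_k}_\infty) = M_k \xi_k$ with $M_k := \int_0^1 \nabla^2 F(\pi_k + s \xi_k)\, \mathrm{d}s \to \nabla^2 F(w^\star)$ as $k \to \infty$. By \Cref{ass:manifold}, $\nabla^2 F(w^\star)$ has kernel $T_{\cM}(w^\star)$ and its restriction to the normal complement is at least $\eta I$, so for large $k$ the critical-point equation $M_k \xi_k = -\lambda_k w^{\lambda_k}_\infty$ yields $\|\xi_k\| = O(\lambda_k)$. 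Projecting this equation onto $T_{\cM}(w^\star)$ annihilates the leading contribution $\nabla^2 F(w^\star) \xi_k$ and leaves only an $O(\|M_k - \nabla^2 F(w^\star)\|\, \|\xi_k\|) = o(\lambda_k)$ residue on the left, while the right equals $-\lambda_k P_{T_{\cM}(w^\star)}(w^{\lambda_k}_\infty)$; dividing by $\lambda_k$ and letting $k \to \infty$ gives $P_{T_{\cM}(w^\star)}(w^\star) = 0$, the KKT condition for \eqref{eq:KKT}. The principal difficulty lies in Step~1, since \Cref{prop:grokking} only provides control on compact intervals $[\varepsilon, T]$; extending it to $t \to \infty$ requires exploiting the attractivity of $\cM$ enforced by the Morse-Bott property to dominate the weight-decay perturbation uniformly in time.
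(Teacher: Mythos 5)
Your proposal follows essentially the same route as the paper: uniform boundedness of the trajectories, convergence of each $w^{\lambda_k}(t)$ to a critical point of $F_{\lambda_k}$ so that $\nabla F(w^{\lambda_k}_\infty)=-\lambda_k w^{\lambda_k}_\infty$, localisation of subsequential limits on $\cM$, and a Taylor expansion of this criticality equation around the metric projection onto $\cM$ combined with the spectral gap $\eta$ to conclude $w^\star\in\mathrm{Ker}(\nabla^2 F(w^\star))^\perp$ — this is exactly the paper's argument, with its Lemma~\ref{lemma:bounded} playing the role of your Step~1. One caveat on Step~1: confinement to an $O(\lambda)$-tube of $\cM$ does not by itself bound $\|\wl(t)\|$ when $\cM$ is unbounded (the iterates could drift along the manifold), so you additionally need the monotonicity of $F_\lambda$, as the paper uses, to get $\tfrac{\lambda}{2}\|\wl(t)\|^2\le F_\lambda(\wl(t_1))-F^\star=O(\lambda)$ once the trajectory is near $\cM$.
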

\looseness=-1
While \Cref{coro:KKT} guarantees that $\wl$ gets arbitrarily close to KKT points of \Cref{eq:KKT} as $\lambda$ goes to $0$, it does not imply that it has the same limit as $\tw^\circ$. %In particular, if the limit point of $\tw^\circ$ is not a stable stationary point on $\cM$, there is a chance that the regularised flow $\wl$ approaches this stationary point but ends up leaving its vicinity. 
It is however guaranteed with the additional assumption that $\tw^\circ(t)$ converges to a strict local minimum of the Euclidean norm on the manifold~$\cM$.
\begin{restatable}{proposition}{convergence}\label{prop:convergence}
    Let \cref{ass:Fconditions,ass:manifold} hold and, assume additionally that $w^\circ(t)$ converges towards a strict local minimum $w^\star$ of the constrained problem \eqref{eq:KKT}. Then $\lim_{\lambda\to 0}\lim_{t\to\infty}\wl(t)=w^\star$. 
\end{restatable}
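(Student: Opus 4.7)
The plan is to combine \Cref{prop:grokking} with a local Lyapunov argument based on $F_\lambda$, in order to confine $\wl$ to an arbitrarily small neighbourhood of $w^\star$ at large times, thereby sharpening \Cref{coro:KKT} under the added strict-local-minimality assumption.

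Fix $\varepsilon > 0$ small enough that strict local minimality of $w^\star$ on $\cM$ holds throughout $B(w^\star, \varepsilon)$; by compactness of $\cM \cap \partial B(w^\star, \varepsilon)$ this yields some $\rho > 0$ with $\|w\|^2 \geq \|w^\star\|^2 + \rho$ for every $w \in \cM \cap \partial B(w^\star, \varepsilon)$. The core of the argument is a local stability lemma: there exist $\delta > 0$, $\lambda_0 > 0$ and a time $S_0$ such that, for every $\lambda \in (0, \lambda_0)$, whenever $\twl(S_0) \in B(w^\star, \delta)$ one has $\twl(s) \in B(w^\star, \varepsilon)$ for all $s \geq S_0$. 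Given this lemma, since $\tw^\circ(t) \to w^\star$ one picks $S_0$ with $\tw^\circ(S_0) \in B(w^\star, \delta/2)$, and \Cref{prop:grokking} applied on a suitable interval gives $\twl(S_0) \in B(w^\star, \delta)$ for $\lambda$ small. Passing to $s \to \infty$ yields $\lim_{t\to\infty} \wl(t) \in \overline{B(w^\star, \varepsilon)}$, and since $\varepsilon$ is arbitrary we obtain the claimed double limit.

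The stability lemma itself would be established by exploiting the monotonicity of $F_\lambda$ along $\wl$: one compares a lower bound for $\inf_{\partial B(w^\star, \varepsilon)} F_\lambda$ against an upper bound for $F_\lambda(\twl(S_0))$. Splitting according to the distance to $\cM$, points of $\partial B(w^\star, \varepsilon)$ far from $\cM$ satisfy $F_\lambda(w) \geq F^\star + c$ for a $\lambda$-independent $c > 0$ thanks to \Cref{ass:manifold} (Morse-Bott) and compactness, while points close to $\cM$ project into an annulus around $w^\star$ on $\cM$ where strict local minimality delivers $F_\lambda(w) \geq F^\star + \tfrac{\lambda}{2}(\|w^\star\|^2 + \rho/2)$ (shrinking the tube around $\cM$ so that cross-terms between $\pi(w)$ and $w - \pi(w)$ stay controlled). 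For the upper bound, $\twl(S_0) \to \tw^\circ(S_0) \in \cM$ by \Cref{prop:grokking} and, extracting from its proof the quantitative estimate $d(\twl(S_0), \cM) = O(\lambda)$, Morse-Bott yields $F(\twl(S_0)) - F^\star = O(\lambda^2)$, so $F_\lambda(\twl(S_0)) \leq F^\star + \tfrac{\lambda}{2}(\|w^\star\|^2 + \rho/3)$ for $S_0$ large and $\lambda$ small. The strict gap between these two estimates, combined with monotonicity of $F_\lambda$ along $\wl$, forbids $\twl(s)$ from crossing $\partial B(w^\star, \varepsilon)$.

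The main technical obstacle is matching the two $\lambda$-scales appearing above: the admissible drop in $F_\lambda$ on $\partial B(w^\star, \varepsilon)$ is itself of order $\lambda$ (this is the strict-local-minimum slack along $\cM$), so the qualitative convergence supplied by \Cref{prop:grokking} alone is insufficient and one genuinely needs $d(\twl(S_0), \cM) = O(\lambda)$. Obtaining this rate requires revisiting the proof of \Cref{prop:grokking} and using \Cref{ass:manifold} to argue that the normal component to $\cM$ relaxes, on the slow timescale, to an equilibrium of width $O(\lambda)$ driven by the $-\twl$ forcing — precisely the fast-normal versus slow-tangential splitting already underlying the proof of \Cref{prop:grokking}.
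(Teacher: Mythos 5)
Your overall strategy coincides with the paper's: build a barrier $\inf_{\partial B(w^\star,\delta)}F_\lambda > F^\star + \tfrac{\lambda}{2}\|w^\star\|^2 + \lambda\varepsilon_0$ from strict local minimality (this is exactly the paper's \Cref{lemma:minimality}, including your far-from-$\cM$ / close-to-$\cM$ case split), then trap $\wl$ inside the ball by monotonicity of $F_\lambda$ along the flow. The gap lies in how you get $F_\lambda$ of the trajectory \emph{below} that barrier at some entry time. You correctly note that \Cref{prop:grokking} only gives $F(\twl(S_0))-F^\star = o(1)$ while the barrier slack is $O(\lambda)$, and you propose to bridge this by proving $d(\twl(S_0),\cM)=O(\lambda)$ via a quantitative refinement of \Cref{prop:grokking}. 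That refinement is never carried out — the claim that the normal component "relaxes to an equilibrium of width $O(\lambda)$" is precisely the estimate that needs a proof — so the argument is incomplete at its one genuinely quantitative step.

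The paper closes this gap differently and without revisiting \Cref{prop:grokking}. It only uses the soft bound $F_\lambda(\twl(t_0))\le F^\star+\varepsilon$ for a fixed small $\varepsilon$, and then lets the regularised flow itself perform the relaxation: on $B(w^\star,\delta')$ the Morse--Bott/PL inequality of \Cref{ass:manifold} yields
\begin{equation*}
\frac{\dd F_\lambda(\wl(t))}{\dd t} \le -\eta\bigl(F_\lambda(\wl(t)) - F^\star\bigr) + \lambda\Bigl(R_1 + \tfrac{\eta}{2}R^2\Bigr),
\end{equation*}
so after an additional time of order $-\ln(\lambda)/\eta$ in the fast timescale, Gr\"onwall brings $F_\lambda(\wl(t'))$ down to $F^\star + \lambda(R_1/\eta + R^2/2 + \varepsilon)$, where $R_1/\eta + R^2/2 \to \|w^\star\|^2/2$ as $\delta'\to 0$; meanwhile the displacement accumulated during this relaxation is bounded by $\tfrac{2\beta\sqrt{\varepsilon}}{\eta}-C\sqrt{\lambda}\ln\lambda$, which keeps the trajectory inside the ball. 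This self-contained local Lyapunov computation replaces your unproven $d(\twl(S_0),\cM)=O(\lambda)$ estimate; if you want to complete your version, importing this PL-relaxation step is much lighter than extracting a convergence rate from the proof of \Cref{prop:grokking}.
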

When the slow limit dynamics on $\cM$ converges towards a strict local minimum, \Cref{prop:convergence} guarantees that, for small enough $\lambda$, $\wl$ gets trapped in the vicinity of this local minimum as $t\to\infty$, allowing us to get a perfect characterisation of $\lim_{\lambda\to 0}\lim_{t\to\infty}\wl(t)$. In particular, this double limit corresponds to the limit of the slow dynamics $\tw^\circ$, while the permuted limit ($\lim_{t\to\infty}\lim_{\lambda\to 0}\wl(t)$) corresponds to the limit of the fast dynamics $\wgf$, thanks to \Cref{prop:phase1conv}.

\paragraph{Role of initialisation scale.}\looseness=-1
The initialisation scale strongly influences the behavior of the unregularised flow and, consequently, the first phase of training under weight decay. This dependence on scale is well-documented in the literature \citep{chizat2019lazy}. While a complete theoretical understanding remains open, it is widely accepted that small initialisation scales correspond to the rich regime, in which implicit bias drives the model toward interpolating solutions with smaller weight norms, typically associated with better generalization. In contrast, large initialisation scales give rise to the lazy or NTK regime, where features change little during training. This regime behaves similarly to random feature models and tends to produce interpolators with weaker generalisation performance.

In our framework, this distinction has a direct consequence. With a small initialisation scale, the point $\Phi(\wl(0))$ reached after the first phase already exhibits a small norm--possibly corresponding to a KKT point of \Cref{eq:KKT}--so no second grokking phase occurs, as the system has effectively converged. Conversely, with a large initialisation scale, $\Phi(\wl(0))$ retains a large norm, which triggers substantial movement during the second, grokking phase.

\paragraph{The grokking transition is not sudden!}
In the literature, grokking is often described as a ``sudden drop'' in the validation loss following an extended phase of overfitting. We argue here that this drop only \textbf{seems sudden when training time is plotted on a logarithmic scale}, whereas in fact it unfolds over a characteristic duration of order~$1/\lambda$, following a plateau of comparable duration~$1/\lambda$. Indeed, \Cref{prop:grokking} predicts that the drop takes place within the time interval~$[\varepsilon/\lambda, M/\lambda]$, where~$\varepsilon$ is a small time independent of~$\lambda$ (think of~$\varepsilon$ as the time it takes for~$\tilde{w}^\circ$ to move very slightly away from~$w^{\rm GF}_\infty$), and~$M$ corresponds to the typical time required for~$\tilde{w}^\circ(t)$ to approach its limit~$\lim_{t \to \infty} \tilde{w}^\circ(t)$. Prior to this interval, the parameters evolve according to the unregularised flow $w^{\rm GF}(t)$. Let~$M'$ denote the typical time it takes for~$w^{\rm GF}(t)$ to reach~$w^{\rm GF}_\infty$; then the plateau extends over the interval~$[M', \varepsilon/\lambda]$. As a result, on a logarithmic time scale, the drop occurs within an interval of length~$\ln(M/\varepsilon)$, while the preceding plateau spans roughly~$\ln(\varepsilon/\lambda)$ on the same scale. This explains why, as $\lambda \to 0$, the drop appears abrupt compared to the plateau in log-scale plots. In contrast, when viewed in linear time, the drop actually extends over a duration comparable to that of the preceding plateau, producing a markedly different visual impression.

\paragraph{Comparison with \citet{lyu2023dichotomy}.}\looseness=-1 The work most closely related to ours is that of \citet{lyu2023dichotomy}, which provides a theoretical characterization of the grokking phenomenon as a transition from the NTK regime—i.e., the unregularised flow  initialised at large scales—to the \textit{rich regime}, which typically converges to KKT points of \Cref{eq:KKT}. However, their analysis does not offer a general optimisation-based perspective on the phenomenon and, in particular, does not account for the slow drift phase along the solution manifold, which we identify and characterise. 
Moreover, their setting is more restrictive: it assumes specific network architectures with homogeneous parameterisation and requires large initialisation scales. In contrast, our results hold outside the NTK regime and apply across a broader class of settings. In addition, their theoretical guarantees rely on taking the initialisation scale to infinity while simultaneously letting the regularisation strength tend to zero, with both rates polynomially coupled. Their analysis establishes that, for some sufficiently large time $\tilde{t}(\lambda)$, the regularised flow $\wl$ approaches KKT points of \Cref{eq:KKT}, but it does not provide guarantees about the asymptotic behavior beyond this time. By contrast, \Cref{coro:KKT} characterises the limit points of the flow $\wl$, offering a stronger and more complete understanding of its long-term dynamics.

%\section{Applications}\label{sec:examples}
%\input{Sections/examples.tex}

\section{Examples and experiments}\label{sec:expe}
\paragraph{Linear regression.} Let $F(w) = \|Xw-y\|_2^2$ with $X \in \R^{n \times d}$, and assume that $\min F =0$. The problem is convex and the set of critical points is the affine subspace $\cM = \{ w \,:\, Xw=y \}$; \cref{ass:manifold} is satisfied globally. 

It is well known that unregularised gradient flow $\wgf$ converges to $\mathcal{P}_{\cM}(w_0)$, the projection of the initial point on $\cM$~\citep{lemaire1996asymptotical,gunasekar2018characterizing}. Then, since $\cM$ is convex, the Riemannian flow on $\cM$ necessarily converges to the minimal $\ell_2$ norm solution $w^\star = X^+ y$, where $X^+$ denotes the pseudo-inverse. 
Those two points are different (unless $w_0 = 0$), which leads to grokking, as $w^\star$ is expected to have better generalization properties than $\mathcal{P}_{\cM}(w_0)$ \citep{bartlett2020benign}. In this setting, the trajectories of $\wl$ can be computed explicitly to illustrate the two-timescale dynamics; see \Cref{app:applications}.

\paragraph{Matrix completion.} This is a prototypical non-convex problem which is amenable to theoretical analysis. The goal is to recover a matrix $M^\star \in \R^{n\times m}$, which is assumed to be low-rank, from a subset of observed entries in $\Omega \subset [n]\times [m]$, by solving
\begin{equation}\label{eq:matrix_completion}
    \min_{U \in \R^{n\times r}, V \in \R^{m\times r}} F(U,V)= \textstyle\sum_{(i,j) \in \Omega} \left( (UV^\top)_{ij}- M^\star_{ij} \right)^2,
\end{equation}
where $r$ is the target rank. If the rank of the ground truth $M^\star$ is known, one can set $r$ accordingly. However, the true rank is often unknown. An alternative approach is to use \textit{overparameterisation} and choose $r$ much higher than needed. \textbf{Although in this case $F$ has many minimizers, our results indicate that the gradient flow trajectories \eqref{eq:GF} for small $\lambda$ tend to converge towards low-rank solutions}.

More precisely, we analyse the \textit{extreme overparameterised setting} when $r = m+n$. In \Cref{app:applications}, we show that the set $\cM^\star$ of stationary points of $F$ which are nonsingular matrices forms a manifold. Provided that unregularised gradient flow converges to a nonsingular point, we can apply our results locally. 
These results state that, in the second, slow phase of the dynamics, the trajectories minimise $\|U\|_F^2 + \|V\|_F^2$ on $\cM^\star$. Recall that for a given matrix $M \in \R^{n\times m} $, we have 
\[
    \Vert M \Vert_* = \min_{UV^\top = M} \frac{1}{2} ( \Vert U \Vert_F^2 + \Vert V \Vert_F^2 ),
\]
\looseness=-1
where $\|M\|_*$ is the nuclear norm \citep[Lemma 1]{Srebro2004MaxMargin}.
Since minimising the nuclear norm promotes low-rank solutions, this indicates a {drift toward low-rank matrices} during the slow phase of the dynamics. %This observation is supported by numerical experiments in \Cref{sec:examples}. 
%
% Let $\A:\mathbb{S}^{n}\rightarrow \R^m$ be a linear map on symmetric matrices with $m \leq n^2$ and $y \in \R^m$. For a given target rank $r \leq n$, the matrix sensing problem is
% \begin{equation}\label{eq:matrix_sensing}
%     \min_{W \in \R^{n\times r}} F(W) = \| \A(WW^\top) - y \|_2^2
% \end{equation}
% A typical example is matrix completion, where the goal is to recover an unknown matrix $M^* \in \R^{n \times m}$ from a subset of observed entries with coefficients in $\Omega \in \{1\dots n\}^2$: the objective function writes $F(W) = \sum_{(i,j) \in \Omega}\left( (WW^\top)_{ij} - M_{ij}^*\right)^2$. \Rnote{we can also choose the asymmetric form}
%
% Usually, one looks for a low-rank solution to the problem, by setting $r$ to a small value. Here, we choose to rather study the \textbf{overparametrised} setting where $r = n$. Our results imply that, even though we do not explicitely impose a low rank structure, the gradient flow trajectories $W^\lambda$ are driven towards a low-rank solution in the second phase of the dynamics.
%
% Similarly to the previous example, we show that, in the overparametrised setting, the critical set of $F$ is a manifold if we exclude some points with singularities.
%
In \Cref{app:applications}, we study the more general class of \textit{matrix sensing problems} and discuss an important technical subtlety: the set $\cM^\star$ forms a manifold only after excluding degenerate points. Handling those singularities is highly non-trivial and remains an open direction for future work.

\Cref{fig:MM} below empirically confirms this grokking for matrix completion. We here randomly generate a rank $3$ ground truth matrix $M^\star \in \mathbb{R}^{20 \times 20}$, with non-zero singular values $\sigma^\star_1, \sigma^\star_2, \sigma^\star_3$.  We randomly sample 50\% of the entries to define the observed entries $\Omega$.  We then perform gradient descent with weight decay parameter $\lambda = 10^{-3}$ and stepsize $\gamma = 10^{-2}$ on the loss $F(U, V)$ defined in \Cref{eq:matrix_completion} and where the weights $U, V \in \mathbb{R}^{20 \times 10}$ are initialised with i.i.d. standard Gaussian entries. We then track the training loss, the unmasked test loss $\Vert M^\star - UV^\top \Vert_F$,  the weight norms $\Vert w \Vert^2 = \Vert U \Vert_F^2 + \Vert V \Vert_F^2$, and singular values of the reconstruction matrix $UV^\top$. %To align with the continuous-time analysis, we do not plot these quantities against the number of gradient iterations. Instead, we use the rescaled "training time" $ t_k = \gamma k $, where $k$ is the number of gradient steps.
Additional experimental details can be found in \Cref{app:expe_details}.

\begin{figure}[htbp]
    \centering
    \includegraphics[width=.71\textwidth]{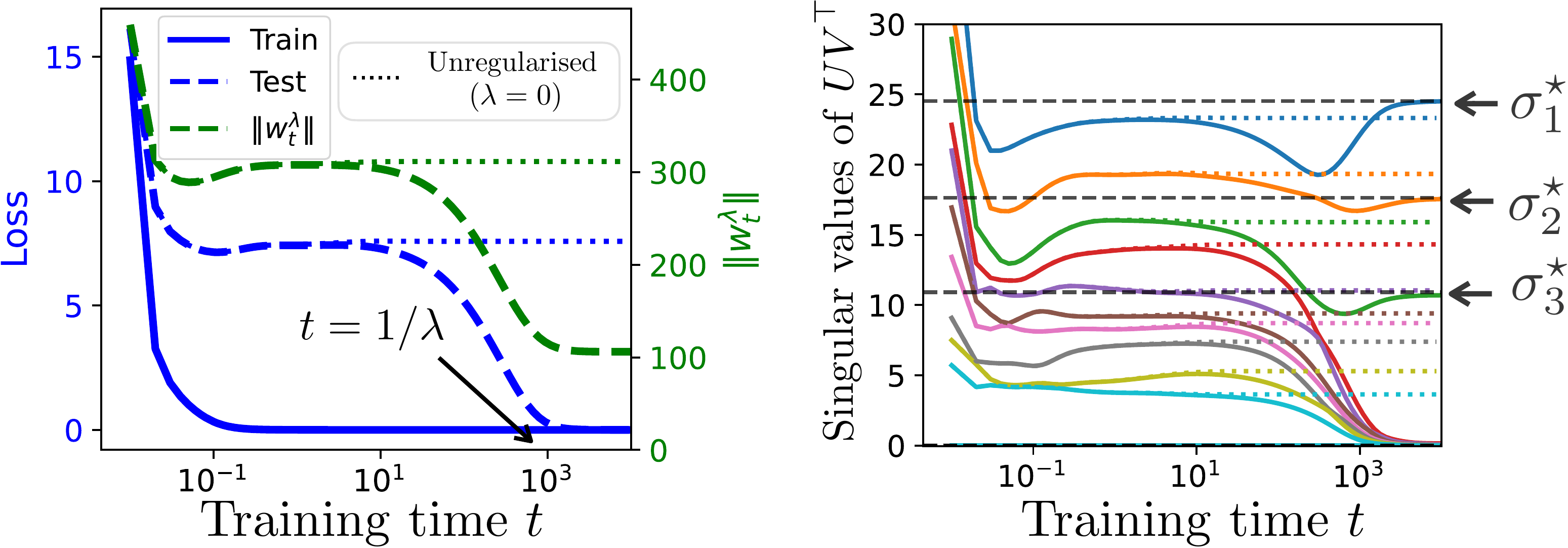}
    % \vspace{-4em}
    \caption{
Low-rank matrix completion. %problem with loss $F(U, V) = \sum_{i, j \in \Omega} (M^\star_{i, j} - (U V^\top)_{i,j})^2$, trained with gradient descent and small weight decay.
\textit{(Left)}: Grokking phenomenon: the training loss drops quickly to zero, while the test loss remains high for an extended period before eventually improving—coinciding with a decrease in the norm of the weights $\Vert w \Vert^2 = \Vert U \Vert_F^2 + \Vert V \Vert_F^2$. 
\textit{(Right)}: Singular values of $ UV^\top $ over time. Each line corresponds to the $i$-th singular value of $ UV^\top $. The singular values rapidly converge to large positive values at time $t \approx 1$. However, as grokking starts around time $t \approx 10^2$, all but three begin decay towards zero. The remaining three approach the true singular values $ \sigma^\star_1 $, $ \sigma^\star_2 $, and $ \sigma^\star_3 $.
    }
\label{fig:MM}
\end{figure}
\looseness=-1
\textit{Explaining the observed grokking phenomenon.} At time $t = 0$, the weights are randomly initialised and the training loss is high. Initially, the regularised and unregularised weights follow the same trajectory, and the training loss quickly drops to zero: the regularised iterates converge to the same solution as the unregularised gradient flow. This early solution has a high norm, large singular values, and poor generalisation performance. As training continues, around time $t = 1/\lambda$, the weight norms begin to decrease. By $t \approx 10^4$, the parameters have drifted to a new solution that still achieves zero training loss but has a much lower norm and actually coincides with the low rank ground truth matrix $M^\star$.
%This behaviour, through our result, can be understood via the fact that $\Vert M \Vert_* = \min_{UV^\top = M} \tfrac{1}{2} ( \Vert U \Vert_F^2 + \Vert V \Vert_F^2 )$, where $\Vert M \Vert_*$ denotes the nuclear norm of a rank-$r$ matrix $M$, equal to the sum of its singular values $\sum_{i=1}^r \sigma_i$. Thus, minimising the weight norms encourages a smaller nuclear norm of $UV^\top$, which acts as a convex surrogate for low rank. This explains why, as grokking occurs, all but three singular values of $UV^\top$ approach zero.

\begin{figure}[htbp]
    \centering
    \includegraphics[width=\textwidth]{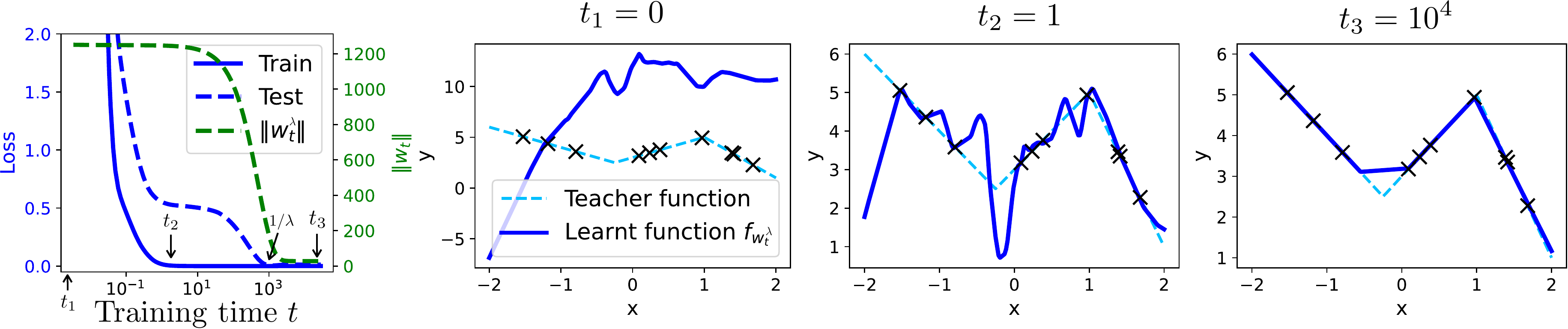}
    % \vspace{-4em}
\caption{
Two-layer ReLU network trained with gradient descent and small weight decay.  
\textit{(Left)}: Grokking phenomenon: the training loss drops quickly to zero, while the test loss remains high for an extended period before eventually improving—coinciding with a slow, steady decrease in the weight norm. 
\textit{(Right)}: Snapshots of the network's prediction function at various training times. The ground truth teacher function (a sum of three ReLUs) is shown in dotted light blue, and the training samples are shown as black crosses.
    \label{fig:relu}}
\end{figure}

\paragraph{Two-layer ReLU network.}\looseness=-1
Although our theoretical framework does not allow for non-smooth architectures such as ReLU networks, we illustrate in \Cref{fig:relu} that similar grokking dynamics can be observed in this case. 
We train a two-layer ReLU network of the form
$f_{w}(x) = \sum_{j=1}^m u_j \, \mathrm{ReLU}(v_j x + b_j),$
with weights $w = (u, v, b)$ where the outer layer is $u \in \R^m$, the inner weight $v \in \R^m$ and bias $b \in \R^m$. The teacher function $f$ is a sum of $3$ ReLUs and is represented in dotted light blue in \Cref{fig:relu}.
We generate a training dataset of $n = 10$ points by sampling $x_i$ uniformly in $[-2, 2]$ and computing $y_i = f(x_i)$. These training points are shown as black crosses in \Cref{fig:relu}. We train the student network with $m = 100$ by minimising the squared loss
$F(w) = \tfrac{1}{2n} \sum_{i=1}^n \left(f_{w}(x_i) - y_i\right)^2$
using gradient descent with weight decay $\lambda = 10^{-3}$ for $T = 10^6$ iterations and small step size. The initial weights are independently sampled from a Gaussian of variance~$4$. At each iteration, we record the train loss, the $\ell_2$-norm of the weights, as well as the test loss over a fixed test dataset (plotted \Cref{fig:relu}, left).

\textit{Explaining the observed grokking phenomenon.} 
At time $ t_1 = 0 $, the weights are randomly initialised and the training loss is high. By $ t_2 = 1$, the training loss has dropped to nearly zero, and the iterates closely approximate the solution that would be obtained by unregularised gradient flow, this solution does not have a low norm and generalises poorly. Subsequently, around time $ t = 1 / \lambda $, the weight norms begin to decrease, and by $ t_3 \approx 10^5 $, they have drifted to a zero training loss solution which has a much lower $\ell_2$-norm and which generalises much better. Such solutions are believed to have a small number of "kinks"~\citep{savarese2019infinite,parhi2021banach,boursier2023penalising}, as observed in \Cref{fig:relu} (far right plot).

\paragraph{Diagonal Linear Networks.}\looseness=-1 
We also study—both as an application of \Cref{thm:informal} and numerically—the architecture of \textit{diagonal neural networks} in \Cref{app:applications,app:DLN}, which serve as a toy problem for neural network training dynamics. In that case grokking promotes sparse estimators.

%\section{Discussion}\label{sec:discussion}
%\input{Sections/discussion.tex}

\vspace{-.5em}
\section{Conclusion}
\vspace{-.5em}
This work presents a rigorous and general optimisation-based description of the grokking phenomenon as a two-timescale process. In the fast initial phase, parameters evolve according to the unregularised flow until reaching a stationary manifold. In the slower second phase, they follow the Riemannian gradient flow of the norm constrained to this manifold. Grokking naturally emerges from a gradual simplicity bias: starting from a poorly generalising solution recovered by unregularised gradient flow, the slow phase driven by weight decay gradually simplifies the model by reducing its norm, ultimately leading to better generalisation.

% a transition from the NTK regime (first phase) to the rich regime (second phase).

% \Snote{j'ai enlevé NTK et rich car c'est restrictif je trouve}

While prior work has extensively analysed the first phase via the implicit bias of optimisation algorithms, the second phase—norm minimisation constrained to the interpolation manifold—has received little attention. Our framework highlights the critical role of this phase and motivates further study of optimisation dynamics on interpolation manifolds.

\looseness=-1
Large initialisations (NTK regime) are known to yield poor generalisation \citep{chizat2019lazy, liu2022omnigrok}, while small initialisations (rich regime) can lead to slow convergence or convergence to suboptimal solutions for the training loss~\citep{boursier2024early, boursier2024simplicity}. Grokking may offer a desirable compromise, achieving fast convergence to an interpolating solution while retaining strong generalisation.

Note that our analysis is derived in the asymptotic regime $\lambda \to 0$, since this allows for a tractable analysis. Extending the theory to a fixed $\lambda$ is considerably harder, that said, in \Cref{app:sec:heuristic_analysis} we offer a heuristic analysis regarding how small $\lambda$ needs to be for grokking to emerge. Also note that our analysis can easily be extended to other types of regularisations. In particular, we believe empirical observations reported for training with Sharpness-Aware Minimization \citep[][p. 7]{andriushchenko2022towards} may also be interpreted through the lens of grokking, albeit driven by SAM-style regularisation rather than standard weight decay. 
Lastly, our analysis focuses on regression settings with bounded dynamics. In classification tasks, by contrast, the stationary manifold lies ``at infinity'' once interpolation is achieved. Extending our approach to such settings remains an open and promising direction for future work, likely requiring techniques tailored to classification losses.

\begin{ack}
E. Boursier would like to extend special thanks to Ranko Lazic for his insightful discussions, which were instrumental in initiating this project. S. Pesme would like to thank P. Quinton for carefully reading the paper and providing valuable feedback. R. Dragomir is a chair holder from the Hi! Paris interdisciplinary research center composed of Institut Polytechnique de Paris (IP Paris) and HEC Paris.
\end{ack}

%\newpage 

\bibliographystyle{plainnat}
\bibliography{bio.bib}

%%%%%%%%%%%%%%%%%%%%%%%%%%%%%%%%%%%%%%%%%%%%%%%%%%%%%%%%%%%%

\clearpage

\appendix
%\input{Sections/neurips_checklist.tex}

%\clearpage
\addcontentsline{toc}{section}{Appendix} % Add the appendix text to the document TOC
\part{Appendix} % Start the appendix part
%\ptcrule
\parttoc % Insert the appendix TOC

\section{Main proofs}
\subsection{Proof of \Cref{prop:phase1conv}}

\fastphaseconv*

\begin{proof}
We first need to restrict the dynamics of $(w^\lambda(t))$ to some compact of $\R^d$. Without loss of generality we can inflate the compact in \cref{ass:Fconditions}, so that we can assume there is some compact $K$ of $\R^d$ such that for all $t\geq 0$, $B(\wgf(t),1)\subset K$, where $B(\wgf(t),1)$ is the ball of radius $1$ centered at $\wgf(t)$. We also define for any $\lambda>0$, $T_\lambda = \inf \{t\in\R_+ \mid \wl(t)\not\in K \}$.

Thanks to the continuity of $\nabla^2 F$, $\nabla F$ is $c$-Lipschitz on $K$, i.e.,
    \begin{equation*}
        \|\nabla F (w) - \nabla F(w')\| \leq c \|w-w'\| \qquad \text{for any }w,w' \in K.
    \end{equation*}
    We then derive the following inequalities for all $t\in [0,T_\lambda)$, 
    \begin{align*}
        \left\|w^\lambda(t) - w^{GF}(t) \right\| & =  \left\|\int_{0}^t \dot{w}^\lambda(s) - \dot{w}^{GF}(s) \dd s\right\|\\
        & = \left\| \int_{0}^t \nabla F(w^{GF}(s)) - \nabla F(w^\lambda(s)) - \lambda w^\lambda(s) \dd s \right\|\\
        & \leq \int_{0}^t \left\| \nabla F(w^{GF}(s)) - \nabla F(w^\lambda(s)) \right\| \dd s + \lambda  \int_{0}^t \|w^\lambda(s)\| \dd s \\
        & \leq c \int_{0}^t \left\|w^\lambda(s) - w^{GF}(s) \right\| \dd s + \lambda t \sup_{w\in K}\|w\|.
    \end{align*}
    The integral form of Gr\"onwall inequality\footnote{The more classical form of Gr\"onwall inequality cannot be directly used here, since $\left\|w^\lambda(t) - w^{GF}(t) \right\|$ might be non-differentiable at some points.} -- also called Gr\"onwall-Bellman inequality -- then leads to the following inequality for any $t\in [0,T_\lambda)$:
    \begin{equation}\label{eq:gronwall}
        \left\|w^\lambda(t) - w^{GF}(t) \right\| \leq \lambda t e^{ct} \sup_{w\in K}\|w\|.
    \end{equation}
    In particular for a fixed $T\in\R_+$, there is a $\lambda^\star>0$ small enough such that for any $\lambda\leq\lambda^\star$, 
\begin{equation*}
        \forall t \in [0,T], \lambda t e^{ct} \sup_{w\in K}\|w\| < 1.
    \end{equation*}
    Given the definition of $T_\lambda$, \Cref{eq:gronwall} and the fact that $\bigcup_{t\in\R_+}B(\wgf(t),1)\subset K$, this then implies that for any $\lambda\leq\lambda^\star$, $T_\lambda>T$. In particular, for any $\lambda\leq \lambda^\star$, \Cref{eq:gronwall} becomes
        \begin{equation*}
        \sup_{t\in[0,T]}\left\|w^\lambda(t) - w^{GF}(t) \right\| \leq \lambda T e^{cT} \sup_{w\in K}\|w\|.
    \end{equation*}
        \Cref{prop:phase1conv} then directly follows.
    \end{proof}

    \subsection{Junction between fast and slow dynamics}\label{sec:junction}
In the limit $\lambda\to 0$, the whole fast dynamics described by \Cref{prop:phase1conv} is crushed into the $t=0$ point of the slow timescale. While \Cref{sec:fast,sec:slow} respectively provide descriptions of the fast and slow timescale dynamics, one needs to control the junction of these two dynamics. This junction is made possible by \Cref{lemma:junction} below, as well as a timeshift argument in the proof of \Cref{prop:grokking}.
\begin{restatable}{lemma}{junction}\label{lemma:junction}
        If \cref{ass:Fconditions} holds, there exists a function $t(\lambda)$ such that $\lim_{\lambda\to 0} t(\lambda)=0$ and 
    \begin{equation*}
        \lim_{\lambda\to 0}\twl(t(\lambda))=\lim_{t\to\infty} \wgf(t).
    \end{equation*}
\end{restatable}
\Cref{lemma:junction} states that for a well chosen timepoint $t(\lambda)$, which is of order $-\lambda\ln(\lambda)$, the slow dynamics solution $\twl$ will be close to the limit of the unregularised flow at that timepoint. This result will then be key in showing that $\lim_{\lambda\to 0}\twl$ admits a right limit in $0$, which is given by $\lim_{t\to\infty} \wgf(t)$. 

Note that this order $-\lambda\ln(\lambda)$ for $t(\lambda)$ is necessary: a smaller value of $t(\lambda)$ would not allow enough time for the flow to approach the limit of the unregularised gradient flow; while a larger value would correspond to a time where the regularised flow significantly drifted from the unregularised one.

This time $-\lambda\ln(\lambda)$ can indeed be seen, in the fast timescale, as the point where the dynamics transition from mimicking the unregularised flow, to the slow dynamics that minimises the regularisation term within some manifold.
\begin{proof}
    \Cref{eq:gronwall} in the proof of \Cref{prop:phase1conv} yields that for any $t\in[0,T_\lambda]$, $\left\|w^\lambda(t) - w^{GF}(t) \right\|\leq \lambda t e^{ct} \sup_{w\in K}\|w\|$. 
    We can then define $t(\lambda)=\frac{-\lambda \ln \lambda}{2c}>0$ and observe that for any $t\leq \min(T_\lambda, \frac{t(\lambda)}{\lambda})$,
    \begin{align*}
        \|\wl(t)-w^{GF}(t)\|& \leq  t(\lambda)e^{c\frac{t(\lambda)}{\lambda}} \sup_{w\in K}\|w\|\\
        & = \frac{-\sqrt{\lambda}\ln(\lambda)}{2c}\sup_{w\in K}\|w\|.
    \end{align*}
Since $\sqrt{\lambda}\ln(\lambda)\underset{\lambda \to 0}{\longrightarrow}0$, we have for $\lambda$ small enough that the above term is smaller than $1$. In particular for $\lambda$ small enough, $T_\lambda\geq \frac{t(\lambda)}{\lambda}$. From there, \Cref{eq:gronwall}  yields at $\frac{t(\lambda)}{\lambda}$ for any small enough $\lambda>0$
    \begin{align*}
        \|\tw^\lambda(t(\lambda))-\lim_{t\to\infty}w^{GF}(t)\| &\leq \|\tw^\lambda(t(\lambda))-w^{GF}(\frac{t(\lambda)}{\lambda})\| + \|w^{GF}(\frac{t(\lambda)}{\lambda})-\lim_{t\to\infty}w^{GF}(t)\|\\
        & \leq  \frac{-\sqrt{\lambda}\ln(\lambda)}{2c}\sup_{w\in K}\|w\| + \|w^{GF}(\frac{t(\lambda)}{\lambda})-\lim_{t\to\infty}w^{GF}(t)\|.
    \end{align*}
    Now note that our choice of $t(\lambda)$ is such that both the first and second term in the last inequality converge to $0$ --- indeed, $t(\lambda) e^{c\frac{t(\lambda)}{\lambda}} = \frac{-\sqrt{\lambda}\ln(\lambda)}{2c} \underset{\lambda \to 0}{\longrightarrow}0$ and $\frac{t(\lambda)}{\lambda}\underset{\lambda \to 0}{\longrightarrow}+\infty$ --- which concludes the proof.
\end{proof}

\subsection{Proof of \Cref{prop:grokking}}\label{app:proofgrokking}

The main element to prove \Cref{prop:grokking} is \Cref{lemma:det_katzen} below. We first need to state another auxiliary lemma, given by \Cref{lemma:confined} below, and proven in \Cref{app:proofconfined}.

\begin{restatable}{lemma}{confined}\label{lemma:confined}
    Consider \Cref{ass:Fconditions,ass:manifold}. Let $(u^\lambda)_{\lambda>0}$ be a family of solutions of the following ODE for any $\lambda>0$ and $t\geq 0$,
    \begin{equation*}
        \dot{u}^\lambda(t) = -u^\lambda(t) -\frac{1}{\lambda}\nabla F(u^\lambda(t)).
    \end{equation*}
    If also $u^\lambda(0)\to u_0\in\cM$, then there exists a neighbourhood $U$ of $\cM$ such that $\Phi$ is $\cC^2$ on $U$ and for every $\varepsilon>0$, there exists a family of neighbourhoods $(U_\varepsilon)_{\varepsilon>0}$ of $\cM$ such that 
    \begin{enumerate}
        \item $U_\varepsilon \subseteq U_{\varepsilon'}\subset U$ for any $\varepsilon<\varepsilon'$;
        \item there exists $\lambda(\varepsilon)>0$ such that for any $\lambda\leq\lambda(\varepsilon)$, the trajectory $(u^\lambda(t))_{t\geq 0}$ is contained in $U_{\varepsilon}$;
        \item $\bigcap_{\varepsilon>0} U_\varepsilon = \cM$.
    \end{enumerate}
\end{restatable}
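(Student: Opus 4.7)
The idea is to build a Lyapunov function from $V(u) := F(u) - F_\cM$, where $F_\cM$ is the (constant) value of $F$ on the connected component of $\cM$ containing $u_0$, and to exploit the Polyak-\L ojasiewicz inequality that Assumption \ref{ass:manifold} (i.e. the Morse-Bott property) guarantees locally around $\cM$. This yields a neighbourhood $U$ of $\cM$ on which $\Phi$ is $\cC^2$ (via \Cref{lemma:falconer}) and constants $\mu, c_1, c_2 > 0$ such that, on $U$, $\|\nabla F(w)\|^2 \geq 2\mu\, V(w)$ and $c_1\, d(w,\cM)^2 \leq V(w) \leq c_2\, d(w,\cM)^2$. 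The natural candidate for the nested family is then $U_\varepsilon := \{ w \in U : d(w,\cM) < \varepsilon \}$, for which properties 1 and 3 are immediate.

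\textbf{Main Lyapunov step.} First I would differentiate $V(u^\lambda(t))$ along the flow to get $\dot V = -\langle u^\lambda, \nabla F(u^\lambda)\rangle - \tfrac{1}{\lambda}\|\nabla F(u^\lambda)\|^2$, apply Young's inequality to the cross term, and invoke the PL bound. On any window where $u^\lambda$ remains in $U$ with $\|u^\lambda\| \leq R$, this produces a differential inequality of the form $\dot V \leq -\tfrac{\mu}{\lambda} V + \tfrac{\lambda R^2}{2}$. A Grönwall argument then gives $V(u^\lambda(t)) \leq V(u^\lambda(0))\,e^{-\mu t / \lambda} + \lambda^2 R^2 / (2\mu)$, which vanishes uniformly in $t \geq 0$ as $\lambda \to 0$: indeed $u^\lambda(0) \to u_0 \in \cM$ implies $V(u^\lambda(0)) \to 0$. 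Combined with the lower bound $c_1 d(\cdot,\cM)^2 \leq V$, this will give the desired uniform control $d(u^\lambda(t), \cM) < \varepsilon$ for all $\lambda$ small enough.

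\textbf{Bootstrap.} To turn this a priori estimate into actual invariance, I would run a first-exit-time argument: set $T^\lambda := \inf\{ t \geq 0 : u^\lambda(t) \notin U_\varepsilon \}$, check that the Lyapunov inequality above holds on $[0, T^\lambda)$, and derive a contradiction from $T^\lambda < \infty$ (the quadratic lower bound prevents $u^\lambda$ from reaching the boundary of the $\varepsilon$-tube). A parallel control on $\|u^\lambda\|$ is needed to fix $R$; it will come from $\tfrac{1}{2}\tfrac{d}{dt}\|u^\lambda\|^2 = -\|u^\lambda\|^2 - \tfrac{1}{\lambda}\langle u^\lambda, \nabla F(u^\lambda)\rangle$ together with $\|\nabla F(u^\lambda)\| = O(d(u^\lambda, \cM))$ inside $U_\varepsilon$, which for small $\varepsilon$ keeps $\|u^\lambda\|$ close to $\|u^\lambda(0)\|$ and hence bounded.

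\textbf{Expected obstacle.} The delicate point is the bootstrap itself combined with the fact that $\cM$ may be non-compact: the PL constant $\mu$ and the ambient bound $R$ are only uniform on compact subsets of $\cM$, so one has to argue that the slow drift of $u^\lambda$ along $\cM$ remains in a fixed compact piece. This should work out because the weight-decay term prevents $\|u^\lambda\|$ from growing (up to $O(\lambda)$ corrections controlled in the tube), but the argument must be orchestrated to avoid circularity between the containment claim and the boundedness claim, typically by shrinking $\varepsilon$ first and then $\lambda$.
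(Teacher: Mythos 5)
Your overall architecture (a PL--Lyapunov estimate on $V=F-F^\star$ combined with a first-exit-time bootstrap on distance tubes) is a legitimate alternative to the paper's route, which instead takes $U_\varepsilon=\{w\in U \mid F(w)<F^\star+\varepsilon\}$ and obtains containment almost for free from the monotone decrease of $F_\lambda$ along the (time-rescaled) flow, once two ingredients are in place: a compact $K$ containing all trajectories uniformly in $\lambda$ (\Cref{lemma:bounded}) and the strict inequality $F>F^\star$ on $U\setminus\cM$ (\Cref{lemma:minimal}). Your differential inequality $\dot V\le-\tfrac{\mu}{\lambda}V+\tfrac{\lambda R^2}{2}$ and the ensuing Grönwall bound are correct on any window where the trajectory stays in $U$ with $\|u^\lambda\|\le R$.

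The genuine gap is the step that is supposed to supply $R$. From $\tfrac12\tfrac{\dd}{\dd t}\|u^\lambda\|^2=-\|u^\lambda\|^2-\tfrac1\lambda\langle u^\lambda,\nabla F(u^\lambda)\rangle$ together with $\|\nabla F(u^\lambda)\|=O(d(u^\lambda,\cM))$ you only get $\tfrac1\lambda\|\nabla F(u^\lambda)\|=O(\varepsilon/\lambda)$, which blows up as $\lambda\to0$ for fixed $\varepsilon$; and even if you feed back your own estimate $V\lesssim\lambda^2R^2/(2\mu)$ after the transient, so that $\tfrac1\lambda\|\nabla F(u^\lambda)\|\lesssim\beta R/\sqrt{2\mu}$, the resulting bound $\|u^\lambda(t)\|\lesssim\|u^\lambda(0)\|+\beta R/\sqrt{2\mu}$ closes only under the unwarranted condition $\beta<\sqrt{2\mu}$ relating the error-bound and PL constants. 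This is exactly the circularity you flag at the end, but it is not resolved by ordering the limits in $\varepsilon$ and $\lambda$. The paper breaks it with a different device: since $F\ge F^\star$ on $U$ (\Cref{lemma:minimal}), monotonicity of $F_\lambda$ gives $\tfrac\lambda2\|u^\lambda(t)\|^2\le F_\lambda(u^\lambda(0))-F^\star$, and a short PL/Grönwall transient argument shows the right-hand side becomes $O(\lambda)$ after rescaled time of order $-\lambda\ln\lambda$, yielding a $\lambda$-independent bound on $\|u^\lambda\|$ with no constraint on the constants (this is the content of \Cref{lemma:bounded}). Some version of this is needed before your Lyapunov step is licensed; relatedly, without the compact $K$ you also cannot exclude the trajectory leaving $U_\varepsilon$ through $\partial U$ at a point where $d(\cdot,\cM)$ is still small, since for a non-compact $\cM$ the neighbourhood $U$ need not contain a tube of uniform width.
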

Note that the existence of a neighbourhood $U$ in \Cref{lemma:confined} is guaranteed by \Cref{lemma:falconer}. We can now state our key lemma.
\begin{lemma}\label{lemma:det_katzen}
    Consider \Cref{ass:Fconditions,ass:manifold}. Let $(u^\lambda)_{\lambda>0}$ be a family of solutions of the following ODE for any $\lambda>0$ and $t\geq 0$,
    \begin{equation*}
        \dot{u}^\lambda(t) = -u^\lambda(t) -\frac{1}{\lambda}\nabla F(u^\lambda(t)).
    \end{equation*}
    If also $u^\lambda(0)\to u_0\in\cM$, then $u^\lambda$ converges uniformly, as $\lambda\to 0$, on any interval of the form $[0,T]$ to the function $u$ defined as the solution of the following ODE:
    \begin{gather*}
        u(0)=u_0\\
        \dot{u}(t) = -D\Phi_{u(t)}(u(t)).
    \end{gather*}
\end{lemma}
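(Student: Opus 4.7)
The plan is to pass from $u^\lambda$ to its projection $\Phi(u^\lambda)$ along the fast flow, derive a closed ODE for $\Phi(u^\lambda)$ that almost matches the limiting dynamics, and then close the argument with a Grönwall estimate.

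First, Lemma \ref{lemma:confined} (together with Lemma \ref{lemma:falconer}) guarantees that for $\lambda$ small enough the trajectory $u^\lambda$ remains in the open neighbourhood $U$ on which $\Phi$ is $\cC^2$. Applying the chain rule and substituting the ODE gives
\begin{equation*}
    \tfrac{d}{dt}\Phi(u^\lambda(t)) = -D\Phi_{u^\lambda(t)}\bigl(u^\lambda(t)\bigr) - \tfrac{1}{\lambda}\, D\Phi_{u^\lambda(t)}\bigl(\nabla F(u^\lambda(t))\bigr).
\end{equation*}
The identity $D\Phi_w \cdot \nabla F(w) = 0$ on $U$ (Lemma C.2 of \citet{li2021happens}) kills the $1/\lambda$ term, leaving $\tfrac{d}{dt}\Phi(u^\lambda(t)) = -D\Phi_{u^\lambda(t)}(u^\lambda(t))$, which is the target ODE but evaluated at $u^\lambda$ rather than at $\Phi(u^\lambda)$.

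Second, I would quantify the gap $\|u^\lambda(t) - \Phi(u^\lambda(t))\|$. By Lemma \ref{lemma:confined}, for any $\varepsilon>0$ the trajectory lies in $U_\varepsilon$ once $\lambda \leq \lambda(\varepsilon)$; combined with $\bigcap_\varepsilon U_\varepsilon = \cM$ and the fact that the continuous map $\Phi - \mathrm{Id}$ vanishes on $\cM$, a uniform continuity argument on a bounded compact subset of $U$ (to which the trajectory is confined, again by Lemma \ref{lemma:confined}) yields $\sup_{s\in[0,T]}\|u^\lambda(s) - \Phi(u^\lambda(s))\| \underset{\lambda \to 0}{\longrightarrow} 0$. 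For the comparison with $u$, note that $\Phi(u^\lambda(0)) \to \Phi(u_0) = u_0 = u(0)$ since $\Phi$ fixes $\cM$. Integrating both ODEs and using the Lipschitz regularity of $w \mapsto D\Phi_w(w)$ on a compact neighbourhood of $\cM$, together with the split $\|u^\lambda(s) - u(s)\| \leq \|u^\lambda(s) - \Phi(u^\lambda(s))\| + \|\Phi(u^\lambda(s)) - u(s)\|$, I obtain an inequality of the form
\begin{equation*}
    \|\Phi(u^\lambda(t)) - u(t)\| \leq \alpha(\lambda) + L\int_0^t \|\Phi(u^\lambda(s)) - u(s)\|\, ds,
\end{equation*}
with $\alpha(\lambda) \to 0$ absorbing both the initial-condition error and the uniform gap from Step 2. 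Grönwall's lemma then turns this into uniform smallness of $\|\Phi(u^\lambda) - u\|$ on $[0,T]$, and a final triangle inequality transfers the conclusion back to $u^\lambda$.

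The main obstacle is the second step: ensuring that $u^\lambda$ remains in a tubular neighbourhood of $\cM$ that shrinks as $\lambda \to 0$, uniformly over $[0,T]$ and uniformly in $\lambda$. This is precisely the content of Lemma \ref{lemma:confined}, and it crucially exploits the Morse--Bott/\L ojasiewicz structure of Assumption \ref{ass:manifold}: the local quadratic growth of $F$ in the normal directions to $\cM$ produces, via the $\frac{1}{\lambda}\nabla F$ term, a contractive restoring force strong enough to defeat the $\mathcal{O}(1)$ outward drift coming from $-u^\lambda$.
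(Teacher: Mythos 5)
Your proposal is correct and follows essentially the same route as the paper's proof: confinement of the trajectory via Lemma~\ref{lemma:confined}, the chain rule combined with $D\Phi_w\cdot\nabla F(w)=0$ to obtain the closed ODE for $\Phi(u^\lambda)$, uniform vanishing of the gap $\|u^\lambda-\Phi(u^\lambda)\|$ from the shrinking neighbourhoods $U_{\varepsilon(\lambda)}$, and a Gr\"onwall estimate to close the comparison. The only cosmetic difference is that the paper runs Gr\"onwall directly on $\|u^\lambda(t)-u(t)\|$ with the gap absorbed into the additive error, whereas you run it on $\|\Phi(u^\lambda(t))-u(t)\|$ and finish with a triangle inequality; the two are equivalent.
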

\begin{proof}
First consider a neighbourhood $U$ of $\cM$ such that $\Phi$ is $\cC^2$ on $U$ and \Cref{lemma:confined} holds. From there thanks to \Cref{lemma:confined}, we can assume that $\lambda>0$ is chosen small enough so that $u^\lambda(t) \in U$ for any $t\in\R_+$. 
     We can now compute the time derivative of $\Phi(u^\lambda(t))$, using the chain rule for any $t\in\R_+$ and $\lambda$ small enough:
    \begin{equation*}
        \dot{\Phi}(u^\lambda(t)) = -D\Phi_{u^\lambda(t)}\cdot\left( u^\lambda(t) + \frac{1}{\lambda}\nabla F(u^\lambda(t))\right).
    \end{equation*}
    Then using the fact that for any $w\in U$, $D\Phi(w)\cdot \nabla F(w)=0$ \citep[][Lemma C.2]{li2021happens}, 
       \begin{align}
        \dot{\Phi}(u^\lambda(t)) & = -D\Phi_{u^\lambda(t)}\cdot u^\lambda(t) \label{eq:ODEPhi}.
    \end{align}
    It now remains to show that $u^\lambda(t)\to\cM$ as $\lambda\to 0$, to guarantee that $\Phi(u^\lambda(t))$ and $u^\lambda(t)$ have the same limit, which will be done using \Cref{lemma:confined}.

\medskip

Thanks to \Cref{lemma:confined}, we can consider a family of neighbourhoods $(U_\varepsilon)_{\varepsilon>0}$ of $\cM$ and a function $\lambda:\R_+^*\to \R_+^*$ satisfying \Cref{lemma:confined}. As we can always take a smaller choice for any value $\lambda(\varepsilon)$, we can also choose the function $\lambda$ so that
\begin{itemize}
    \item it is non-decreasing;
    \item $\lim_{\varepsilon\to 0}\lambda(\varepsilon)=0$.
\end{itemize}
For the remaining of proof, define the function $H:w\mapsto -D\Phi_w\cdot w$ and take $\varepsilon(\lambda) =\inf \{\varepsilon>0 \mid \lambda\leq\lambda(\varepsilon)\}$. We consider in the following $\lambda$ small enough so that $\varepsilon(\lambda)$ is defined and finite. 
Since $\lim_{\varepsilon\to 0}\lambda(\varepsilon)=0$, $\varepsilon(\lambda)>0$ for any $\lambda>0$. The function $\varepsilon$ is non-increasing, so it admits a limit at $0$. 
Moreover for any $\delta>0$, $\varepsilon(\lambda(\delta))\leq \delta$ by definition, so that  $\lim_{\lambda\to 0}\varepsilon(\lambda)=0$.

\medskip

Thanks to \Cref{lemma:confined}, $(u^\lambda(t))_{t\geq 0}$ is contained in $U_{\varepsilon(\lambda)}$. By monotonicity of $\|u\|_2$, the trajectory $(u(t))_{t\geq 0}$ is bounded. Moreover, the trajectory of $u^\lambda(t)$ is also bounded independently of $\lambda$ thanks to \Cref{lemma:bounded}. We can thus consider a compact $K$ of $\R^d$ such that for any $t\geq 0$ and $\lambda>0$, $u(t)\in K$ and $u(t)\in K$.

Recall that $\Phi$ is the identity function on $\cM$ and is $\cC^2$ on $U$. In consequence, we have\footnote{This is a direct consequence of the fact that $\Phi$ is the identity on $\cM$, locally Lipschitz and $\lim_{\lambda\to 0}\varepsilon(\lambda)=0$.} $\sup_{w\in K\cap U_{\varepsilon(\lambda)}} \|\Phi(w)-w\|_2\underset{\lambda\to 0}{\longrightarrow}0$.

Summing over \Cref{eq:ODEPhi} yields for any $t\geq 0$:
\begin{align*}
    u^\lambda(t) = \int_{0}^t H(u^\lambda(s))\dd s + u^\lambda(t) - \Phi(u^\lambda(t)) + \Phi(u^\lambda(0)) .
\end{align*}
Since $\Phi$ is $\cC^2$ on $U$, $H$ is $c$-Lipschitz on $K$, for some $c>0$. A comparison with $u$ then yields for any $t\geq 0$
\begin{align*}
    \|u^\lambda(t) - u(t)\| &\leq \int_{0}^t \|H(u^\lambda(s)) -H(u(s)) \|\dd s + \|u^\lambda(t) - \Phi(u^\lambda(t))\| + \|\Phi(u^\lambda(0)) - u_0\|\\
    & \leq c\int_{0}^t \|u^\lambda(s) -u(s) \|\dd s + \sup_{w\in K\cap U_{\varepsilon(\lambda)}} \|\Phi(w)-w\| + \|\Phi(u^\lambda(0)) - u_0\|.
\end{align*}
Similarly to the proof of \Cref{prop:phase1conv}, an integral form of Gr\"onwall inequality yields for any $t\geq 0$
\begin{align}\label{eq:gronwallmanifold}
    \|u^\lambda(t) - u(t)\| \leq \left(\sup_{w\in K\cap U_{\varepsilon(\lambda)}} \|\Phi(w)-w\| + \|\Phi(u^\lambda(0)) - u_0\|\right) e^{ct}.
    \end{align}
    Noting that the multiplicative term $\sup_{w\in K\cap U_{\varepsilon(\lambda)}} \|\Phi(w)-w\| + \|\Phi(u^\lambda(0)) - u_0\|$ goes to $0$ as $\lambda$ goes to $0$ allows to conclude on the uniform convergence of $u^\lambda$ to $u$ on $[0,T]$.
\end{proof}

\grokking*

\begin{proof}
 Consider the shifted slow dynamics $\tvl$ for any $t\geq 0$ as $\tvl(t) = \twl(t+t(\lambda))$ with $t(\lambda)$ given by \Cref{lemma:junction}. Using \Cref{lemma:junction}, $\tvl$ then follows the following ODE:
\begin{equation*}
     \dot{\tilde{v}}^\lambda(t) = - \tvl(t) -\frac{1}{\lambda} \nabla F(\tvl(t)),
\end{equation*}
with an initial condition satisfying $\lim_{\lambda\to 0}\tvl(0) = \Phi(w_0)$. 

We can then direct apply \Cref{lemma:det_katzen} above on $\tvl$, which yields that $\tvl$ converges uniformly on any interval of the form $[0,T]$ to $\tw^\circ$.

\Cref{prop:grokking} is then obtained by observing that $\lim_{\lambda\to 0} t(\lambda)=0$, so that for any $\varepsilon>0$ and $\lambda$ small enough such that $t(\lambda)\leq\varepsilon$, it holds for any $t\in[\varepsilon,T]$
\begin{align*}
    \|\twl(t) - \tw^\circ(t)\| & =  \|\tvl(t-t(\lambda)) - \tw^\circ(t)\| \\
    &\leq  \| \tvl(t-t(\lambda)) - \tw^\circ(t-t(\lambda))\| + \|\tw^\circ(t)-\tw^\circ(t-t(\lambda))\|.
\end{align*}
The first term converges to $0$ uniformly for $t\in[\varepsilon,T]$ by uniform convergence of $\tvl$ towards $\tw^\circ$; and the second term also goes uniformly to $0$ by (uniform) continuity of $\tw^\circ$ on the considered interval.
\end{proof}

\subsection{Proof of \Cref{coro:KKT}}

\KKT*

\begin{proof}
    By definition \citep[see e.g.,][]{bergmann2019intrinsic}, the KKT points of \Cref{eq:KKT} are the points $w^\star\in\cM$ satisfying
    \begin{equation*}
        \grad_{\cM}\ell_2(w^\star)=0.
    \end{equation*}
    Since $\grad_{\cM} \ell_2 = P_{\mathrm{Ker}(\nabla^2 F(w^\star))}$, KKT points of \Cref{eq:KKT} are the points $w^\star\in\cM$ satisfying
    \begin{equation}\label{eq:KKT2}
        w^\star \in \mathrm{Ker}(\nabla^2 F(w^\star))^\perp.
    \end{equation}

        Thanks to \Cref{lemma:bounded}, the trajectories $(\wl(t))_{t\geq 0}$ are all bounded and $\wl_{\infty} \coloneqq \lim_{t\to\infty}\wl(t)$ exists for any $\lambda>0$. In particular, this limit is a stationary point of the regularised loss $F_\lambda$, i.e.,
    \begin{equation*}
        \nabla F(\wl_{\infty}) + \lambda \wl_{\infty} =0.
    \end{equation*}
    In particular, $\wl_{\infty} = -\frac{1}{\lambda}\nabla F(\wl_{\infty})$.
    
    Let $(\lambda_k)_{k\in\N}$ be a sequence in $\R_+^*$ such that $\lambda_k \underset{k\to\infty}{\longrightarrow}0$. Let $w^\star$ be a limit point of the sequence $(w^{\lambda_k}_{\infty})_k$. Thanks to \Cref{lemma:confined}, $w^\star\in\cM$. Moreover, the equality $w^{\lambda_k}_{\infty} = -\frac{1}{\lambda_k}\nabla F(w^{\lambda_k}_{\infty})$ first implies that $\|\nabla F(w^{\lambda_k}_\infty)\| = \bigO{\lambda_k}$. \citet[][Proposition 2.8]{rebjock2024fast} then also implies that 
    \begin{equation*}
          d(w^{\lambda_k}_\infty, \cM)= \bigO{\lambda_k}.  
    \end{equation*}
    
    Moreover, noting $w_k \in \argmin_{w\in\cM} \|w^{\lambda_k}_\infty - w\|$, a Taylor expansion yields
    \begin{align*}
        \frac{1}{\lambda_k}\nabla F(w^{\lambda_k}_\infty) & =  \frac{1}{\lambda_k}\nabla F(w_k) + \frac{1}{\lambda_k}\nabla^2 F(w_k) (w^{\lambda_k}_\infty-w_k) + o(\frac{\|w^{\lambda_k}-w_k\|}{\lambda_k}) \\
        & =  \frac{1}{\lambda_k}\nabla^2 F(w_k) (w^{\lambda_k}_\infty-w_k) + o(\frac{d(w^{\lambda_k}_\infty, \cM)}{\lambda_k})\\
        & = \frac{1}{\lambda_k}\nabla^2 F(w_k) (w^{\lambda_k}_\infty-w_k) + o(1).
        \end{align*}
    The equality $w^{\lambda_k}_{\infty} = -\frac{1}{\lambda_k}\nabla F(w^{\lambda_k}_{\infty})$ then implies for the subsequence $k_n$ associated to the limit point $w^\star$ that
    \begin{equation}\label{eq:KKTlim1}
        - \lim_{n\to\infty} \nabla^2 F(w_{k_n}) \frac{w^{\lambda_{k_n}}_\infty-w_{k_n}}{\lambda_{k_n}} = w^\star.
    \end{equation}
Let $u_k = P_{\mathrm{Ker}(\nabla^2 F(w_k))^\perp}\frac{w^{\lambda_k}_\infty-w_k}{\lambda_k}$. Note that 
\begin{gather*}
    \nabla^2 F(w_k) \frac{w^{\lambda_k}_\infty-w_k}{\lambda_k} = \nabla^2 F(w_k) u_k\\
    \text{and}\quad \|\nabla^2 F(w_k) u_k\| \geq \eta\|u_k\|,
\end{gather*}
thanks to \Cref{ass:manifold}. In consequence, $(u_k)_k$ is bounded. In particular, it admits an adherence point $u_\infty \in \R^d$.

Since $w_{k_n} \underset{n\to\infty}{\rightarrow} w^\star$ and $\nabla^2 F$ is continuous, $\|\nabla^2 F(w_{k_n}) - \nabla^2 F(w^\star)\| \underset{n\to\infty}{\rightarrow}0$. So that \Cref{eq:KKTlim1} becomes
\begin{equation*}
            - \nabla^2 F(w^\star) u_\infty = w^\star.
\end{equation*}
In particular, it yields that $w^\star\in \mathrm{Im}(\nabla^2 F(w^\star))$. By symmetry of the Hessian, $\mathrm{Im}(\nabla^2 F(w^\star))=\mathrm{Ker}(\nabla^2 F(w^\star))^\perp$ so that $w^\star\in \mathrm{Ker}(\nabla^2 F(w^\star))^\perp$, i.e., it satisfies the KKT conditions of \Cref{eq:KKT}.
\end{proof}

\subsection{Proof of \Cref{prop:convergence}}

\convergence*

\begin{proof}
    For this proof, denote $w^\star = \lim_{t\to\infty}\tw^\circ(t)$ and $F^\star = F(w^\star)$. $w^\star$ is a strict local minimum of the Euclidean norm on $\cM$. Moreover using the Morse Bott property \citep[\Cref{ass:manifold} and see][]{rebjock2024fast}, we can consider an arbitrarily small $\delta>0$ such that the following conditions simultaneously hold in $B(0,\delta)$ for some $\beta>0$:
    \begin{gather}
        \forall w\in \cM\cap B(w^\star,2\delta), w\neq w^\star \implies \|w^\star\|^2<\|w\|^2,\notag\\
         \forall w\in B(w^\star,\delta), F(w)- F^\star \geq \frac{\eta}{4}d(w, \cM)^2, \label{eq:QG}\\
         \forall w\in B(w^\star,\delta), \beta(F(w)-F^\star)\geq \|\nabla F(w)\|_2^2 \geq \eta (F(w) - F^\star).\notag
    \end{gather}
    First observe that the strict minimality assumption implies, through \Cref{lemma:minimality}, that there exists $\varepsilon_0>0$ (independent of $\lambda$) such that for a small enough $\lambda>0$
    \begin{equation}\label{eq:strictmin}
        \inf_{\partial B(w^\star, \delta)} F_{\lambda}(w) > F^\star + \lambda\frac{\|w^\star\|^2}{2} + \lambda \varepsilon_0.
    \end{equation}
    
    Now fix an arbitrarily small $\delta'\in(0,\delta)$. Let $t_0\in\R_+^*$ such that $\|\tw^\circ(t_0)-w^\star\|\leq \frac{\delta'}{4}$. By pointwise convergence of $\twl(t_0)$ to $\tw^\circ(t_0)$, we then have that for $\lambda>0$ small enough, $\|\twl(t_0)-w^\star\|\leq \frac{\delta'}{2}$. Without loss of generality, we can even choose $t_0$ large enough and $\lambda$ small enough so that for some arbitrarily fixed $\varepsilon>0$,
    \begin{equation}\label{eq:convergence1}
        F_{\lambda}(\twl(t_0)) \leq F^\star + \varepsilon.
    \end{equation}
    From there, we define for this proof $T_\lambda = \inf\{t\geq t_0 \mid \twl(t)\not\in B(w^\star, \delta')\}$. Similarly to the proof of \Cref{lemma:bounded}, we have for any $t\in(\frac{t_0}{\lambda}, \frac{T_\lambda}{\lambda})$:
    \begin{align*}
        \frac{\dd F_\lambda(\wl(t))}{\dd t} & \leq -\eta(F_\lambda(\wl(t))-F^\star)+\lambda(R_1+ \frac{\eta}{2} R^2),
    \end{align*}
    where $R_1 = \sup_{w\in B(w^\star,\delta')} \|w\| \|\nabla F(w)\|$ and $R=\sup_{w\in B(w^\star,\delta')} \|w\|$. Again, a Gr\"onwall argument implies that for any $t\in[\frac{t_0}{\lambda}, \frac{T_\lambda}{\lambda}]$,
    \begin{align*}
        F_\lambda(\wl(t)) \leq F^\star + \varepsilon e^{-\eta(t-\frac{t_0}{\lambda})} + \lambda\left(\frac{R^2}{2}+\frac{R_1}{\eta}\right).
    \end{align*}
    In particular, if we define $t'=\min(\frac{t_0}{\lambda}-\frac{\ln(\lambda)}{\eta}, \frac{T_\lambda}{\lambda})$, we have similarly to the proof of \Cref{lemma:bounded} that for any $t\in[\frac{t_0}{\lambda},t']$:
    \begin{equation*}
          \left\|\wl(t) - \wl(\frac{t_0}{\lambda})\right\| \leq\beta\sqrt{\varepsilon}\frac{2}{\eta}-C\sqrt{\lambda}\ln(\lambda),
    \end{equation*}
    for some constant $C$ independent of $\varepsilon$ and $\lambda$. In particular, we can choose $\varepsilon$ and $\lambda$ small enough so that this quantity is smaller than $\frac{\delta'}{2}$. It then implies that $t'<\frac{T_{\lambda}}{\lambda}$ and 
    \begin{equation*}
        F_{\lambda}(\wl(t'))\leq F^\star + \lambda(\frac{R_1}{\eta}+\frac{R^2}{2} + \varepsilon).
    \end{equation*}
    From there, by monotonicity of the loss, for any $t\geq t'$:
        \begin{equation*}
        F_{\lambda}(\wl(t))\leq F^\star + \lambda(\frac{R_1}{\eta}+\frac{R^2}{2} + \varepsilon).
    \end{equation*}
    
Also note that $\frac{R_1}{\eta}+\frac{R^2}{2} \underset{\delta'\to 0}{\rightarrow} \frac{\|w^\star\|^2}{2}$. So we can choose $\delta'$ and $\varepsilon$ small enough so that
\begin{equation*}
    \frac{R_1}{\eta}+\frac{R^2}{2} + \varepsilon < \frac{\|w^\star\|^2}{2} + \varepsilon_0.
\end{equation*}
From there, the previous inequality implies that for any $t\geq t'$, 
    \begin{equation*}
        F_{\lambda}(\wl(t))\leq F^\star + \lambda(\frac{\|w^\star\|^2}{2} + \varepsilon_0).
    \end{equation*}
    By continuity, \Cref{eq:strictmin} then implies that for any $t\geq t'$, $\wl(t)\in B(w^\star,\delta)$.

    \medskip

    To summarise, we have shown that for any small enough $\delta>0$, there exists $\lambda^{\star}(\delta)$ such that for any $\lambda\leq \lambda^{\star}(\delta)$, $\lim_{t\to\infty}\wl(t)\in B(w^\star,\delta)$.

    This means that $\lim_{\lambda\to 0}\lim_{t\to\infty}\wl(t) = w^\star$, which proves \Cref{prop:convergence}.
\end{proof}

\clearpage

\section{Auxiliary proofs}
\begin{comment}
\subsection{Proof of \Cref{lemma:bounded}.}

\begin{restatable}{lemma}{bounded}\label{lemma:bounded}
If \Cref{ass:Fconditions} holds, the trajectories $(w^\lambda(t))_{t \geq 0}$ are bounded independently of $\lambda$. 
\end{restatable}

\begin{proof}
    For $\lambda \leq 1$, since $w^\lambda_t$ is a gradient flow over $F_\lambda$, we have that $F_{\lambda}(w^\lambda_t)$ is decreasing with time, therefore:
     \begin{align}\label{eq:decreasing-loss}
 F(w^\lambda_t) + \frac{\lambda}{2} \Vert w^\lambda_t \Vert_2^2 
      \leq F(w_0) + \frac{\lambda}{2} \Vert w_0 \Vert_2^2 .
    \end{align}
From here, we get that for $\lambda \leq 1$: 
     \begin{align*}
 F(w^\lambda_t)  
      \leq F(w_0) + \frac{1}{2} \Vert w_0 \Vert_2^2 .
    \end{align*}
Since the sublevel sets of $F$ are compact, this means that $(w^\lambda_t)_{t \geq 0}$ lies within some compact, independently of $\lambda \leq 1$.

Furthermore, dividing inequality \ref{eq:decreasing-loss} by $\lambda \geq 1$ we also get:
    \begin{align*}
      \frac{1}{2} \Vert \tilde{w}^\lambda_t \Vert_2^2 
      \leq \frac{1}{\lambda} F(w_0) + \frac{1}{2} \Vert w_0 \Vert_2^2  
        \leq F(w_0) + \frac{1}{2} \Vert w_0 \Vert_2^2.
    \end{align*}
%     \begin{align*}
%       F(w^\lambda_t) \leq   F(w^\lambda_t) + \frac{\lambda}{2} \Vert w^\lambda_t \Vert_2^2 
%       \leq F(w_0) + \frac{\lambda}{2} \Vert w_0 \Vert_2^2  
%         \leq F(w_0) + \frac{1}{2} \Vert w_0 \Vert_2^2. 
%     \end{align*}

% For $\lambda \geq 1$, since $\tilde{w}^\lambda_t$ is a gradient flow over $\tilde{F}_\lambda \coloneqq \frac{1}{\lambda} F + \frac{1}{2} \Vert \cdot \Vert_2^2 $, we have that $\tilde{F}_{\lambda}(\tilde{w}^\lambda_t)$ is decreasing with time, therefore:
%     \begin{align*}
%       \frac{1}{2} \Vert \tilde{w}^\lambda_t \Vert_2^2  \leq   \frac{1}{\lambda} F(\tilde{w}^\lambda_t) + \frac{1}{2} \Vert \tilde{w}^\lambda_t \Vert_2^2 
%       \leq \frac{1}{\lambda} F(w_0) + \frac{1}{2} \Vert w_0 \Vert_2^2  
%         \leq F(w_0) + \frac{1}{2} \Vert w_0 \Vert_2^2.
%     \end{align*}
\end{proof}
\end{comment}

\subsection{Proof of \Cref{lemma:falconer}}

\falconer*

\begin{proof}
    First, we restrict ourselves to a bounded open set $B$ of $\R^d$ and consider $\phi(\cdot,t)$ as a function $B \to \R^d$ for any $t$.
    
    The main point of the proof is to show that $\cM \cap B$ is geometrically stable in the sense of \citet{falconer1983differentiation}, i.e., that there exists a neighbourhood (in $B$) $U$ of $\cM\cap B$, $t> 0$ and $k<1$ such that for any $w\in U$,
    \begin{equation*}
         d(\phi(w,t),\cM\cap B) \leq k d(w,\cM\cap B) \qquad \text{ and }\qquad \phi(w,t)\in U,
    \end{equation*}
    where $d(w,\cM\cap B)=\inf_{x\in\cM\cap B}\|w-x\|_2$.\footnote{The definition of \citet{falconer1983differentiation} is stated differently but is implied by our notion of geometric stability, when taking $f(w)=\phi(w,t)$.}

    Let $x\in\cM\cap B$. The Morse-Bott property (\Cref{ass:manifold}) implies that there exists a neighbourhood $U(x)\subset B$ of $x$, such that $F$ satisfies the Polyak-\L{}ojasiewicz (PL) inequality with constant $\frac{\eta}{2}$, thanks to the equivalences between both conditions \citep{rebjock2024fast}
\begin{equation}\label{eq:localPL}
    \|\nabla F(w)\|_2^2 \geq \eta (F(w)-F(x)) \quad \forall w\in U(x).
\end{equation}
In the following, we define $F^\star=F(x)$, which is the value of $F$ on the manifold $\cM$ (the definition does not depend on the choice of $x$). 

In particular, there is some $\delta_0(x)>0$ such that $B(x,\delta_0(x))\subset U(x)$. Thanks to \citet[][Propositions 2.3 and 2.8, Remark 2.10]{rebjock2024fast}, we can even choose $\delta_0(x)$ small enough so that there are some $\alpha,\beta$ such that
\begin{gather}
    \|\nabla F(w)\|_2 \leq \beta \sqrt{F(w)-F^\star} \quad \text{for any }w\in B(x,\delta_0), \label{eq:QG-EB}\\
    \frac{\eta}{8} d(w,\cM)^2 \leq F(w)-F^\star \leq \alpha d(w,\cM)^2 \text{for any }w\in B(x,\delta_0).\label{eq:distMB}
\end{gather}
By boundedness of $B$, $\alpha$ and $\beta$ can be chosen independently of $x\in\cM\cap B$ here.

Now let $w\in B(x,\delta_0(x))$ and define $T(w)=\inf\{t\geq0\mid \phi(w,t)\not\in B(x,\delta_0(x))\}$. 
Necessarily, $T(w)>0$ and for any $t\in[0,T(w))$, \Cref{eq:localPL} applies to $\phi(w,t)$, so that for any $t\in[0,T(w))$
\begin{align*}
    \frac{\dd (F(\phi(w,t))-F^\star)}{\dd t} & =  - \|\nabla F(\phi(w,t))\|^2\\
    &\leq - \eta (F(\phi(w,t))-F^\star).
\end{align*}
So that, for any $t\in[0,T(w))$:
\begin{equation*}
    F(\phi(w,t))-F^\star \leq (F(w)-F^\star) e^{-\eta t}.
\end{equation*}
Moreover for any $t\in[0,T(w))$, \Cref{eq:QG-EB} also applies, so that 
\begin{align*}
 \| \phi(w,t) - w\| & \leq \int_{0}^t \|\nabla F(\phi(w,s))\|_2\dd s \\
 &\leq \int_{0}^t \beta \sqrt{(F(w)-F^\star) e^{-\eta s}}\dd s\\
 &\leq \frac{2\beta}{\eta}\sqrt{(F(w)-F^\star)}.
\end{align*}
By continuity of $F$, let $\delta(x)>0$ be small enough so that for any $w\in B(x,\delta(x))$, $\delta(x)+\frac{2\beta}{\eta}\sqrt{(F(w)-F^\star)} \leq \frac{\delta_0(x)}{2}$. The previous inequality then implies that for any $w\in B(x,\delta(x))$ and $t\in[0,T(w))$:
\begin{align*}
    \|\phi(w,t)-x\| & \leq \| w-x\| + \|\phi(w,t) - w\| \\
    &\leq \delta(x)+\frac{2\beta(x)}{\eta}\sqrt{(F(w)-F^\star)}\\
    &\leq \frac{\delta_0(x)}{2}.
\end{align*}
In particular, for any $w\in B(x,\delta(x))$, $T(w)=\infty$ and $\phi(w,t)\in B(x,\delta_0(x))$ for any $t\geq 0$. 

Also, note that $\|\phi(w,t)-x\|\leq \frac{\delta_0(x)}{2}$ and $B(x,\delta_0(x))\subset B$ implies that $d(\phi(w,t), \cM\cap B) = d(\phi(w,t),\cM)$. From there, \Cref{eq:distMB} implies for any $t\geq 0$ and $w\in B(x,\delta(x))$:
\begin{align*}
    d(\phi(w,t), \cM\cap B)^2 & \leq  \frac{8}{\eta} (F(\phi(w,t))-F\star) \\
    &\leq \frac{8}{\eta} e^{-\eta t} (F(w)-F^\star)\\
    & \leq \frac{8\alpha}{\eta} e^{-\eta t} d(w, \cM\cap B)^2.
\end{align*}
In particular, for any $k\geq 0$, we can choose a sufficiently large $t$ such that $d(\phi(w,t), \cM\cap B)\leq k d(w, \cM\cap B)$. 

By compactness of $\cM\cap B$, there is a finite family of $(x_i)_{i\in[K]}\in\cM\cap B$ such that $\bigcup_{i\in[K]}B(x_i,\frac{1}{2}\delta(x_i))\supseteq \cM\cap B$. We then define $U(B)=\bigcup_{i\in[K]}B(x_i,\delta(x_i))$, which is also a finite covering of $\cM\cap B$. We then take $t$ large enough such that for any $w\in U(B)$, $d(\phi(w,t), \cM\cap B)\leq k d(w, \cM\cap B)$ for $k<\frac{\min_{i\in[K]}\delta(x_i)}{\max_{i\in[K]}\delta(x_i)}$. In particular, our choice of $k$ is such that, for $f:w \mapsto \phi(w,t)$, $U(B)$ is invariant by $f$. Indeed, note that for any $w\in U(B)$,
\begin{align*}
    d(f(w),\cM\cap B) &\leq k d(w, \cM\cap B)\\
    & \leq \frac{\min_{i\in[K]}\delta(x_i)}{\max_{i\in[K]}\delta(x_i)}d(w, \cM\cap B)\\
    & \leq \frac{1}{2}\min_{i\in[K]}\delta(x_i).
\end{align*}
In other words, there is $x\in\cM\cap B$ such that $\|f(w)-x\|\leq \frac{1}{2}\min_{i\in[K]}\delta(x_i)$. Moreover since $\bigcup_{i\in[K]}B(x_i,\frac{1}{2}\delta(x_i))$ is a covering of $\cM\cap B$, there is $j\in[K]$ such that $\|x-x_j\|<\frac{1}{2}\min_{i\in[K]}\delta(x_i)$ and by triangle inequality:
\begin{equation*}
    \|w-x_j\|<\delta(x_j),
\end{equation*}
i.e., $w\in U(B)$.

Since $U(B)$ is invariant by $f$ and $k<1$, $\cM\cap B$ is geometrically stable, so that we can apply \citet[][Theorem 6.3 and Theorem 5.1]{falconer1983differentiation}. It then implies that $\Phi$ is $\cC^2$ on $U(B)$. Taking an increasing sequence of open bounded sets $B_n$ covering whole $\R^d$, we can then define $U=\bigcup_{n}U(B_n)$ and conclude that $\Phi$ is $\cC^2$ on $U$.

%Additionally, we define $F^\star=F(x)$ for an arbitrary $x \in\cM$, which is the value of $F$ on the manifold $\cM$ (the definition does not depend on the choice of $x$). Defining $U=\bigcup_{x\in\cM}U(x)$, $U$ is a neighbourhood of $\cM$ where a local PL holds with constant $\frac{\eta}{2}$:
%\begin{equation*}
%    \|\nabla F(w)\|_2^2 \geq \eta (F(w)-F^\star) \quad \forall w\in U.
%\end{equation*}
\end{proof}

\subsection{Minimality of $F$ on neighbourhood}

\begin{restatable}{lemma}{minimal}\label{lemma:minimal}
    Consider \Cref{ass:Fconditions,ass:manifold}. Let $U$ be an open neighbourhood of $\cM$ such that $\Phi$ is continuous on $U$, then necessarily for any $w\in U\setminus \cM$, $F(w)=\sup_{x\in\cM} F(x)$.
\end{restatable}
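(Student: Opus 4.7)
Let me denote $F^\star \coloneqq \sup_{x\in\cM} F(x)$; my plan is to prove the inequalities $F(w)\geq F^\star$ and $F(w)\leq F^\star$ separately for $w\in U\setminus \cM$. The groundwork is to observe that $F$ is constant on $\cM$, with value $F^\star$. Since $\cM$ is a connected $\cC^2$ submanifold lying in $\nabla F^{-1}(0)$, for any smooth curve $\gamma:[0,1]\to\cM$ the chain rule gives $(F\circ\gamma)'(t) = \langle\nabla F(\gamma(t)),\gamma'(t)\rangle = 0$, so $F\circ\gamma$ is constant and, by path-connectedness of $\cM$, $F\equiv F^\star$ on $\cM$.

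Next, after shrinking $U$ to a connected open neighbourhood of $\cM$ on which $\Phi$ is still continuous, I would show that $\Phi(U)\subseteq\cM$. By its very definition as the limit of the unregularised gradient flow, $\Phi$ takes values in the set of critical points $\nabla F^{-1}(0)$. Since $\Phi$ is continuous on the connected set $U$ and restricts to the identity on $\cM$, the image $\Phi(U)$ is a connected subset of $\nabla F^{-1}(0)$ containing $\cM$; by the maximality of $\cM$ as a connected component of $\nabla F^{-1}(0)$, we conclude $\Phi(U)\subseteq\cM$, so that $F(\Phi(w))=F^\star$ for every $w\in U$. The first inequality $F(w)\geq F^\star$ then follows from the monotonicity of $F$ along the gradient flow: for each $w\in U$, $\frac{\dd}{\dd t}F(\phi(w,t)) = -\|\nabla F(\phi(w,t))\|^2 \leq 0$ and $F(\phi(w,t))\to F(\Phi(w)) = F^\star$, which directly gives $F(w)\geq F^\star$.

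The main obstacle is the reverse inequality $F(w)\leq F^\star$ on $U\setminus\cM$; this is where the continuity of $\Phi$ on the \emph{whole open set} $U$ (and not merely its values at points of $\cM$) must be used critically. The strategy I would pursue is a contradiction argument: suppose $F(w_0)>F^\star$ for some $w_0\in U\setminus\cM$, so that by continuity of $F$ there is an open ball $V\subseteq U\setminus\cM$ on which $F>F^\star$. Continuity of $\Phi$ on $V$ together with the retraction identity $\Phi|_\cM=\mathrm{id}$ would then be combined with the Morse-Bott/\L ojasiewicz structure around $\cM$ (\Cref{ass:manifold}) to derive a contradiction: namely, that the fibres $\Phi^{-1}(x)\cap V$ and the local level-set geometry of $F$ near a point $x\in\cM$ cannot be simultaneously compatible with the strict local minimality of $F^\star$ on $\cM$ and with $\Phi$ being a continuous retraction of the whole open set $U$ onto $\cM$.

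I expect this last step — reconciling the strict local-minimum nature of $\cM$ with the strong retraction property encoded by continuity of $\Phi$ on all of $U$ — to be the crux of the proof, since the two preceding inequalities already pin down $F$ on $\cM$ and on the image of $\Phi$, but only the openness of $U$ and the full continuity of $\Phi$ can rule out a strict increase of $F$ transversally to $\cM$.
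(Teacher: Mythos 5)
The first half of your argument — $F$ constant on $\cM$, $\Phi(U)\subseteq\cM$ by continuity/connectedness and maximality of $\cM$ as a connected component of $\nabla F^{-1}(0)$, and $F(w)\geq F^\star$ by monotonicity of $F$ along the flow — is exactly the paper's route. The genuine gap is your second half: the reverse inequality $F(w)\leq F^\star$ on $U\setminus\cM$ is not merely an unproven "crux", it is false, so the contradiction argument you sketch via the Morse--Bott structure cannot be completed. The statement you were handed contains a typo: the intended (and provable) conclusion, which is what the paper's own proof establishes and what the lemma is used for later (both \Cref{lemma:confined} and \Cref{lemma:bounded} invoke it in the form ``$F(w)>F^\star$ for any $w\in U\setminus\cM$''), is the \emph{strict} inequality $F(w)>\sup_{x\in\cM}F(x)$. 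Indeed, under \Cref{ass:manifold} the function $F$ grows quadratically transversally to $\cM$, so points of $U\setminus\cM$ arbitrarily close to $\cM$ already violate $F\leq F^\star$; the incompatibility you hoped to exploit is an incompatibility with your goal, not with its negation.

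What is missing from your write-up, and what the paper does, is a short strictness argument for which you already have every ingredient: since $\Phi(w)\in\cM$ while $w\notin\cM$, we have
\begin{equation*}
0\neq w-\Phi(w)=\int_0^\infty \nabla F(\phi(w,t))\,\mathrm{d}t ,
\end{equation*}
so $\nabla F(\phi(w,\cdot))$ is nonzero on a set of positive measure along the trajectory; integrating $\tfrac{\mathrm{d}}{\mathrm{d}t}F(\phi(w,t))=-\|\nabla F(\phi(w,t))\|^2$ and using $F(\Phi(w))=F^\star$ then gives $F(w)-F^\star=\int_0^\infty\|\nabla F(\phi(w,t))\|^2\,\mathrm{d}t>0$. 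Replacing your second inequality by this observation (and reading the lemma with ``$>$'') completes the proof. One secondary caveat, shared by your draft and worth stating explicitly: the step $\Phi(U)\subseteq\cM$ requires every connected component of $U$ to meet $\cM$ — you cannot simply ``shrink $U$ to a connected neighbourhood'', since the conclusion must hold on the original $U$; this is harmless for the neighbourhoods actually constructed in the paper, but it should be said.
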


\begin{proof}
By definition of $\cM$, $F$ is constant on $\cM$ so that $\sup_{x\in\cM} F(x) = \inf_{x\in\cM} F(x) = F^\star$. Moreover $\Phi(\cM)=\cM$ and by continuity, $\Phi(U)\subset \cM$.

For any $w\in U\setminus\cM$, note that 
\begin{gather*}
w - \Phi(w) = -\int_{0}^\infty \nabla F(\phi(w,t)) \dd t,\\
    F(w)-F^{\star}= F(\phi(w,0))-F(\Phi(w)) = -\int_{0}^\infty \| \nabla F(\phi(w,t))\|^2 \dd t.
\end{gather*}
The first equality is the definition of $\Phi(w)$, while the second comes from deriving over time the function $t\mapsto F(\phi(w,t))$ and noting that $\Phi(w)\in\cM$.

In particular, for any $w\in U\setminus\cM$, $w-\Phi(w)\neq 0$, so that the second integral is also non-zero. In particular, $F(w)>F^\star$ for any $w\in U\setminus\cM$.
\end{proof}

\subsection{Alternative equation for $\tw^\circ$}
\begin{restatable}{lemma}{twcirc}\label{lemma:twcirc}
    If \Cref{ass:manifold} holds, the unique solution of \Cref{eq:slowlimit} also corresponds to the unique solution of the following equation:
    \begin{equation*}
    \dot{\tw}^\circ(t) = -  D\Phi_{\tw^\circ(t)} (\tw^\circ(t)) \quad \text{ and }\quad \tw^\circ(0) = \Phi(w_0).
\end{equation*}
\end{restatable}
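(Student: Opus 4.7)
The plan is to show that the two ODEs have the same right-hand side when restricted to points on $\cM$, and then invoke uniqueness. The key identity, which is recalled earlier in the excerpt, is the following: under \Cref{ass:manifold}, for every $w \in \cM$ one has $T_{\cM}(w) = \mathrm{Ker}(\nabla^2 F(w))$, and by \citet[Lemma 4.3]{li2021happens}, $P_{\mathrm{Ker}(\nabla^2 F(w))} = D\Phi_w$. Applying this with $h = \ell_2$, for any $w \in \cM$,
\begin{equation*}
    \mathrm{grad}_{\cM}\ell_2(w) = P_{T_{\cM}(w)}\bigl(\nabla \ell_2(w)\bigr) = P_{\mathrm{Ker}(\nabla^2 F(w))}(w) = D\Phi_w(w),
\end{equation*}
so the two vector fields coincide on $\cM$.

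Next, I would note that the Riemannian gradient flow $\tw^\circ$ of $\ell_2$ is by construction a curve in $\cM$ (this is the standard fact from Riemannian optimisation, recalled right after \Cref{eq:riem_grad_flow}, and follows from the fact that $\mathrm{grad}_{\cM}\ell_2(w) \in T_{\cM}(w)$). Since $\ell_2$ is $\cC^2$ with compact sublevel sets, the discussion following \Cref{eq:riem_grad_flow} guarantees that $\tw^\circ$ exists, is unique, and is defined on all of $\R_+$, with $\tw^\circ(0) = \Phi(w_0) \in \cM$. Plugging $\tw^\circ(t) \in \cM$ into the identity above, we immediately obtain
\begin{equation*}
    \dot{\tw}^\circ(t) = -\mathrm{grad}_{\cM}\ell_2(\tw^\circ(t)) = -D\Phi_{\tw^\circ(t)}(\tw^\circ(t)),
\end{equation*}
so $\tw^\circ$ is a solution of the second ODE with initial condition $\Phi(w_0)$.

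It remains to show that this ODE has a \emph{unique} solution, so that $\tw^\circ$ is characterised by it. By \Cref{lemma:falconer}, there exists an open neighbourhood $U$ of $\cM$ on which $\Phi$ is $\cC^2$, hence on which the vector field $w \mapsto -D\Phi_w(w)$ is $\cC^1$ and in particular locally Lipschitz. Picard--Lindelöf then yields local existence and uniqueness for the second ODE from any initial condition in $U$; in particular from $\Phi(w_0) \in \cM \subset U$. Since we have already exhibited $\tw^\circ$ as a global solution that remains in $\cM$ (and is therefore bounded, using that it has monotonically decreasing $\ell_2$-norm), the local uniqueness extends to a global uniqueness on $\R_+$, which completes the proof. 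The main subtlety is not any hard estimate but simply making sure one has a definition of the second ODE on an open set large enough to apply a standard Cauchy--Lipschitz argument, which is precisely what \Cref{lemma:falconer} provides.
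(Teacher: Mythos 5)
Your proposal is correct and follows essentially the same route as the paper's proof, which likewise reduces the claim to the two identities $\mathrm{grad}_{\cM}\ell_2(w) = P_{\mathrm{Ker}(\nabla^2 F(w))}(w) = D\Phi_w(w)$ for $w \in \cM$ (citing the tangent-space characterisation and \citet[Lemma 4.3]{li2021happens}). The only difference is that you additionally spell out the well-posedness of the second ODE via \Cref{lemma:falconer} and Picard--Lindelöf, a step the paper leaves implicit; this is a harmless and arguably welcome addition.
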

\begin{proof}
    This is a direct consequence of the two following equalities for any $w\in\cM$:
    \begin{align*}
        \mathrm{grad}_{\cM} \ \ell_2  (w) & =  P_{\mathrm{Ker}(\nabla^2 F(w))}(w)\\
        & = D\Phi_w (w).
        \end{align*}
        The first one is a consequence of the definition of the Riemannian gradient and the fact that $T_{\cM}(w) = \mathrm{Ker}(\nabla^2 F(w))$ \citep[see e.g.,][Theorem 3.15 with $\nabla F$ being the local defining function of $\cM$]{boumal2023introduction}. 
        The second one is given by \citet[][Lemma 4.3]{li2021happens}.
\end{proof}

\subsection{Bounding the trajectories}

\begin{restatable}{lemma}{bouned}\label{lemma:bounded}
        If \Cref{ass:Fconditions,ass:manifold} hold, there exists a compact $K$ of $\R^d$ such that for any $\lambda>0$ and $t\geq 0$, $\wl(t)\in K$.        In particular, $\lim_{t\to\infty}\wl(t)$ exists for any $\lambda>0$.
\end{restatable}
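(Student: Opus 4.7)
I would split the analysis by the magnitude of $\lambda$. When $\lambda$ is bounded away from zero, the situation is easy: monotonicity of $F_\lambda$ along its own gradient flow, together with $F \geq 0$, gives $\tfrac{\lambda}{2}\|\wl(t)\|^2 \leq F_\lambda(w_0) = F(w_0) + \tfrac{\lambda}{2}\|w_0\|^2$, hence $\|\wl(t)\|^2 \leq 2F(w_0)/\lambda + \|w_0\|^2$, which is uniform on any set $\{\lambda \geq \lambda_0 > 0\}$. The whole difficulty is the limit $\lambda \to 0^+$, which I would handle by a Morse--Bott trapping argument around $\cM$ augmented by a secondary Lyapunov controlling the tangential drift.

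\textbf{Trapping near $\cM$ for small $\lambda$.} Let $V$ be a bounded tubular neighborhood of a compact subset $\widetilde\cM \subset \cM$ containing $\Phi(w_0)$, chosen small enough that \Cref{ass:manifold} gives on $V$ both the PL inequality $\|\nabla F\|^2 \geq \eta(F - F^*)$ and the quadratic growth $F - F^* \geq \tfrac{\eta}{8}\,d(\cdot,\cM)^2$ (as in the proof of \Cref{lemma:falconer}). By \Cref{prop:phase1conv}, for any fixed $T_0$ the trajectory $\wl$ is uniformly close to $\wgf$ on $[0,T_0]$, hence bounded there uniformly in $\lambda \leq 1$; choosing $T_0$ so that $\wgf(T_0)$ sits deep inside $V$ yields $\wl(T_0) \in V$ and $F_\lambda(\wl(T_0)) - F^* \leq \varepsilon_\lambda$ with $\varepsilon_\lambda \to 0$. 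As long as $\wl$ stays in $V$, the identity $\dot F_\lambda(\wl) = -\|\nabla F(\wl) + \lambda \wl\|^2$ combined with Young's inequality and PL yields
\[
\dot F_\lambda(\wl) \;\leq\; -\tfrac{\eta}{2}\bigl(F_\lambda(\wl) - F^*\bigr) + C\lambda,
\]
so Gronwall gives $F_\lambda(\wl(t)) - F^* = O(\varepsilon_\lambda + \lambda)$, and the quadratic growth forces $d(\wl(t),\cM) = O(\sqrt{\varepsilon_\lambda + \lambda})$.

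\textbf{Main obstacle: tangential drift.} The delicate point is ruling out drift of $\wl$ along $\cM$ out of $V$, since $\cM$ itself may be unbounded. To handle this I would introduce the secondary Lyapunov $L(w) = \|\Phi(w)\|^2$, defined on $V$ thanks to \Cref{lemma:falconer}. Using $D\Phi \cdot \nabla F = 0$ from \citet[][Lemma C.2]{li2021happens}, we compute
\[
\dot L(\wl) \;=\; -2\lambda\,\bigl\langle \Phi(\wl),\, D\Phi(\wl)\,\wl \bigr\rangle,
\]
which on $\cM$ reduces to $-2\lambda\|P_{T_\cM(w)}w\|^2 \leq 0$, reproducing exactly the Riemannian descent of \eqref{eq:slowlimit}; the latter starting from $\Phi(w_0)$ stays in $\{\|w\| \leq \|\Phi(w_0)\|\}$. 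Since $d(\wl,\cM) = O(\sqrt\lambda)$, a Taylor expansion of $\Phi$ shows that the perturbation of $\dot L$ away from this descent is of order $\lambda^{3/2}$, negligible compared to the descent outside any fixed neighborhood of the KKT set of \Cref{eq:KKT}. A bootstrap on $T_\lambda^\star = \inf\{t \geq T_0 : \|\wl(t)\| > R_\star\}$ for $R_\star$ slightly above $\|\Phi(w_0)\|$ then yields $T_\lambda^\star = \infty$ for $\lambda$ small enough.

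\textbf{Conclusion.} Combining the two regimes produces a single compact $K \subset \R^d$ containing all trajectories. For the final claim, $F_\lambda$ is definable as the sum of a definable function and a polynomial, so \citet[][Thm.~2]{kurdyka1998gradient} applied to the bounded trajectory $\wl$ yields convergence as $t \to \infty$ for every fixed $\lambda > 0$.
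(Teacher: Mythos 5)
Your first regime ($\lambda$ bounded away from $0$) and your trapping step (reach a tubular neighbourhood of $\cM$ via the fast phase, then use PL/quadratic growth and Gr\"onwall to get $F_\lambda(\wl(t))-F^\star=O(\lambda)$ and $d(\wl(t),\cM)=O(\sqrt\lambda)$) match the paper's proof in structure. But the step you correctly identify as the main obstacle — tangential drift along a possibly unbounded $\cM$ — is where your argument has a genuine gap. Your Lyapunov inequality is essentially $\dot L(\wl) \leq -2\lambda\|P_{T_\cM}\Phi(\wl)\|^2 + O(\lambda^{3/2})$ (with the error in fact also scaling with $\|\Phi(\wl)\|$). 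The error term is only negligible relative to the descent \emph{away} from the KKT set of \Cref{eq:KKT}; near KKT points the descent vanishes, and that is precisely where the trajectory ends up (cf.\ \Cref{coro:KKT}). Integrating an uncancelled $O(\lambda^{3/2})$ over the infinite time horizon gives no bound at all, and even over slow-time horizons $t\le T/\lambda$ it only yields control up to finite $T$, whereas the lemma requires a single compact containing $\wl(t)$ for \emph{all} $t\geq 0$. Your bootstrap on the exit time $T_\lambda^\star$ therefore does not close.

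The paper avoids this entirely with a much simpler observation that your setup already contains. Once $F_\lambda(\wl(t'))\leq F^\star + C\lambda$, monotonicity of $F_\lambda$ along its own flow propagates this bound to all $t\geq t'$. On the neighbourhood $U$ of $\cM$, one has $F\geq F^\star$ (\Cref{lemma:minimal}), hence for every $t\geq t'$ with $\wl(t)\in U$,
\begin{equation*}
\frac{\lambda}{2}\|\wl(t)\|^2 \;=\; F_\lambda(\wl(t)) - F(\wl(t)) \;\leq\; F^\star + C\lambda - F^\star \;=\; C\lambda,
\end{equation*}
so $\|\wl(t)\|^2\leq 2C$ uniformly in $t$ and $\lambda$: the weight-decay term in the Lyapunov function $F_\lambda$ itself controls the norm, and no secondary Lyapunov or perturbative comparison with the Riemannian flow is needed. (One still has to rule out exiting $U$, which the paper does by noting $\inf_{(\partial U)\cap K_2}F>F^\star$ on the relevant compact.) Two smaller points: your $\varepsilon_\lambda$ does not tend to $0$ for fixed $T_0$ — it tends to $F(\wgf(T_0))-F^\star>0$, so you need either a $\lambda$-dependent time as in \Cref{lemma:junction} or to keep a small fixed $\varepsilon_0$ throughout; and the claim that the Riemannian flow started at $\Phi(w_0)$ stays in $\{\|w\|\leq\|\Phi(w_0)\|\}$ concerns the limit flow, not $\wl$, so it cannot be invoked directly for the trajectory you are trying to bound.
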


\begin{proof}
Similarly to the proof of \Cref{lemma:falconer}, we can consider a neighbourhood $U$ of $\cM$ where the PL inequality holds:
\begin{equation*}
    \|\nabla F(w)\|_2^2 \geq \eta (F(w)-F^\star) \quad \forall w\in U.
\end{equation*}
Additionally, we can consider $\delta>0$ such that $B(\Phi(w_0),\delta)\subset U$ and
\begin{equation*}
    \|\nabla F(w)\|_2 \leq \beta \sqrt{F(w)-F^\star} \quad \text{for any }w\in B(\Phi(w_0),\delta).
\end{equation*}

1) For some fixed $\varepsilon>0$, \Cref{lemma:junction} then implies there is a $\lambda^\star>0$ and times $t(\lambda)$ such that for any $\lambda\in(0,\lambda^\star)$ both hold 
\begin{equation*}
    \|\wl(\frac{t(\lambda)}{\lambda}) - \Phi(w_0)\| < \frac{\delta}{2} \quad \text{and} \quad F_{\lambda}(\wl(\frac{t(\lambda)}{\lambda})) - F(\Phi(w_0)) \leq \varepsilon.
\end{equation*}
Moreover, the proof of \Cref{lemma:junction} also implies that there is some compact $K_1$ of $\R^d$ such that for any $\lambda<\lambda^\star$ and $t\leq \frac{t(\lambda)}{\lambda}$, $\wl(t)\in K_1$.

\bigskip

2) Now fix $\lambda<\lambda^\star$ and define $T_\lambda = \inf\{t \geq \frac{t(\lambda)}{\lambda} \mid \wl(t) \not\in B(\Phi(w_0),\delta)\}$. By continuity, $T_\lambda >\frac{t(\lambda)}{\lambda}$. The PL inequality then applies for any $t\in [\frac{t(\lambda)}{\lambda}, T_\lambda)$:
\begin{equation*}
        \|\nabla F(\wl(t))\|_2^2 \geq \eta (F(\wl(t))-F^\star).
\end{equation*}
In particular, this allows to derive the following inequalities for any $t\in [\frac{t(\lambda)}{\lambda}, T_\lambda)$:
\begin{align*}
    \frac{\dd F_{\lambda}(\wl(t))}{\dd t} & = -\|\nabla F_\lambda(\wl(t))\|_2^2\\
    &\leq - \|\nabla F(\wl(t))\|^2 + \lambda \|\wl(t)\|_2\|\nabla F(\wl(t))\|_2 \\
    & \leq - \eta(F(\wl(t))-F^\star) + \lambda R_1\\
    &\leq - \eta(F_{\lambda}(\wl(t))-F^\star) + \lambda (R_1+\frac{\eta}{2} R^2),\\
\end{align*}
where $R_1 = \sup_{w\in B(\Phi(w_0),\delta)} \|w\| \|\nabla F(w)\|$ and $R=\sup_{w\in B(\Phi(w_0),\delta)} \|w\|$.
In particular, Gr\"onwall inequality implies that for any $t\in [\frac{t(\lambda)}{\lambda}, T_\lambda)$
\begin{align}
    F_{\lambda}(\wl(t)) - F^\star &\leq  \left(F_{\lambda}(\wl(\frac{t(\lambda)}{\lambda})) - F^\star\right)e^{-\eta (t-\frac{t(\lambda)}{\lambda})}+\lambda(\frac{R_1}{\eta}+\frac{1}{2} R^2)\notag\\
    &\leq \varepsilon e^{-\eta (t-\frac{t(\lambda)}{\lambda})}+\lambda(\frac{R_1}{\eta}+\frac{1}{2} R^2).\label{eq:GronwallPL}
\end{align}

Define $t'=\min(\frac{t(\lambda)}{\lambda}+ \frac{-\ln(\lambda)}{\eta},T_{\lambda})$. Using \Cref{eq:QG-EB} for any $t\in (\frac{t(\lambda)}{\lambda},t']$:
\begin{align*}
    \wl(t) - \wl(\frac{t(\lambda)}{\lambda}) & = \int_{\frac{t(\lambda)}{\lambda}}^{t} \nabla F_\lambda(\wl(s))\dd s \\
   \left\|\wl(t) - \wl(\frac{t(\lambda)}{\lambda})\right\| &\leq \int_{\frac{t(\lambda)}{\lambda}}^{t'} \|\nabla F(\wl(s))\|\dd s + \lambda \int_{\frac{t(\lambda)}{\lambda}}^{t'} \|w^\lambda(s)\|\dd s \\
   & \leq \beta \int_{\frac{t(\lambda)}{\lambda}}^{t'} \sqrt{F(\wl(s))-F^\star}\dd s -\frac{\lambda \ln(\lambda)}{\eta}(\|\Phi(w_0)\|+\delta) .
\end{align*}
From there, \Cref{eq:GronwallPL} yields for any $t\in (\frac{t(\lambda)}{\lambda},t']$:
\begin{align*}
     \left\|\wl(t) - \wl(\frac{t(\lambda)}{\lambda})\right\| &\leq \beta \int_{0}^{t'-\frac{t(\lambda)}{\lambda}} \sqrt{\varepsilon} e^{-\frac{\eta}{2}s}\dd s+ \beta (t'-\frac{t(\lambda)}{\lambda})\sqrt{\lambda(\frac{R_1}{\eta}+\frac{1}{2} R^2)}  -\frac{\lambda \ln(\lambda)}{\eta}(\|\Phi(w_0)\|+\delta)\\
      &\leq \beta\sqrt{\varepsilon}\frac{2}{\eta}-C\sqrt{\lambda}\ln(\lambda)  -\frac{\lambda \ln(\lambda)}{\eta}(\|\Phi(w_0)\|+\delta),
\end{align*}
for some constant $C$, which is independent of both $\varepsilon$ and $\lambda$. In particular, we can choose $\varepsilon$ and $\lambda^\star$ small enough, so that $ \|\wl(t) - \wl(\frac{t(\lambda)}{\lambda})\|<\frac{\delta}{2}$ for any $t\in (\frac{t(\lambda)}{\lambda},t']$. By definition, this implies that $t'<T_\lambda$, i.e., for any $t\in[\frac{t(\lambda)}{\lambda},t']$, $\wl(t)\in B(\Phi(w_0),\delta)$.

\bigskip

3) Since $t'<T_\lambda$, $t'=\frac{t(\lambda)}{\lambda}+ \frac{-\ln(\lambda)}{\eta}$ by definition and
\begin{equation*}
     F_{\lambda}(\wl(t')) \leq F^\star + \lambda(\frac{R_1}{\eta}+\frac{1}{2}R^2 + \varepsilon).
\end{equation*}
By monotonicity of the objective, we then have for any $t\geq t'$:
\begin{equation}\label{eq:monot1}
         F_{\lambda}(\wl(t)) \leq F^\star + \lambda(\frac{R_1}{\eta}+\frac{1}{2}R^2 + \varepsilon).
\end{equation}
Now define $\tilde{T}_{\lambda}=\inf\left\{t\geq \frac{t(\lambda)}{\lambda} \mid \wl(t) \not\in U \right\}$. Since $\cM$ minimizes $F$ on $U$, \Cref{eq:monot1} implies by continuity that for any $t\in [t',\tilde{T}_{\lambda}]$:
\begin{equation*}
    \frac{1}{2}\|\wl(t)\|^2\leq \frac{R_1}{\eta}+\frac{1}{2}R^2 + \varepsilon.
\end{equation*}
In particular, for $K_2 = B(0,\frac{2R_1}{\eta}+R^2 + 3\varepsilon)$ and any $t\in [t',\tilde{T}_{\lambda}]$, $\wl(t)\in K_2$. By continuity and compactness, $\inf_{w\in(\partial U)\cap K_2} F(w) > F^\star$ thanks to \Cref{lemma:minimal}. In consequence, we can choose $\lambda$ small enough so that \Cref{eq:monot1} implies that for any $t\in [t',\tilde{T}_{\lambda}]$,
\begin{gather*}
         F(\wl(t)) < \inf_{w\in(\partial U)\cap K_2} F(w).
\end{gather*}
Assume now that $\tilde{T}_{\lambda}<\infty$. Since $\wl(\tilde{T}_{\lambda})\in K_2$, the previous inequality implies by continuity that $\wl(\tilde{T}_{\lambda})\not\in\partial U$, i.e., $\wl(\tilde{T}_{\lambda})\in \mathring{U}$. This however contradicts the definition of $\tilde{T}_{\lambda}$, so that $\tilde{T}_{\lambda}=\infty$. In particular for any $t\geq t'$, $\wl(t)\in K_2$.

\bigskip

To summarize, we have showed that there exists a small enough $\lambda^\star$, such that for any $\lambda\leq\lambda^\star$:
\begin{enumerate}
    \item $\wl(t)$ is included in some compact $K_1$ of $\R^d$ for $t\leq \frac{t(\lambda)}{\lambda}$;
    \item $\wl(t)$ is included in $B(\Phi(w_0),\delta)$ or $t\in( \frac{t(\lambda)}{\lambda},t')$;
    \item $\wl(t)$ is included in some compact $K_2$ of $\R^d$ for $t\geq t'$;
\end{enumerate}
where $K_1$ and $K_2$ are both independent of $\lambda$. In particular, there exists a compact $K$ of $\R^d$ independent of $\lambda$ such that for any $\lambda\leq\lambda^\star$, the trajectory of $(\wl(t))_{t\geq 0}$ is included in $K$.

\medskip

For $\lambda\geq\lambda^\star$, we directly have by monotonicity of the objective that for any $t\geq0$
\begin{align*}
    \frac{1}{2}\|\wl(t)\|^2 &\leq \frac{1}{2}\|\wl(0)\|^2 + \frac{1}{\lambda}\left(F(\wl(0))-F(\wl(t))\right)\\
    &\leq \frac{1}{2}\|\wl(0)\|^2 + \frac{1}{\lambda^\star}F(\wl(0),
\end{align*}
so that the trajectory $(\wl(t))_{t\geq 0}$ is also included in a compact independent of $\lambda$.

\medskip

As a consequence, the definability assumption of $F_\lambda$ along with the boundedness implies that $\lim_{t\to\infty}\wl(t)$ exists thanks to \citet[][Theorem 2]{kurdyka1998gradient}.

\end{proof}

\subsection{Proof of \Cref{lemma:confined}}\label{app:proofconfined}

\confined*

\begin{proof}
We consider the neighbourhood $U$ defined as in \Cref{lemma:bounded}. By definition, $F$ is constant on the manifold $\cM$ and denote its value $F^\star$, i.e., $F^\star = \sup_{w\in\cM}F(w)= \inf_{w\in\cM}F(w)$. 

For any $\varepsilon>0$, we define $U_\varepsilon$ as $U_\varepsilon =\{w \in U \mid F(w)<F^\star + \varepsilon\}$ and show that it satisfies these three conditions. By continuity of $F$, $U_\varepsilon$ is a neighbourhood of $\cM$ and the first condition is obviously satisfied.

\medskip

Thanks to \Cref{lemma:minimal}, for any $w\in U\setminus\cM$, $F(w)>F^\star$. This implies the third condition, 
\begin{equation*}
    \bigcap_{\varepsilon>0} U_{\varepsilon}=\cM.
\end{equation*}
The arguments of \Cref{lemma:bounded} extend to any family of solutions $(u^\lambda)_{\lambda>0}$ satisfying the assumptions of \Cref{lemma:confined}. In consequence, we can consider a compact $K$ of $\R^d$ such that for any $t\geq0$ and $\lambda>0$, $u^\lambda(t)\in K$. From there, note again that  $\inf_{w\in(\partial U)\cap K} F(w) > F^\star$. 

Since $u^\lambda(0)\to u_0\in\cM$, we can then choose $\lambda(\varepsilon)>0$ small enough, so that for any $\lambda\in(0,\lambda(\varepsilon)]$, 
\begin{equation*}
  F(u^\lambda(0))+\frac{\lambda}{2} \|u^\lambda(0)\|^2_2 < \min(F^\star + \varepsilon, \inf_{w\in(\partial U)\cap K} F(w)).   
\end{equation*}
By monotonicity of the objective over time,   $F(u^\lambda(t))+\lambda \|u^\lambda(t)\|^2_2 < \min(F^\star + \varepsilon, \inf_{w\in(\partial U)\cap K} F(w))$ for any $t\geq 0$. Since $u^\lambda(t)$ is continuous, it implies that $u^\lambda(t)\in\overset{\circ}{U_\varepsilon}$ for any $t\geq 0$, which concludes the proof of \Cref{lemma:confined}.
\end{proof}

\subsection{Strict Minimality}
\begin{restatable}{lemma}{minimality}\label{lemma:minimality}
    Under the same assumptions than \Cref{prop:convergence} with $w^\star=\lim_{t\to\infty}\tw^\circ(t)$, there exists a $\delta^\star>0$ such that for any $\delta\in(0,\delta^\star)$, there exists $\varepsilon>0$ and $\lambda^\star>0$ such that for any $\lambda\in(0,\lambda^\star)$
        \begin{equation}
        \inf_{\partial B(w^\star, \delta)} F_{\lambda}(w) > F^\star + \lambda\frac{\|w^\star\|^2}{2} + \lambda \varepsilon.
    \end{equation}
\end{restatable}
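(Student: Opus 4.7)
The plan is to split $\partial B(w^\star,\delta)$ into a region far from the manifold $\cM$ and a region close to it, and to lower bound $F_\lambda$ separately on each. Points far from $\cM$ will be controlled by the Morse--Bott quadratic growth of $F$, while points close to $\cM$ will be controlled by the strict local minimality of the squared Euclidean norm on $\cM$ at $w^\star$, transported from $\cM$ to its normal neighbourhood via the Euclidean projection.

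First I would fix $\delta^\star>0$ small enough that on $\overline{B(w^\star,2\delta^\star)}$: (i) the quadratic growth $F(w)-F^\star \geq \tfrac{\eta}{4}d(w,\cM)^2$ holds and $F(w)\geq F^\star$ (by \Cref{ass:manifold} and \Cref{lemma:minimal}); (ii) the Euclidean projection $\Pi_\cM$ onto $\cM$ is well defined and continuous (standard tubular neighbourhood); and (iii) $w^\star$ is the unique minimiser of $\|\cdot\|^2$ on $\cM \cap \overline{B(w^\star,2\delta^\star)}$, which is possible by the strict local minimum hypothesis.

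Now fix $\delta\in(0,\delta^\star)$, introduce a tuning parameter $c\in(0,1)$, and consider any $w\in\partial B(w^\star,\delta)$. In the case $d(w,\cM)\geq c\delta$, the quadratic growth immediately yields
\[
F_\lambda(w) \;\geq\; F^\star + \tfrac{\eta c^2 \delta^2}{4}.
\]
In the case $d(w,\cM)<c\delta$, set $w_\cM = \Pi_\cM(w)$. The triangle inequality places $w_\cM$ in the compact annulus $\cM \cap (\overline{B(w^\star,(1+c)\delta)}\setminus B(w^\star,(1-c)\delta))$, which excludes $w^\star$, so by (iii) and compactness there exists $\varepsilon_0=\varepsilon_0(\delta,c)>0$ such that $\|w_\cM\|^2 \geq \|w^\star\|^2 + 2\varepsilon_0$. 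Expanding $\|w\|^2 = \|w_\cM\|^2 + 2\langle w_\cM, w-w_\cM\rangle + \|w-w_\cM\|^2$ and using $\|w-w_\cM\|< c\delta$ together with $\|w_\cM\|\leq \|w^\star\|+(1+c)\delta$ gives
\[
\|w\|^2 \;\geq\; \|w^\star\|^2 + 2\varepsilon_0 - 2c\delta\bigl(\|w^\star\|+(1+c)\delta\bigr).
\]
Shrinking $c$ so that the last term is at most $\varepsilon_0$, and using $F(w)\geq F^\star$, I would conclude $F_\lambda(w) \geq F^\star + \tfrac{\lambda}{2}\|w^\star\|^2 + \tfrac{\lambda}{2}\varepsilon_0$.

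To close the argument I would set $\varepsilon := \varepsilon_0/4$ and pick $\lambda^\star$ small enough that $\lambda\bigl(\tfrac{\|w^\star\|^2}{2} + \varepsilon\bigr) < \tfrac{\eta c^2\delta^2}{4}$ for every $\lambda \leq \lambda^\star$; this handles the far-from-$\cM$ case uniformly and the strict inequality is inherited by the infimum over the compact sphere $\partial B(w^\star,\delta)$. The main obstacle is the ordered tuning of $c$, $\varepsilon$ and $\lambda^\star$: $\delta$ must be fixed first, then compactness and strict minimality provide $\varepsilon_0(\delta,c)$, then $c$ must be shrunk so the normal perturbation does not absorb $\varepsilon_0$, and only then may $\lambda^\star$ be chosen so that the quadratic gap $\eta c^2\delta^2/4$ dominates the linear-in-$\lambda$ threshold. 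One must also verify that $\varepsilon_0(\delta,c)$ does not collapse when $c$ is shrunk for fixed $\delta$, which follows because the annulus $\cM\cap(\overline{B(w^\star,(1+c)\delta)}\setminus B(w^\star,(1-c)\delta))$ contains the sphere $\cM\cap\partial B(w^\star,\delta)$ for every $c\in(0,1)$, so $\varepsilon_0$ can be taken independent of $c$ after this observation.
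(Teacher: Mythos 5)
Your proposal is correct and follows essentially the same route as the paper's proof: decompose points of $\partial B(w^\star,\delta)$ into a normal component $v$ and a projection $u$ onto $\cM$, split into cases according to whether $d(w,\cM)$ exceeds a threshold, use the Morse--Bott quadratic growth in the far case and strict minimality plus compactness of an annulus in $\cM$ in the near case, and only then shrink $\lambda^\star$. Your resolution of the $c$-versus-$\varepsilon_0$ circularity (monotonicity of the annulus infimum in $c$) is exactly the role played in the paper by fixing the annulus $\{u\in\cM:\ \delta/2\le\|u-w^\star\|\le 2\delta\}$ before choosing the case-split threshold.
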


\begin{proof}
Let $\delta^\star>0$ be such that for any $w\in\cM\cap B(w^\star,2\delta^\star)$, $w\neq w^\star \implies \|w\|^2>\|w^\star\|^2$ and such that \Cref{eq:QG} holds. Now let $\delta\in(0,\delta^\star)$. We now fix $\lambda^\star>0$ arbitrarily small, choose $\lambda\in(0,\lambda^\star)$ and define
\begin{equation*}
    \varepsilon = \frac{1}{4} \inf_{
    \substack{u\in\cM\\
    \frac{\delta}{2}\leq\|u-w^\star\|\leq 2\delta}} \|u\|_2^2 - \|w^\star\|_2^2.
\end{equation*}
By strict minimality, compactness and continuity, $\varepsilon>0$.

\medskip

    Let now $w\in \partial B(w^\star, \delta)$. We can decompose $w$ as $w = u + v$, where $u\in\argmin_{w'\in \cM} \|w-w'\|$. Necessarily, $\|v\|=d(w,\cM)\leq \delta$ and $\|w-w^\star\| = \delta$. In particular, we also have $2\delta\geq\|u-w^\star\| \geq \delta - \|v\|$.
    From there, using the quadratic growth property (Equation~\ref{eq:QG}):
\begin{align*}
    F_{\lambda}(w) &\geq F(w) + \frac{\lambda}{2}(\|u\|-\|v\|)^2\\
    &\geq F^\star + \frac{\eta}{4}d(w,\cM)^2 +\frac{\lambda}{2}(\|u\|^2-2\|u\|\|v\|)\\
    &\geq F_{\lambda}(w^\star) + \frac{\eta}{4}d(w,\cM)^2 -\lambda (\|w^\star\|+2\delta) \|v\| + \frac{\lambda}{2}(\|u\|^2-\|w^\star\|^2).
\end{align*}
Let $c(\varepsilon,\delta) = \min(\frac{\delta}{2}; \frac{\varepsilon}{\|w^\star\|+2\delta})>0$. There are two cases. 

1) Either $\|v\|\leq c(\varepsilon,\delta)$, in which case $2\delta\geq\|u-w^\star\| \geq \frac{\delta}{2}$, so that by definition of $\varepsilon$
\begin{align*}
     F_{\lambda}(w) &\geq F_{\lambda}(w^\star) -\lambda (\|w^\star\|+2\delta) \|v\| + \frac{\lambda}{2}(\|u\|^2-\|w^\star\|^2)\\
     & \geq  F_{\lambda}(w^\star) - \lambda (\|w^\star\|+2\delta)c(\varepsilon,\delta)   + \frac{\lambda}{2} \cdot 4\varepsilon\\
     &\geq  F_{\lambda}(w^\star) + \lambda \varepsilon.
\end{align*}
2) Or $\|v\|\geq c(\varepsilon,\delta)$, in which case we simply have, also using that $\|v\|\leq \delta$:
\begin{align*}
     F_{\lambda}(w) &\geq F_{\lambda}(w^\star) + \frac{\eta}{4} \|v\|^2 -\lambda (\|w^\star\|+2\delta) \|v\|\\
     & \geq F_{\lambda}(w^\star) + \frac{\eta}{4} c(\varepsilon,\delta)^2 - \lambda (\|w^\star\|+2\delta) \delta. 
\end{align*}
In particular, choosing $\lambda^\star$ small enough -- depending on $\eta, \varepsilon$ and $\delta$ -- we have for any $\lambda\leq \lambda^\star$ that
\begin{align*}
    \frac{\eta}{4} c(\varepsilon,\delta)^2 - \lambda (\|w^\star\|+2\delta) \delta \geq \lambda \varepsilon.
\end{align*}
So that in both cases, $F_{\lambda}(w) > F^\star + \lambda\frac{\|w^\star\|^2}{2} + \lambda \varepsilon$.
\end{proof}

\clearpage
\section{ Applications }\label{app:applications}
In this section, we provide additional details to the examples discussed in \Cref{sec:expe}, and specify how our theoretical results can be applied in various settings.

\paragraph{Linear regression.} We consider $F(w) = \|Xw-y\|_2^2$ with $X \in \R^{n \times d}$ and $n \leq d$; assume for simplicity that $X$ is full rank. In this setting, the dynamics can be computed explicitely to illustrate our result.

Denote the solution of minimal $\ell_2$ norm with $w^\star = X^{+} y$, where $X^+$ is the Moore-Penrose pseudoinverse of $X$. The problem is convex and the critical set of $F$ is the affine subspace $\cM = w^\star+ \ker(X)$, which is a manifold: \cref{ass:manifold} is satisfied. 
% It is obviously a manifold, and therefore $F$ satisfies \Cref{ass:manifold} (the Morse-Bott constant is the smallest nonzero eigenvalue of $X^\top X$).

% Decompose the initialization $w_0$ as $w_0 = w_0^{\parallel}+ w_0^\perp$, where $w_0^{\parallel}$ belongs to the span of $X^\top X$ and $w_0^{\perp}$ to its nullspace.

Consider the singular value decomposition $X = U\Sigma V^\top$ where $U \in \R^{n \times d}, V \in \R^{d\times d}$ are orthogonal and $\Sigma = \mathrm{diag}(\sigma_1,\dots,\sigma_d)$ with $\sigma_{n+1} = \dots = \sigma_d = 0$. We make the change of coordinates $z = V^\top w$, and notice that in this basis the minimum norm solution $z^\star = V^\top w^\star$ is of the form $z^\star = (z^\star_1,\dots z^\star_n,0,\dots,0)$. Then, we can compute the trajectory of the gradient flow on $F_\lambda$ initialized at $z(0) = V^\top w_0$:
\begin{itemize}
    \item for $1 \leq i \leq n$, 
    \begin{equation}\label{eq:ls_fast}  
    z_i^\lambda(t) =  z_i^{\lambda,\infty} + e^{-(\sigma_i^2 + \lambda) t} \left( z_i(0) - z_i^{\lambda,\infty} \right) \quad \text{with} \quad z_i^{\lambda,\infty} = \frac{\sigma_i^2}{\sigma_i^2 + \lambda} z^\star_i,
    \end{equation}
    \item for $(n+1)\leq i \leq d$,
    \begin{equation}\label{eq:ls_slow}  
    z_i^\lambda(t) =e^{-\lambda t} z_i(0).
    \end{equation}
\end{itemize}
Eq. \eqref{eq:ls_fast} describes the dynamics along the directions \textbf{orthogonal to $\cM$}, and Eq. \eqref{eq:ls_slow} along those \textbf{parallel to~$\cM$}.
When $\lambda \rightarrow 0$, the first is much faster than the second. In the first phase, the iterates converge to $(z_1^{\lambda,\infty},\dots z_n^{\lambda, \infty}, z_{n+1}(0),\dots,z_{d}(0)) \stackrel{\lambda \rightarrow 0}{\approx}(z_1^{\star},\dots z_n^{\star}, z_{n+1}(0),\dots,z_{d}(0))$; this is the limit of unregularised gradient flow $z^{\rm GF}$ (which is also here the projection of the initial point onto~$\cM$). In the second phase, the iterates converge slowly towards the mimimum norm solution $(z_1^\star,\dots z_n^\star,0,\dots,0)$.

\paragraph{Diagonal linear networks (DLNs).} DLNs serve as a toy example to understand the influence of the architecture on the training dynamics of neural networks \citep{pesme2024deep}. The corresponding optimization problem writes
\[
    \min_{ (w_1,w_2) \in \R^{2d}}  \| X ( w_1 \odot w_2 ) - y \|_2^2,
\]
where $\odot$ denotes the componentwise product, and $X \in \R^{n\times d}$ is the feature matrix with $n\leq d$, which we assume to be full rank. It is usually convenient to perform a rotation of the coordinates and rewrite the problem as
\[
  \min_{ (u,v) \in \R^{2d}} F(u,v)=  \| X ( u^2 - v^2 ) - y \|_2^2,
\]
where $u^2,v^2$ denotes the componentwise square.
The critical set of $F$ is composed of the couples $(u,v)$ satisfying
\begin{equation}
\begin{split}
    u  \odot \left[ X^\top(X(u^2 - v^2)-y)  \right]&= 0,\\
    v  \odot \left[ X^\top(X(u^2 - v^2)-y)  \right]&= 0
\end{split}
\end{equation}
This set has singularities for points who have null coordinates; if we exclude those problematic points, we can show that it is a manifold. 
\begin{proposition}
The set $\cM^* = \nabla F^{-1}(0) \cap (\R^*)^{2d}$ is a smooth manifold of dimension $2d -n$.
\end{proposition}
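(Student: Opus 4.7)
The plan is to first simplify the defining equations of $\cM^\star$ on the open set $(\R^\star)^{2d}$, and then apply the regular value theorem.

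First I would observe that on $(\R^\star)^{2d}$ the Hadamard product $u \odot v$ has all nonzero coordinates, so the critical point condition $u \odot v \odot [X^\top(X(u^2-v^2)-y)] = 0$ is equivalent to
\begin{equation*}
X^\top(X(u^2-v^2)-y) = 0.
\end{equation*}
Since $X \in \R^{n \times d}$ is full rank with $n \leq d$, the transpose $X^\top \colon \R^n \to \R^d$ is injective, so this in turn is equivalent to the interpolation equation $X(u^2-v^2) = y$. Hence
\begin{equation*}
\cM^\star = \{(u,v) \in (\R^\star)^{2d} \,:\, X(u^2-v^2) = y\}.
\end{equation*}

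Next I would introduce $G \colon (\R^\star)^{2d} \to \R^n$ defined by $G(u,v) = X(u^2-v^2) - y$, so that $\cM^\star = G^{-1}(0)$. The map $G$ is smooth, and its differential at any $(u,v)$ is
\begin{equation*}
dG_{(u,v)}(h,k) = 2X(u \odot h) - 2X(v \odot k) = 2\,[\,X\,\mathrm{diag}(u) \ \big|\ -X\,\mathrm{diag}(v)\,]\begin{pmatrix} h \\ k \end{pmatrix}.
\end{equation*}
The key computation is to check that this linear map $\R^{2d} \to \R^n$ is surjective at every $(u,v) \in \cM^\star$. Since $(u,v) \in (\R^\star)^{2d}$, the diagonal matrix $\mathrm{diag}(u)$ is invertible, so $X \,\mathrm{diag}(u)$ has the same rank as $X$, namely $n$. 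In particular $X\,\mathrm{diag}(u) \colon \R^d \to \R^n$ is already surjective, which is enough to conclude that $dG_{(u,v)}$ is surjective.

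Finally, applying the regular value theorem (or equivalently the submersion/preimage theorem, e.g. as in Boumal's textbook referenced earlier), $\cM^\star = G^{-1}(0)$ is a smooth submanifold of $(\R^\star)^{2d}$ of dimension $2d - n$. The proof is almost entirely bookkeeping; the only subtle point is the simplification of the critical point equation, which crucially relies on removing the degenerate zero coordinates (justifying the introduction of $\cM^\star$ rather than the whole critical set) and on the full rank assumption on $X$.
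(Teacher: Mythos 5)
Your proof is correct and follows essentially the same strategy as the paper's: exhibit a local defining function for $\cM^*$ on the open set $(\R^*)^{2d}$ and show that its differential has rank $n$ by exploiting the invertibility of $\mathrm{diag}(u)$ and $\mathrm{diag}(v)$. The one (welcome) refinement is that you first use injectivity of $X^\top$ to reduce the stationarity condition to $G(u,v)=X(u^2-v^2)-y=0$ with $G$ valued in $\R^n$, so that $dG$ is genuinely surjective and the regular value theorem applies verbatim, whereas the paper works with $H(u,v)=X^\top(X(u^2-v^2)-y)$ valued in $\R^d$, whose differential only has rank $n<d$ and therefore implicitly relies on a constant-rank rather than a regular-value argument.
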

\begin{proof}
    Let $(\bar u, \bar v) \in \cM^*$. Denote $W$ a neighborhood of $(\bar u, \bar v)$ such that $U \subset (\R^*)^{2d}$.  The function $H:\R^{2d} \rightarrow R^d$ with $H(u,v) = X^\top(X(u^2-v^2) - y)$ is a local defining function for $\cM^*$, in the sense that for $(u,v) \in W$, we have $(u,v) \in \cM^* \iff H(u,v) = 0$.
    
    The differential of $H$ at $(\bar u,\bar v)$ is the linear map satisfying for $(\Delta u, \Delta v) \in \R^{2d}$
    \[
        DH(\bar u, \bar v)[\Delta u, \Delta v] = 2X^\top X ( \bar u \odot \Delta u - \bar v \odot \Delta v).
    \]
    It is clear that, since all coordinates of $(\bar u, \bar v)$ are nonzero, the map $(\Delta u, \Delta v) \mapsto \bar u \odot \Delta u - \bar v \odot \Delta v$ is a surjection on $\R^d$, and therefore $\text{rank}(DH(\bar u, \bar v)) = \text{rank}(X^\top X) = n$. This proves that $\cM^*$ is a manifold of dimension $2d-n$ \citep[§3.2]{boumal2023introduction}.
\end{proof}

Because of the singular points in $\cM$, the function $F$ does not satisfy \cref{ass:manifold} globally. However, our results can still be applied \textit{locally}: see the paragraph below for details.

Noting that, for a vector $w \in \R^d$, we have
\[
    \|w\|_1 = \min_{ 
    u,v \in \R^d } \|u\|_2^2 + \|v\|_2^2 \;\; \text{subject to } \; u^2 - v^2 = w,
\]
we conclude that in the second, slow phase of the dynamics, the Riemannian gradient flow which minimizes $\|u\|^2 + \|v\|^2$ on $\cM^*$ tends to drift towards solutions of low $\ell_1$ norm.

\paragraph{Low-rank matrix sensing/completion.} Let $\A:\mathbb{S}^{n}\rightarrow \R^m$ be a linear map on symmetric matrices with $m \leq n^2$ and $y \in \R^m$. For a given target rank $r \leq n$, the matrix sensing problem is
\begin{equation}\label{eq:matrix_sensing}
    \min_{W \in \R^{n\times r}} F(W) = \| \A(WW^\top) - y \|_2^2
\end{equation}
A typical example is symmetric matrix completion, where the goal is to recover an unknown matrix $M^* \in \R^{n \times n}$ from a subset of observed entries with coefficients in $\Omega \in \{1\dots n\}^2$: the objective function writes $F(W) = \sum_{(i,j) \in \Omega}\left( (WW^\top)_{ij} - M_{ij}^*\right)^2$. Note that the asymmetric case presented in \Cref{sec:expe}, \Cref{eq:matrix_completion}, can also be written as a symmetric matrix completion problem, by setting 
\[
    W = \begin{bmatrix}
        U \\
        V
    \end{bmatrix} \in \R^{(n+m)\times r}, 
\]
and choosing a new mask $\Omega$ that selects only the off-diagonal blocks of $WW^\top$. 

Usually, one looks for a low-rank solution to Problem \eqref{eq:matrix_sensing}, by setting $r$ to a small value. Here, we choose to rather study the \textbf{overparameterised} setting where $r = n$. Our results imply that, even though we do not explicitly impose a low rank structure, the gradient flow trajectories $W^\lambda$ are driven towards a low-rank solution in the second phase of the dynamics.

Similarly to the example of diagonal linear networks, we show that, in the overparameterised setting, the critical set of $F$ is a manifold if we exclude singular matrices.

\begin{proposition}

Let $F$ be the matrix sensing function defined in \eqref{eq:matrix_sensing}, and denote $\GLN$ the set of invertible matrices of size $n\times n$. If $r = n$, the set $\cM^* = \nabla F^{-1}(0) \cap \GLN $ is a smooth manifold.

\end{proposition}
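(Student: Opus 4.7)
The strategy follows the diagonal linear network argument: use the preimage/submersion theorem with a carefully chosen local defining function on the open set $\GLN$, taking values in a suitable subspace of $\mathbb{S}^n$.

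First I would identify the defining function. Writing $R(M)=\|\A M-y\|_2^2$ on $\mathbb{S}^n$ and applying the chain rule to $F=R\circ (W\mapsto WW^\top)$, one obtains
\[
    \nabla F(W) \;=\; 4\,\A^{\ast}\!\bigl(\A(WW^\top)-y\bigr)\,W.
\]
Since $W\in\GLN$ is invertible, $\nabla F(W)=0$ is equivalent to
\[
    H(W) \;\coloneqq\; \A^{\ast}\!\bigl(\A(WW^\top)-y\bigr) \;=\; 0,
\]
so $H$ serves as a defining function for $\cM^*$ on the open set $\GLN\subseteq\R^{n\times n}$.

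Second, I would verify the submersion condition. The crucial point is that $H$ takes values not in the full space $\mathbb{S}^n$ but in the subspace $E \coloneqq \mathrm{Im}(\A^{\ast})\subseteq \mathbb{S}^n$. Its differential is
\[
    DH(W)[\Delta W]\;=\;\A^{\ast}\A\bigl(\Delta W\,W^\top+W\,\Delta W^\top\bigr).
\]
For $W\in\GLN$, the inner map $\Delta W\mapsto \Delta W\,W^\top+W\,\Delta W^\top$ is surjective onto $\mathbb{S}^n$: for any $S\in\mathbb{S}^n$, the choice $\Delta W=\tfrac{1}{2} S\,W^{-\top}$ reproduces $S$ using symmetry of $S$. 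Next, $\A^{\ast}\A$ is self-adjoint on $(\mathbb{S}^n,\langle\cdot,\cdot\rangle_F)$, hence its image equals $\ker(\A^{\ast}\A)^\perp=\ker(\A)^\perp=\mathrm{Im}(\A^{\ast})=E$. Composing the two, $DH(W):\R^{n\times n}\to E$ is surjective at every $W\in\GLN$.

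By the submersion theorem applied at each point of $\cM^*$, the set $\cM^*=H^{-1}(0)\cap\GLN$ is a smooth embedded submanifold of $\GLN$, and therefore of $\R^{n\times n}$, of dimension $n^2-\dim\mathrm{Im}(\A^{\ast})$. The only real obstacle is the bookkeeping of the codomain: viewed as a map into the whole of $\mathbb{S}^n$, $H$ is \emph{not} a submersion whenever $\A$ has a nontrivial kernel, so one must either restrict the codomain to $\mathrm{Im}(\A^{\ast})$ as above, or invoke the constant rank theorem and check that $\mathrm{rank}\, DH(W)=\dim\mathrm{Im}(\A^{\ast})$ is constant over $\GLN$. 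Invertibility of $W$ is what ensures surjectivity of $\Delta W\mapsto \Delta W\,W^\top+W\,\Delta W^\top$, which explains why singular matrices must be excluded from $\cM^*$.
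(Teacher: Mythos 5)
Your proof is correct and follows essentially the same route as the paper: the same local defining function $H(W)=\A^{\ast}(\A(WW^\top)-y)$ on $\GLN$, the same differential, and the same surjectivity argument via $\Delta W=\tfrac12 S\,W^{-\top}$, from which the paper likewise concludes that $\mathrm{rank}\,DH(W)=\mathrm{rank}(\A^{\ast}\A)$ is constant on $\cM^*$. Your explicit remark about restricting the codomain to $\mathrm{Im}(\A^{\ast})$ (or invoking the constant rank theorem) is a slightly more careful rendering of the step the paper handles by asserting constancy of the rank.
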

\begin{proof}
    The gradient of $F$ is
    \[\nabla F(W) = 4\A^* \left(\A(WW^\top) -y \right) W, \quad \forall W \in \R^{n\times n},\]
    where $\A^* : \R^m \to \mathbb{S}^n$ is the adjoint of $\A$.
    
    Let $\overline W \in \cM^*$, and let $\mathcal{U}$ a neighborhood of $\overline{W}$ such that $\mathcal{U} \subset \GLN$. 
    For $W \in \mathcal{U}$, $W$ is invertible and we have $W \in \cM^*$ if and only if $ \A^* (\A(WW^\top) - y) = 0 $. 
    The function $H(W) = \A^* (\A(WW^\top) - y)$ is therefore a local defining function for $\cM^*$. Its differential at $\overline W$ satisfies for $U \in \R^{n\times n}$,
    \[
        DH(\overline{W})[U] = \A^* \A ( \overline{W} U^\top + U \overline{W}^\top ).
    \]  
    Since $\overline{W}$ is invertible, the map $\phi : U \mapsto \overline{W}U^\top + U \overline{W}^\top$ is a surjection from $\R^{n\times n}$ onto $\mathbb{S}^n$: indeed, note that for any $Z \in \mathbb{S}^n$, we have $\phi\left(U \right) = Z$ with $U = \frac 12 Z(\overline{W}^{-1})^\top$. Therefore, the rank of $DH(\overline{W})$ is equal to the rank of $\A^* \A$ for any $\overline{W} \in \cM^*$, which proves that $\cM^*$ is a smooth manifold.
\end{proof}

\paragraph{Dealing with singularities.} In the last two examples, the set $\nabla F^{-1}(0)$ has singular points, and so \Cref{ass:manifold} does not hold globally. However, we showed that it holds on ``most of the space'', as there exists a negligible set $\mathcal{S}$ such that $ \cM^* = \nabla F^{-1}(0) \setminus \mathcal{S}$ is a smooth manifold.

Our results can still be applied locally, assuming that the unregularised gradient flow $\wgf$ converges to a point $\wgf_{\infty} \in \cM^*$.
Indeed, in that case there exists a neighborhood $\mathcal{U}$ of $\wgf_{\infty}$ such that $\nabla F^{-1}(0) \cap \mathcal{U}$ is included in $\cM^*$. Then, the Morse-Bott property holds in this neighborhood.

Consider then the Riemannian gradient flow $w^\circ$ of the $\ell_2$ norm on $\cM^*$ initialized at $\wgf_\infty$. For any time horizon $T$ such that the trajectory of $w^\circ$ stays in $\mathcal{U}$ on the interval $[0,T]$, we can restrict our analysis to this local region, where our assumptions are satisfied.
We can then invoke \Cref{prop:grokking} to conclude that $\tilde{w}^\lambda$ converges to $w^\circ$ uniformly on intervals of the form $[\epsilon, T]$.

However, a key limitation arises when analyzing the long-time behavior: the results characterizing the limit points ( \Cref{prop:convergence}) do not apply if $w^\circ$ converges to a singular point outside $\cM^*$. This situation can occur, as singular points might correspond to points that minimize the $\ell_2$ norm on $\cM^*$ (e.g., sparse vectors for diagonal networks, or low-rank matrices for matrix sensing). Establishing convergence of $\wl$ to such singular points remains an open and challenging problem, which we leave for future work. 

In summary, \textbf{our results capture the grokking dynamics near nonsingular points in $\cM^*$, but do not yet account for potential convergence toward singular points, which represents an important open challenge.}

\clearpage

\section{Additional experiments}\label{app:addexpe}
\subsection{Additional Experimental Details}\label{app:expe_details}
In all our figures and to align with the continuous-time analysis, training iterations refers to the rescaled "training time" $ t_k = \gamma k $, where $k$ is the number of gradient steps and $\gamma$ the gradient descent stepsize. We run gradient descent $10^7$ iterations for \Cref{fig:MM} and $10^6$ iterations for \Cref{fig:relu}.

\subsection{Diagonal linear networks.}\label{app:DLN}

\begin{figure}[h!]
    \centering
    \includegraphics[width=\textwidth]{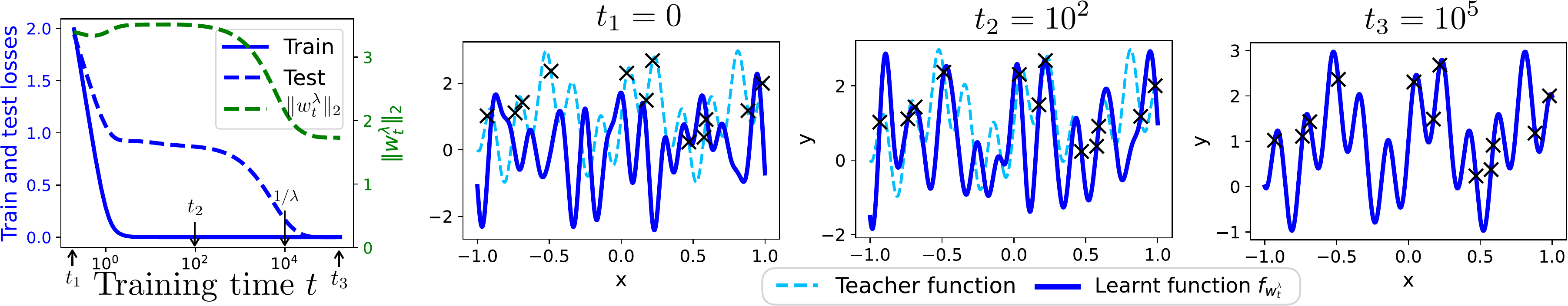}
    % \vspace{-4em}
       \caption{
Gradient flow with small weight decay $\lambda$ on a two-layer diagonal linear network. Regression dataset. 
\textit{(Left)}: Empirical observation of the grokking behaviour. The training loss rapidly drops to zero, while the test loss remains flat for an extended period before eventually decreasing. This transition coincides with a slow but steady decrease in the $\ell_2$-norm of the weights.  
\textit{(Three plots on the right)}: Visualisation of the model predictions throughout training. The dotted light blue curve represents the teacher function, and the crosses indicate the training data. Snapshots of the model’s prediction function at various training times (shown in increasing colour intensity) illustrate how generalisation is affected before and after the transition at $t \approx 1 / \lambda$.
}
    \label{fig:diag}
\end{figure}

\textit{Experimental setup (\Cref{fig:diag}).}
We train a two-layer diagonal linear network of the form $f_w(x) = \langle u \odot v, \varphi(x) \rangle$, where $w = (u, v) \in \mathbb{R}^{2d}$ and $\odot$ denotes element-wise multiplication, on a 1D toy dataset. The input $x \in \mathbb{R}$ is mapped to a high-dimensional feature space via the feature map
$\varphi(x) = \left[1, \cos\left(\tfrac{\pi x}{2}\right), \dots, \cos\left(\tfrac{\pi d_f x}{2}\right), \sin\left(\tfrac{\pi x}{2}\right), \dots, \sin\left(\tfrac{\pi d_f x}{2}\right)\right],$
with $d_f = 30$. The teacher function is a sparse Fourier series $f(x) = 1 + \cos\left(\tfrac{6 \pi x}{2}\right) + \sin\left(\tfrac{21 \pi x}{2}\right)$ 
and is shown as a dotted light blue curve in \Cref{fig:diag}. The training dataset consists of $n = 12$ input-output pairs $(x_i, y_i)$, where $x_i$ are sampled uniformly in $[-1, 1]$ and $y_i = f(x_i)$. These training points are shown as crosses in \Cref{fig:diag}. We optimise the squared loss
$F(w) = \tfrac{1}{2n} \sum_{i=1}^n \left(y_i - f_w(x_i)\right)^2$
using gradient descent with weight decay $\lambda = 10^{-4}$. %The step size is set to a small value $\gamma$ and to match the continuous-time analysis, we plot the loss against ``training time'' $t_k = \gamma k$, where $k$ is the number of gradient steps. 
Finally, the initial weights are sampled from a centered Gaussian of variance $0.1$. 

\textit{Explaining the observed grokking phenomenon.} 
At time $ t_1 = 0 $, the weights are randomly initialised and the training loss is high. By $ t_2 = 10^2 $, the training loss has dropped to nearly zero, and the iterates closely approximate the solution that would be obtained by unregularised gradient flow. This solution is fully characterised by the implicit regularisation result of~\citep{woodworth2020kernel}, and it does not have a low norm.\footnote{One could also reach the solution observed at time $ t_3 = 10^5 $ without using weight decay by employing a much smaller initialisation scale~\citep{woodworth2020kernel}, but at the cost of longer training time.} Subsequently, around time $ t = 1 / \lambda $, the weight norms begin to decrease, and by $ t_3 \approx 10^5 $, they converge to the minimum-norm solution $(u^\star, v^\star) = \arg \min_{F(u, v) = 0} \Vert u \Vert_2^2 + \Vert v \Vert_2^2$.
A straightforward calculation shows that the elementwise product $ \beta^\star \coloneqq u^\star \odot v^\star $ solves the problem
$\arg \min_{\langle \beta, x_i \rangle = y_i \forall i} \Vert \beta \Vert_1$.
This is an $ \ell_1 $-minimisation problem, which (under RIP conditions) is known to recover the sparsest solution~\citep{candes2008restricted}, explaining the zero test loss after the grokking phenomenon.

% \subsection{Linear regression and visualisation of $\lambda \to 0$}

% % In this last plot we 

% \begin{figure}[h!]
%     \centering
%     \includegraphics[width=.5\textwidth]{Figures/lambda.pdf}
% \end{figure}

\newpage

\section{Heuristic analysis on how small $\lambda$ needs to be for grokking to emerge}\label{app:sec:heuristic_analysis}

Note that our theoretical results are derived in the asymptotic regime $\lambda \to 0$, since this setting allows for a tractable and general analysis. Extending the theory to obtain explicit results for a fixed $\lambda > 0$ is considerably more challenging. That said, we can offer some intuition regarding how small $\lambda$ needs to be for grokking to emerge. 

Grokking depends on a clear separation between two phases: an initial phase where the iterates converge and stagnate at the solution of the unregularised gradient flow, and a second phase driven by weight decay, during which test performance improves. For grokking to be observable, the regularised gradient flow should approach the unregularised limit before weight decay begins to significantly influence the dynamics.

To formalize this intuition, we can define two characteristic times: $t_{\rm GF}$, the convergence time of the unregularized gradient flow, measured as the time at which the gradient norm substantially decreases relative to its initial value; and a second time $t_{\rm WD}$ of order $1/\lambda$, associated with the onset of the regularization effects. When $\lambda$ is small enough that $t_{\rm GF}\ll 1/\lambda$, we expect to observe grokking-like behavior. Specifically, for some threshold $\varepsilon \ll 1$ (e.g., $\varepsilon=0.01$): let $t_{\rm GF}$ such that $\Vert \nabla F(w_{t_{\rm GF}})\Vert  \approx \varepsilon \Vert \nabla F(w_0)\Vert $. Now let $t_{\rm WD}$ denote the time when weight decay kicks in: i.e. when the magnitude of the unregularised gradient becomes comparable to the magnitude of the weight decay term: $\Vert \nabla F(w_{t_{\rm WD}})\Vert  \approx \lambda \Vert w_{t_{\rm WD}}\Vert $. Since at time $t_{\rm WD}$, the solution is close to the gradient flow solution $w^{\rm GF}$, we can consider $\Vert w_{t_{\rm WD}}\Vert  \approx \Vert w^{\rm GF}\Vert $. The condition for grokking to occur (i.e., a plateau in test loss followed by an improvement of the test loss) is thus that $t_{\rm GF} \ll t_{\rm WD}$. Translating this condition in terms of gradients, we obtain:
$\Vert \nabla F(w_{t_{\rm GF}})\Vert \gg \Vert \nabla F(w_{t_{\rm WD}})\Vert $, which, using the approximations above, implies:
$  \varepsilon \Vert \nabla F(w_0)\Vert \gg \lambda \Vert w^{GF}\Vert $. Simplifying further (absorbing $\varepsilon$ into a constant), we have the practical guideline: $\lambda \ll \frac{\Vert \nabla F(w_0)\Vert }{\Vert w^{\rm GF}\Vert }$. Hence, grokking occurs when the weight decay parameter~$\lambda$ is sufficiently small compared to the ratio between the initial gradient magnitude and the norm of the unregularised gradient flow solution.

% \newpage
% \section{Archives}
% \textbf{\color{red}To delete once all has been correctly incorporated into the paper}

% \input{Sections/archives.tex}

\end{document}